\theoremstyle{thmstyleone}%
\newtheorem{theorem}{Theorem}
\newtheorem{lemma}[theorem]{Lemma}%
\theoremstyle{thmstyletwo}%
\DeclareMathOperator{\E}{\mathbb{E}}
\DeclareMathOperator*{\Err}{Err}
\DeclareMathOperator{\tv}{TV}
\DeclareMathOperator{\kl}{D_{KL}}
\newcommand{\bth}{\boldsymbol{\theta}}
\begin{document}

\title[Overspecified MDA]{Overspecified Mixture Discriminant Analysis: Exponential Convergence, Statistical Guarantees, and Remote Sensing Applications}




\author[1,2]{\fnm{Arman} \sur{Bolatov}}\email{arman.bolatov@mbzuai.ac.ae}

\author[3]{\fnm{Alan} \sur{Legg}}\email{leggar01@pfw.edu}

\author[4]{\fnm{Igor} \sur{Melnykov}}\email{imelnyko@d.umn.edu}

\author[5]{\fnm{Amantay} \sur{Nurlanuly}}\email{amantay.nurlanuly@nu.edu.kz}

\author[5]{\fnm{Maxat} \sur{Tezekbayev}}\email{maxat.tezekbayev@alumni.nu.edu.kz}

\author*[2,3]{\fnm{Zhenisbek} \sur{Assylbekov}}\email{zassylbe@pfw.edu}

\affil[1]{\orgdiv{Machine Learning Department}, \orgname{Mohamed bin Zayed University of Artificial Intelligence}, \orgaddress{\street{Masdar City}, \city{Abu Dhabi}, \postcode{00000}, \country{UAE}}}

\affil[2]{\orgdiv{Remote Sensing Department}, \orgname{National Center of Space Research and Technology}, \orgaddress{\street{15 Shevchenko St.}, \city{Almaty}, \postcode{050010}, \country{Kazakhstan}}}

\affil[3]{\orgdiv{Department of Mathematical Sciences}, \orgname{Purdue University Fort Wayne}, \orgaddress{\street{2101 East Coliseum Boulevard}, \city{Fort Wayne}, \postcode{46805}, \state{IN}, \country{USA}}}

\affil[4]{\orgdiv{Department of Mathematics and Statistics}, \orgname{University of Minnesota Duluth}, \orgaddress{\street{1049 University Drive}, \city{Duluth}, \postcode{55812}, \state{MN}, \country{USA}}}

\affil[5]{\orgdiv{Department of Mathematics}, \orgname{Nazarbayev University}, \orgaddress{\street{53 Kabanbay Batyr ave.}, \city{Astana}, \postcode{010000}, \country{Kazakhstan}}}


\abstract{This study explores the classification error of Mixture Discriminant Analysis (MDA) in scenarios where the number of mixture components exceeds those present in the actual data distribution, a condition known as overspecification. We use a two-component Gaussian mixture model within each class to fit data generated from a single Gaussian, analyzing both the algorithmic convergence of the Expectation-Maximization (EM) algorithm and the statistical classification error. We demonstrate that, with suitable initialization, the EM algorithm converges exponentially fast to the Bayes risk at the population level. Further, we extend our results to finite samples, showing that the classification error converges to Bayes risk with a rate $n^{-1/2}$ under mild conditions on the initial parameter estimates and sample size. This work provides a rigorous theoretical framework for understanding the performance of overspecified MDA, which is often used empirically in complex data settings, such as image and text classification. To validate our theory, we conduct experiments on remote sensing datasets.}

\keywords{Discriminant Analysis, Gaussian Mixtures, Expectation-Maximization}

\maketitle

\section{Introduction}

Linear discriminant analysis (LDA), a fundamental classification approach, dates back to the works of  \cite{fisher1936use} and \cite{06a3d571-d8a5-3c28-8e78-7b1e47c0d070}. It assumes that in each class the features are generated from a multivariate Gaussian distribution with its own mean, and all classes share the same covariance matrix. For specifics, let us consider the simplest LDA setup, in which the classes are balanced and the class-conditional distributions are spherical $d$-variate Gaussians $\mathcal{N}(\pm\boldsymbol{\mu},\mathbf{I})$. The celebrated Neyman-Pearson lemma \citep{doi:10.1098/rsta.1933.0009} implies that the Bayes classifier in this case is
$\mathbf{x}\mapsto\mathrm{sgn}\left[\log\frac{\phi(\mathbf{x}-\boldsymbol{\mu})}{\phi(\mathbf{x}+\boldsymbol{\mu})}\right]$ and its misclassification error (Bayes risk) is 
$\Phi(-\|\boldsymbol{\mu}\|)$ (see Section 1.2.1 in \cite{Wainwright_2019}), where $\phi(\mathbf{x})$ is the probability density function of $\mathcal{N}(\mathbf{0},\mathbf{I})$ and $\Phi(x)$ is the cumulative distribution function of $\mathcal{N}(0,1)$. 

In practice, the true value of the parameter $\boldsymbol{\mu}$ is unknown, and must be estimated from a finite sample (of size $n$). Using standard  concentration inequalities, it is easy to verify that replacing $\boldsymbol{\mu}$ with its  maximum likelihood estimate $\hat{\boldsymbol{\mu}}$  we can get a finite-sample error bound for the LDA classifier in the form $\Phi(-\|\boldsymbol{\mu}\|)+{O}(\sqrt{d/n})$ 
with high probability over the choice of a training sample.

However, the main problem with LDA when applied to natural data—such as images or texts—is that such data typically consists of subpopulations. For example, consider the well-known MNIST dataset \citep{DBLP:journals/spm/Deng12}, which consists of handwritten digit examples, where the task is to recognize the digit from the image. If we take one class, say `three', there are examples of a typically written `3', as shown in the top row of the right part of Figure~\ref{fig:mnist}. These correspond to the major blue cluster in the center of the left part of Figure~\ref{fig:mnist}, which shows projections of images from the class `three' on a 2D plane with UMAP \citep{McInnes2018}. However, there are also examples of less typically written `3's that correspond to the other clusters.
\begin{figure}[htbp]
\centering
\includegraphics[width=340pt]{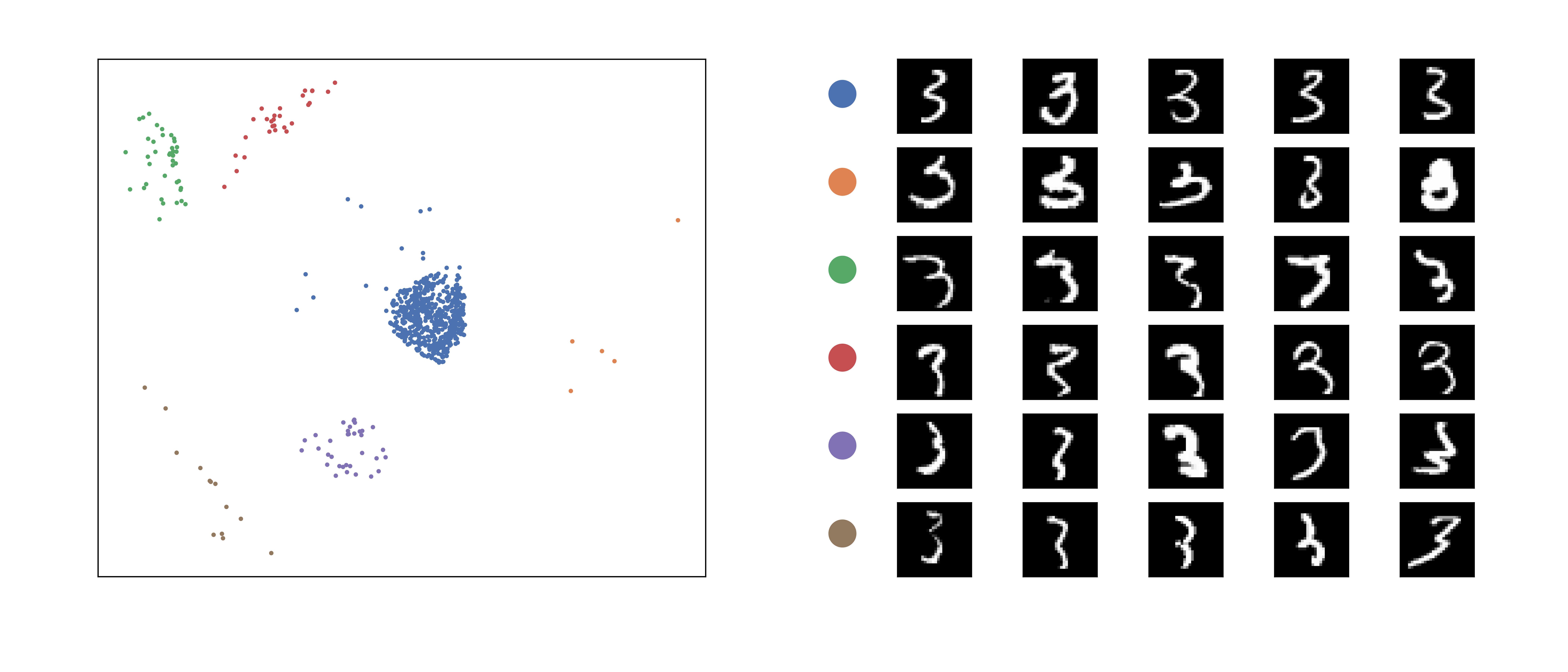}
\caption{Projections of MNIST \citep{DBLP:journals/spm/Deng12} images of `3' on a 2D plane with UMAP \citep{McInnes2018} (left) and examples of images from each cluster (right). Clustering was performed with HDBSCAN \citep{DBLP:conf/pakdd/CampelloMS13}.}
\label{fig:mnist}
\end{figure}
Thus, we see that the distribution even within one class might consist of several clusters, which a unimodal Gaussian distribution cannot model.

Fortunately, in this case we can replace the Gaussians in each class with mixtures of Gaussians \citep{https://doi.org/10.1111/j.2517-6161.1996.tb02073.x}. This modification of LDA was dubbed Mixture Discriminant Analysis (MDA) and is widely used due to its greater flexibility.\footnote{As of this writing, the work of \cite{https://doi.org/10.1111/j.2517-6161.1996.tb02073.x} has been cited more than 1000 times according to Google Scholar.} Despite this, to the best of our knowledge, all earlier studies on MDA were purely empirical offering no rigorous statistical or algorithmic guarantees for MDA. 

An especially important case is when the true number of mixture components is \emph{unknown}, and in practice, this is usually the case. One way to alleviate this problem is to estimate the required number of components before fitting a mixture \citep{6360018}. An alternative is  \emph{over-specification} \citep{4e9d0d9d-0f59-372f-8783-3bcaf4dfc2ec}, that is, when during training, obviously more components are used than actually in the data distribution, in the hope that the learning algorithm---usually expectation-maximization \cite[EM]{https://doi.org/10.1111/j.2517-6161.1977.tb01600.x}---will fit a mixture in such a way that it will not differ noticeably from a true mixture that has fewer components. 

In this work, we consider the  scenario of over-specification, when for each class the data is generated from one Gaussian, but we fit an \emph{unbalanced} Gaussian mixture of two components by the EM algorithm. We are interested in both the algorithmic convergence of the EM and the statistical classification error. To distinguish between these, we initially consider the problem at the population level by assuming access to an infinitely large sample. On this front, we demonstrate that the classification error converges to the Bayes risk exponentially fast. Namely, under suitable initialization the EM algorithm takes $O(\log(1/\epsilon))$ steps to reduce the classification error to $\Phi(-\|\boldsymbol{\mu}\|)+\epsilon$. Encouraged by this result, we move on to study statistical convergence when learning an MDA classifier from a finite sample. Here we are able to show that the EM algorithm produces an MDA classifier whose classification error is $\Phi(-\|\boldsymbol{\mu}\|)+O(\sqrt{d/n})$ with high probability over the choice of a training sample of size $n$, and for this it suffices to take $O(\log(n/d))$ steps. 

Recently, \citet{legg2025convergence} provided MDA error bounds for the \emph{balanced} overspecified two-component Gaussian mixture: they showed that the EM algorithm requires $\tilde{O}(\sqrt{n/d})$ iterations to learn an MDA classifier with error rate $\Phi(-\lVert\mu\rVert)+O\!\left(\sqrt{d/n}\right)$. Our contribution addresses the \emph{unbalanced} two-component case. We establish exponential population contraction of the misclassification error to the Bayes risk and achieve the same error rate, $\Phi(-\lVert\mu\rVert)+O\!\left(\sqrt{d/n}\right)$, but in only $O(\log(n/d))$ EM iterations. In what follows, we state our results formally.

\paragraph{Notation.}
Bold-faced lowercase letters ($\mathbf{x}$) denote vectors in $\mathbb{R}^d$, bold-faced uppercase letters ($\mathbf{A}$, $\mathbf{X}$) denote matrices and random vectors, 
regular lowercase letters ($x$) denote scalars. 
The Euclidean norm is denoted by $\|\mathbf{x}\|:=\sqrt{\mathbf{x}^\top\mathbf{x}}$. 

`P.d.f.' stands for `probability density function', and `c.d.f.' stands for `cumulative distribution function'. 
The p.d.f. of $\mathbf{Z}\sim\mathcal{N}(\mathbf{0},\mathbf{I})$, where $\mathbf{I}$ is a $d\times d$ identity matrix, is denoted by $\phi(\mathbf{z})$. The p.d.f. and c.d.f. of $Z\sim\mathcal{N}(0,1)$ are denoted by $\phi(z)$ and $\Phi(z)$ respectively. 


Given $f:\,\mathbb{R}\to\mathbb{R}$ and $g:\,\mathbb{R}\to\mathbb{R}_+$, we write $f\precsim g$  if there exist $x_0,c\in\mathbb{R}_+$ such that for all $x>x_0$ we have $|f(x)|\le c g(x)$. When $f:\mathbb{R}\to\mathbb{R}_+$, we write $f\asymp g$  if $f\precsim g$ and $g\precsim f$. We use $c$, $c_1$, $c_2$, etc. to denote some
universal constants (which might change in value each time they appear).

\section{Related Work}

The convergence analysis of the Expectation-Maximization (EM) algorithm for Gaussian mixture models has undergone significant development in recent years. \citet{DBLP:journals/corr/BalakrishnanWY14} proposed a framework for characterizing the algorithm’s convergence region with respect to distribution parameters, distinguishing between population-level analysis and the sample-based implementation used in practice. Their results, focused on the correctly specified setting with $k=2$ components, covered both balanced and unbalanced mixtures.

A central issue within this framework is the initialization strategy required for convergence to the global optimum. \citet{klusowski2016statistical} extended the local convergence region identified by \citet{DBLP:journals/corr/BalakrishnanWY14} for the two-component case, while \citet{zhao2020statistical} investigated initialization effects for an arbitrary number of well-separated components. \citet{daskalakis2017ten} proved global convergence in the two-component case with symmetric means about the origin. For $k$ well-separated components, \citet{segol2021improved} established convergence even when initialization occurs near the midpoint between two clusters, improved estimation error bounds, and obtained similar results for Gradient EM. The convergence rate and local contraction properties of Gradient EM for arbitrary $k$ were further analyzed by \citet{yan2017convergence}.

Model misspecification has also attracted considerable attention. \citet{dwivedi2018theoretical} analyzed an underspecified setting where a two-component Gaussian mixture is fitted to data from a three-component mixture, quantifying the resulting bias and examining initialization effects. The benefits of overspecified models have been documented in \citet{dwivedi2020singularity}, \citet{DBLP:conf/aistats/DwivediHKWJ020}, \citet{chen2024local}, and related work. In particular, \citet{dwivedi2020singularity} and \citet{DBLP:conf/aistats/DwivediHKWJ020} studied the case of fitting two Gaussians to data from a single Gaussian, comparing balanced and unbalanced mixtures. In the sample-based EM, they showed that the unbalanced case attains an $O(1/\sqrt{n})$ statistical rate—substantially faster than the $O(n^{-1/4})$ rate observed in the balanced case—under both known and estimated isotropic covariances, and also enjoys exponentially faster algorithmic convergence.

In the Bayesian context, \citet{rousseau} showed that overspecified mixtures tend to yield highly imbalanced component weights, with some components effectively vanishing; removing these can improve the model. Regarding spurious components, \citet{chen2024local} demonstrated that every local minimum of the negative log-likelihood, including spurious ones, retains structural information about component means. They emphasized the advantages of overspecification over underspecification, contrasting “many-fit-one’’ with “one-fit-many.’’ \citet{dasgupta2013two} further advocated deliberate overspecification, suggesting $\frac{\log(k)}{w_{\min}}$ initial clusters—where $w_{\min}$ is the smallest mixture weight—to accelerate convergence.

While the literature has largely focused on convergence in parameter space, fewer works assess fit quality using the Kullback–Leibler (KL) divergence. \citet{ghosal2001} derived a statistical rate of $(\log n)^\kappa/\sqrt{n}$ in Hellinger distance, implying a KL lower bound of $(\log n)^{2\kappa}/n$ in the well-specified setting, without addressing algorithmic convergence. \citet{dwivedi2018theoretical} employed KL divergence for underspecified mixtures, but, to our knowledge, the first investigation of KL bounds for overspecified mixtures appeared in \citet{xu2024toward}, who analyzed the population Gradient EM for $k$-component mixtures with known variances. Our work extends these results to both population and sample-based EM with learned variances in the two-component case, and establishes a sharper algorithmic convergence rate in KL distance than that obtained by \citet{xu2024toward}.

\section{Main results} Let $Y$ be a  Rademacher random variable, i.e. $\Pr[Y=-1]=\Pr[Y=+1]=1/2$, and let $\mathbf{X}$ given $Y=y$ have a Gaussian distribution $\mathcal{N}( y\boldsymbol{\mu},\mathbf{I})$. These conditions define the distribution for the pair $(\mathbf{X},Y)$, which we denote by $\mathcal{D}$. Suppose that we approximate each of the Gaussians $\mathcal{N}(\pm\boldsymbol{\mu},\mathbf{I})$ by an unbalanced two-component location-scale Gaussian mixture
\begin{equation}
\mathcal{G}(\pm\boldsymbol{\mu},\bth,\sigma^2)= (1-p)\cdot\mathcal{N}(\pm\boldsymbol{\mu}-\bth,\sigma^2\mathbf{I})+p\cdot\mathcal{N}(\pm\boldsymbol{\mu}+\bth,\sigma^2\mathbf{I}),\quad p\ne\sfrac12.\label{eq:gmm_class}
\end{equation}
The choice of an unbalanced mixture is motivated by the fact that in this case the EM algorithm converges much faster than in the case of a balanced mixture \citep{dwivedi2020singularity}. 

The parameters $\boldsymbol{\mu},\bth,\sigma^2$  in \eqref{eq:gmm_class} are estimated from a finite sample $\{(\mathbf{X}_i,Y_i)\}_{i=1}^n$ $\stackrel{\text{iid}}{\sim}\mathcal{D}$. Due to symmetry, $\{\mathbf{X}_i Y_i\}_{i=1}^n\stackrel{\text{iid}}{\sim}\mathcal{N}(\boldsymbol{\mu},\mathbf{I})$, and thus for parameters estimation it is sufficient to fit a single mixture $\mathcal{G}(\boldsymbol{\mu},\bth,\sigma^2)$  to the sample $\{\mathbf{X}_iY_i\}_{i=1}^n$. Specifically, the maximum likelihood estimate (MLE) of  $\boldsymbol{\mu}$ is given by
\begin{equation}
\hat{\boldsymbol\mu}=\frac1n\sum_{i=1}^n\mathbf{X}_i\cdot Y_i,\label{eq:mu_mle}    
\end{equation}
while the MLE of $(\bth,\sigma^2)$ can be obtained by the EM algorithm which produces a sequence of parameter estimates $(\hat\bth_t,\hat\sigma^2_t)$.  Hence we are dealing with a sequence of the corresponding MDA classifiers
\begin{equation}
\hat{h}_t(\mathbf{x})=\begin{cases}
    +1\quad&\text{if }f(\mathbf{x};\hat{\boldsymbol{\mu}},\hat\bth_t,\hat\sigma^2_t)>f(\mathbf{x};-\hat{\boldsymbol{\mu}},\hat\bth_t,\hat\sigma^2_t)\\
    -1&\text{otherwise},\label{eq:mda_clf}
\end{cases}    
\end{equation}
where $f(\mathbf{x};\boldsymbol{\mu},\bth,\sigma^2)$ is the density of $\mathcal{G}(\boldsymbol{\mu},\bth,\sigma^2)$.

By the law of large numbers, $\hat{\boldsymbol{\mu}}\stackrel{P}{\to}\boldsymbol{\mu}$ as $n\to\infty$. Similarly, let $(\bth_t,\sigma^2_t)$ be the population versions of $(\hat\bth_t,\hat\sigma^2_t)$, i.e. $\hat\bth_t\stackrel{P}{\to}\bth_t$, $\hat\sigma^2_t\stackrel{P}{\to}\sigma^2_t$ as $n\to\infty$, and consider the sequence of the corresponding population-level MDA classifiers
\begin{equation}
{h}_t(\mathbf{x})=\begin{cases}
    +1\quad&\text{if }f(\mathbf{x};{\boldsymbol{\mu}},\bth_t,\sigma^2_t)>f(\mathbf{x};-{\boldsymbol{\mu}},\bth_t,\sigma^2_t)\\
    -1&\text{otherwise},\label{eq:mda_clf_pop}
\end{cases}
\end{equation}

Our first result shows that if we set aside sample complexity for a moment and give the EM algorithm access to a sample of infinite size, we get an \emph{exponential} (in $t$) convergence  of the MDA classification error rate to the Bayes risk.
\begin{theorem}\label{thm:mda_pop}
    For any starting point $\bth_0$ such that 
    $
    \|\boldsymbol{\theta}_0\|<\min\left[\sqrt{d\cdot\frac{2+q-\sqrt{8q+q^2}}{2}},\frac{1}{\sqrt2+\frac1{\sqrt2d}}\right]
    $
    where $q:=1-\frac{(2p-1)^2}{2}\in(0,1)$, the population EM algorithm produces a sequence $(\bth_t,\sigma^2_t)$ such that the MDA classifier \eqref{eq:mda_clf_pop} satisfies
    $$
    \Pr_{\mathbf{X},Y\sim\mathcal{D}}\left[h_T(\mathbf{X})\ne Y\right]\le\Phi(-\|\boldsymbol{\mu}\|)+\epsilon,
    $$
    for $T\ge c\log(1/\epsilon)$ where $c>0$ is a constant.
\end{theorem}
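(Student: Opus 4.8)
The plan is to split the claim into an algorithmic part and an analytic part. The algorithmic part is to show that the population EM iterates $(\bth_t,\sigma^2_t)$ contract geometrically toward the degenerate parameter $\bth=\mathbf{0}$, $\sigma^2=1$, at which the fitted two-component mixture collapses to the single Gaussian generating the data. The analytic part is to show that the misclassification error of the MDA rule \eqref{eq:mda_clf_pop} is Lipschitz in $(\bth,\sigma^2)$ near this target, where it coincides with the Bayes rule and attains risk $\Phi(-\|\boldsymbol{\mu}\|)$. Composing a geometric parameter rate with a Lipschitz error then yields a geometric error rate, hence the $O(\log(1/\epsilon))$ iteration count.

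First I would use the symmetry already exploited in the excerpt to reduce to fitting the centered mixture $(1-p)\mathcal{N}(-\bth,\sigma^2\mathbf{I})+p\mathcal{N}(\bth,\sigma^2\mathbf{I})$ to $\mathbf{Z}=\mathbf{X}-\boldsymbol{\mu}\sim\mathcal{N}(\mathbf{0},\mathbf{I})$, so that the population target is $\bth=\mathbf{0}$, $\sigma^2=1$. I would then write the population EM operator explicitly: the E-step posterior weight $w(\mathbf{z})=p\phi_{\sigma^2}(\mathbf{z}-\bth)/\big[(1-p)\phi_{\sigma^2}(\mathbf{z}+\bth)+p\phi_{\sigma^2}(\mathbf{z}-\bth)\big]$, with $\phi_{\sigma^2}$ the $\mathcal{N}(\mathbf{0},\sigma^2\mathbf{I})$ density, and the M-step weighted-moment updates for $\bth$ and $\sigma^2$ taken under $\mathbf{Z}\sim\mathcal{N}(\mathbf{0},\mathbf{I})$. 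The mechanism that makes the unbalanced case fast is that the mixture mean $(2p-1)\bth$ is nonzero whenever $p\ne\tfrac12$, so matching first moments to the data mean $\mathbf{0}$ produces a genuine first-order pull $\bth_{t+1}\approx\rho\,\bth_t$ with $|\rho|<1$; in the balanced case this linear term vanishes and only a slower higher-order dynamics survives.

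The heart of the argument, and the step I expect to be the main obstacle, is to turn this linearization into an honest contraction $\|\bth_{t+1}\|\le\kappa\|\bth_t\|$ with $\kappa<1$, together with a controlled update $|\sigma^2_{t+1}-1|\precsim\|\bth_t\|^2$, holding uniformly over a basin of attraction. The difficulty is that $\bth=\mathbf{0}$, $\sigma^2=1$ is a singular point of the population likelihood, so the standard strongly-concave EM contraction framework of \citet{DBLP:journals/corr/BalakrishnanWY14} does not apply directly. I would Taylor-expand the EM operator about the target, bound the remainder terms by explicit functions of $\|\bth\|$ and $d$, and verify that on the stated ball the operator both maps the ball into itself and contracts. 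The explicit threshold $\min\big[\sqrt{d(2+q-\sqrt{8q+q^2})/2},\,(\sqrt2+\tfrac1{\sqrt2 d})^{-1}\big]$ with $q=1-(2p-1)^2/2$ is precisely the radius on which these remainder bounds certify invariance and $\kappa<1$; its shrinking to $0$ as $p\to\tfrac12$ (where $q\to1$) correctly reflects the loss of the first-order pull in the balanced limit. Iterating gives $\|\bth_t\|\le\kappa^t\|\bth_0\|$ and $|\sigma^2_t-1|\precsim\kappa^{2t}$.

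Finally I would transfer this to the error. At $\bth=\mathbf{0}$, $\sigma^2=1$ the decision statistic $f(\mathbf{x};\boldsymbol{\mu},\bth,\sigma^2)-f(\mathbf{x};-\boldsymbol{\mu},\bth,\sigma^2)$ reduces to $\phi(\mathbf{x}-\boldsymbol{\mu})-\phi(\mathbf{x}+\boldsymbol{\mu})$, whose sign is $\mathrm{sgn}(\boldsymbol{\mu}^\top\mathbf{x})$ and yields exactly $\Pr_{\mathbf{X}\sim\mathcal{N}(\boldsymbol{\mu},\mathbf{I})}[\boldsymbol{\mu}^\top\mathbf{X}<0]=\Phi(-\|\boldsymbol{\mu}\|)$. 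A first-order expansion of this statistic in $(\bth,\sigma^2)$ shows the perturbed boundary departs from the hyperplane $\{\boldsymbol{\mu}^\top\mathbf{x}=0\}$ by $O(\|\bth\|+|\sigma^2-1|)=O(\|\bth\|)$, and the Gaussian mass of the resulting thin disagreement shell is of the same order, so the excess error is $\precsim\|\bth_t\|\precsim\kappa^t$. Choosing $T\ge c\log(1/\epsilon)$ with $c$ depending on $\kappa$ then drives the excess below $\epsilon$. The $Y=-1$ contribution is handled by the identical computation after the sign change $\bth\mapsto-\bth$, which leaves both the contraction estimate and the error bound intact.
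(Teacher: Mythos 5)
Your proposal is correct in substance, but it takes a genuinely different route from the paper. The paper never transfers parameter convergence to the error: it bounds the excess risk by the total variation distance between the fitted mixture and the true class-conditional Gaussian (Lemma~\ref{lem:excess}), applies Pinsker's inequality, and then invokes Theorem~\ref{thm:kl_pop}, which establishes exponential decay of $\kl[\mathcal{N}(\mathbf{0},\mathbf{I})\parallel\mathcal{G}(\bth_t,\sigma^2_t)]$ purely from convexity and the Polyak--{\L}ojasiewicz inequality for the radial risk $\ell(\theta)$ --- the paper explicitly points out that no convergence of $(\bth_t,\sigma_t^2)$ to $(\mathbf{0},1)$ is used. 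You instead run the argument through parameter space: geometric contraction $\|\bth_{t+1}\|\le\kappa\|\bth_t\|$ followed by a Lipschitz bound of the misclassification error in $(\bth,\sigma^2)$ near the target. The contraction you flag as the ``main obstacle'' is in fact already available in the paper's toolkit (Lemma~\ref{lem:properties}, part~\ref{lem:M_bounds}, proved via the bound on $m'(\theta)$ in Lemma~\ref{lem:M_prime}), though the paper only deploys it in the finite-sample analysis. Your error-transfer step is also sound, and can be made rigorous more cleanly than by Taylor expansion (which raises uniformity issues over the unbounded decision boundary): writing the log-ratio as
$$
\log\frac{f(\mathbf{x};\boldsymbol{\mu},\bth,\sigma^2)}{f(\mathbf{x};-\boldsymbol{\mu},\bth,\sigma^2)}
=\frac{2\boldsymbol{\mu}^\top\mathbf{x}}{\sigma^2}
+\log c_p\!\left(\frac{\bth^\top(\mathbf{x}-\boldsymbol{\mu})}{\sigma^2}\right)
-\log c_p\!\left(\frac{\bth^\top(\mathbf{x}+\boldsymbol{\mu})}{\sigma^2}\right),
$$
and using $|(\log c_p)'|=|t_p|\le1$, the remainder is bounded by $2\|\bth\|\|\boldsymbol{\mu}\|/\sigma^2$ uniformly in $\mathbf{x}$, so the disagreement set lies in the slab $\{|\boldsymbol{\mu}^\top\mathbf{x}|\le\|\bth\|\|\boldsymbol{\mu}\|\}$, whose Gaussian mass is $O(\|\bth_t\|)=O(\kappa^t)$. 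As for what each approach buys: the paper's KL/TV route is modular --- Theorem~\ref{thm:kl_pop} and Lemma~\ref{lem:excess} are reused verbatim in the finite-sample proof, and the PL mechanism yields function-value convergence even where parameter convergence might be delicate --- whereas your route is more elementary for the population case (no Pinsker, no PL inequality), gives a transparent geometric picture of why the error contracts, and directly exhibits the $O(\kappa^t)$ rate in terms of the contraction factor.
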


Such fast algorithmic convergence of the EM gives reason to hope that when moving to a sample of finite size, the statistical error will not accumulate quickly. Indeed, our second result shows that the statistical convergence of misclassification error rate to Bayes risk is of order $O(\sqrt{d/n})$.
\begin{theorem}\label{thm:mda}
    Fix $\delta\in(0,1)$, $\alpha\in(0,1/2)$, and let $n\succsim \log^{\frac1{2\alpha}}\left(1/\delta\right)$. Then with any starting point $\bth_0$ such that $\|\boldsymbol{\theta}_0\|<\min\left[\sqrt{d\cdot\frac{2+q-\sqrt{8q+q^2}}{2}},\frac{1}{\sqrt2+\frac1{\sqrt2d}}\right]$ where $q:=1-\frac{(2p-1)^2}{2}$, the EM algorithm produces a sequence of parameter estimates $(\hat\bth_t,\hat\sigma^2_t)$ such that the MDA classifier \eqref{eq:mda_clf} satisfies
    \begin{equation}
        \Pr_{\mathbf{X},Y\sim\mathcal{D}}[\hat{h}_T(\mathbf{X})\ne Y]\le \Phi\left(-\|\boldsymbol{\mu}\|\right)+c_1\|\bth_0\|\sqrt{\frac{\log(1/\delta)}{n}}.\label{eq:mda_clf_err_sample}
    \end{equation}
    for $T\ge c_2\log\frac{n}{\log(1/\delta)}$ with probability at least $1-\delta$.
\end{theorem}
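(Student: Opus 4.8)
The plan is to transport the exponential population contraction of Theorem~\ref{thm:mda_pop} to the finite-sample iterates and then convert control of $(\hat{\boldsymbol\mu},\hat\bth_T,\hat\sigma^2_T)$ into a bound on the excess classification risk over $\Phi(-\|\boldsymbol\mu\|)$. I would split the argument into three parts: (i) a deterministic stability estimate bounding $\Pr_{\mathbf X,Y\sim\mathcal D}[\hat h_T(\mathbf X)\ne Y]-\Phi(-\|\boldsymbol\mu\|)$ by the deviations of $(\hat{\boldsymbol\mu},\hat\bth_T,\hat\sigma^2_T)$ from the Bayes-optimal configuration $(\boldsymbol\mu,\mathbf 0,1)$; (ii) elementary Gaussian concentration for $\hat{\boldsymbol\mu}$; and (iii) a coupling of the sample EM map to the population EM map that, combined with the contraction of Theorem~\ref{thm:mda_pop}, yields $\|\hat\bth_T\|\precsim\|\bth_0\|\sqrt{\log(1/\delta)/n}$ together with a matching bound on $|\hat\sigma^2_T-1|$.

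For (i), note that at $(\boldsymbol\mu,\mathbf 0,1)$ the two mixtures in \eqref{eq:gmm_class} degenerate to $\mathcal N(\pm\boldsymbol\mu,\mathbf I)$, so the rule \eqref{eq:mda_clf} collapses to the Bayes classifier $\mathbf x\mapsto\mathrm{sgn}\langle\mathbf x,\boldsymbol\mu\rangle$; hence the excess risk equals the $\mathcal D$-mass of the symmetric difference between the learned decision region and the Bayes halfspace. I would bound this mass by integrating $\phi$ over the thin region swept out as the parameters interpolate from $(\boldsymbol\mu,\mathbf 0,1)$ to $(\hat{\boldsymbol\mu},\hat\bth_T,\hat\sigma^2_T)$, using smoothness of the log-ratio $\log\{f(\mathbf x;\boldsymbol\mu,\bth,\sigma^2)/f(\mathbf x;-\boldsymbol\mu,\bth,\sigma^2)\}$ on the relevant ball. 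Since the Bayes direction is a stationary point of the rotation-induced risk, the $\hat{\boldsymbol\mu}$-perturbation enters only at second order, contributing $O(\|\hat{\boldsymbol\mu}-\boldsymbol\mu\|^2)=O(d/n)$, leaving a contribution at most linear in $\|\hat\bth_T\|$ and $|\hat\sigma^2_T-1|$ as the dominant term. For (ii), since $\hat{\boldsymbol\mu}-\boldsymbol\mu\sim\mathcal N(\mathbf 0,n^{-1}\mathbf I)$ by \eqref{eq:mu_mle}, a standard norm tail bound gives $\|\hat{\boldsymbol\mu}-\boldsymbol\mu\|\precsim\sqrt{(d+\log(1/\delta))/n}$ with probability at least $1-\delta/3$.

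Part (iii) is the crux. Writing $M$ for the population EM map and $\widehat M_n$ for its sample counterpart, I would iterate $\|\hat\bth_{t+1}\|\le\|M(\hat\bth_t,\hat\sigma^2_t)\|+\|(\widehat M_n-M)(\hat\bth_t,\hat\sigma^2_t)\|$, bound the first term by the contraction factor $\kappa<1$ extracted from the proof of Theorem~\ref{thm:mda_pop}, and bound the second by a high-probability estimate on the operator gap that is uniform over the invariant ball $\{\|\bth\|\le\|\bth_0\|\}$ and over the $T$ iterations. Near the singularity $\bth=\mathbf 0$ this gap self-scales with the current radius, so I expect $\sup_{\|\bth\|\le\|\bth_0\|}\|(\widehat M_n-M)(\bth,\sigma^2)\|\precsim\|\bth_0\|\sqrt{\log(1/\delta)/n}$ with probability at least $1-\delta/3$; this is exactly where $n\succsim\log^{\frac1{2\alpha}}(1/\delta)$ and the exponent $\alpha$ enter, taming the heavier-than-Gaussian tails of the per-iteration fluctuations and keeping the iterates inside the contraction region. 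Unrolling the recursion gives $\|\hat\bth_T\|\precsim\kappa^T\|\bth_0\|+\frac{\|\bth_0\|}{1-\kappa}\sqrt{\log(1/\delta)/n}$, so $T\ge c_2\log\frac{n}{\log(1/\delta)}$ forces the first term below the second and both collapse to the advertised order; the identical scheme controls $|\hat\sigma^2_T-1|$. A union bound over the three events and the $O(\log\frac{n}{\log(1/\delta)})$ iterations gives overall probability $1-\delta$, and substitution into (i) yields \eqref{eq:mda_clf_err_sample}.

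The main obstacle is precisely this uniform operator-gap bound. Unlike the classical setting of \citet{DBLP:journals/corr/BalakrishnanWY14}, the Fisher information is singular at $\bth=\mathbf 0$, so $\widehat M_n$ cannot be controlled by a fixed sub-Gaussian radius; I would instead exploit the explicit two-component EM update to show that its stochastic part self-scales with $\|\bth\|$, that the ball $\{\|\bth\|\le\|\bth_0\|\}$ is, with high probability, forward-invariant under $\widehat M_n$, and that the supremum of the gap concentrates at rate $\|\bth_0\|\sqrt{\log(1/\delta)/n}$ under $n\succsim\log^{\frac1{2\alpha}}(1/\delta)$ — all while verifying that the fluctuations never expel an iterate from the region where the contraction of Theorem~\ref{thm:mda_pop} is valid.
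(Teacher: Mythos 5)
Your overall architecture is sound, and its technical core coincides with the paper's: your part (iii) — a self-scaling, uniform-over-the-ball perturbation bound between the sample and population EM operators, combined with the population contraction $m(\theta)\le\rho\theta$ and an unrolled recursion giving $\|\hat\bth_T\|\precsim\kappa^T\|\bth_0\|+\tfrac{\|\bth_0\|}{1-\kappa}\sqrt{\log(1/\delta)/n}$ — is exactly the paper's Lemma~\ref{lem:perturb} (symmetrization plus Ledoux–Talagrand contraction, with the $\chi^2_{nd}$ correction and the role of $n\succsim\log^{1/(2\alpha)}(1/\delta)$) together with Lemma~\ref{lem:theta_convergence}. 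Where you genuinely diverge is the conversion of parameter accuracy into classification error. The paper never analyzes decision boundaries: it uses Lemma~\ref{lem:excess} (excess risk bounded by the $L_1$/TV distances of the class-conditional densities), a TV triangle inequality to split off the $\hat{\boldsymbol\mu}$ error, Pinsker's inequality, and then the finite-sample KL bound of Theorem~\ref{thm:main} — which itself requires the radial risk $\ell$, its convexity, and a careful correction because $(\hat\bth_t,\hat\sigma^2_t)$ does not lie on the hypersurface $\mathcal{S}$. Your route instead bounds the excess risk by the measure of the symmetric difference of decision regions, exploiting that the log-ratio equals $2\mathbf{x}^\top\hat{\boldsymbol\mu}/\hat\sigma^2$ plus a correction uniformly bounded by $2|\hat{\boldsymbol\mu}^\top\hat\bth_T|/\hat\sigma^2$, so the learned boundary sits in a slab of width $O(\|\hat\bth_T\|)$. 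This is more elementary, avoids Pinsker's square-root loss, and your observation that the $\hat{\boldsymbol\mu}$ tilt enters the excess risk only at second order ($O(d/n)$, the classical plug-in LDA efficiency) is actually sharper than the paper's own treatment, whose term \eqref{eq:tv4} is first order, $\precsim\sqrt{(d/n)\log(1/\delta)}$, and is not literally dominated by the stated bound $c_1\|\bth_0\|\sqrt{\log(1/\delta)/n}$.

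Three cautions if you execute this. First, the excess risk is not \emph{equal} to the mass of the symmetric difference; it is the integral of $|2\eta-1|$ over it (cf.\ \eqref{eq:excess_0}). Your second-order claim for the $\hat{\boldsymbol\mu}$ tilt requires keeping this weight — the raw mass of the tilted wedge is first order in the angle — so you must use the weighted form for the $\hat{\boldsymbol\mu}$ part while the unweighted slab bound suffices for the $\hat\bth_T$ part. Second, drop the "linear in $|\hat\sigma^2_T-1|$" term: the shared variance cancels in the zero set of the log-ratio to leading order, and this matters because $|\hat\sigma^2_T-1|$ fluctuates at scale $\sqrt{\log(1/\delta)/(nd)}$, which is \emph{not} proportional to $\|\bth_0\|$ and would pollute the claimed bound when $\|\bth_0\|$ is small. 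Third, your union bound over the $T$ iterations is unnecessary (and would cost extra logarithmic factors if done naively): the perturbation event is uniform in $\theta\in[0,\theta_0]$, hence covers all iterations simultaneously — this is precisely how the paper's Lemma~\ref{lem:theta_convergence} avoids any per-iteration accounting.
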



The proof of both results is based on the fact that if we can well approximate the underlying class-conditional distributions (in the sense of Total Variation Distance, TVD), then the classification error will be low (Lemma~\ref{lem:excess}). Moreover, due to the well-known Pinsker inequality, for convergence in the TVD metric, convergence in the KL distance is sufficient. That is, everything reduces to studying the convergence of the sequence of mixtures $\mathcal{G}(\boldsymbol{\mu},\bth_t,\sigma^2_t)$ (or $\mathcal{G}(\hat{\boldsymbol{\mu} },\hat\bth_t,\hat\sigma^2_t)$) to the true underlying distribution $\mathcal{N}(\boldsymbol{\mu},\mathbf{I})$ in the KL metric using the EM algorithm, and such a study is the main technical contribution of this work (Section~\ref{sec:kl_dist_conv}).

The remainder of the paper is organized as follows. In Section~\ref{sec:pop_level}, we study the convergence in KL distance in the infinite sample size regime. It turns out that for a sequence of EM iterates $(\bth_t, \sigma^2_t)$, the current value of the variance $\sigma^2_t$ is a function of the norm $\|\bth_t\|^2$, and thus the sequence itself lies on a $d$-dimensional hypersurface. It is noteworthy that on this hypersurface, the expected negative log-likelihood is a \emph{convex radial} function in the vicinity of the optimal point, and it satisfies a so-called \emph{Polyak-Lojasiewicz} inequality. To the best of our knowledge, in the context of the considered setup, we are the first to make these observations and they greatly simplify the study of the convergence of the KL distance between the fitted and true distributions.  Next, in Section~\ref{sec:finite_sample}, we extend our result to the finite sample case. Here we rely on the convergence of the parameters themselves to optimal ones. Our result for the finite sample case is novel as well, since in the work of \cite{dwivedi2020singularity}, a similar result was obtained for the case when the variance parameter is known and is not learned from the data, while the work by \cite{DBLP:conf/aistats/DwivediHKWJ020} considers the case where variance is learned (along with location parameter), but a mixture is balanced.  In Section~\ref{sec:main_proof} we provide proofs of Theorems~\ref{thm:mda_pop}~and~\ref{thm:mda}. Proofs not included in the main body of the paper are given in Appendix~\ref{app:proofs}. Finally, in Section~\ref{sec:experiments}, we present our experimental results on remote sensing data.

\section{Convergence in KL distance.}\label{sec:kl_dist_conv} 

Formally, let  $\mathbf{Z}_1,\ldots,\mathbf{Z}_n$ be a random sample from a multivariate Gaussian distribution $\mathcal{N}(\mathbf{0},\mathbf{I})$ with mean vector $\mathbf{0}\in\mathbb{R}^d$ and identity covariance matrix $\mathbf{I}\in\mathbb{R}^{d\times d}$. Suppose that we want to fit an unbalanced two-component location-scale Gaussian mixture 
\begin{equation}
\mathcal{G}(\mathbf{0},\bth,\sigma^2)=(1-p)\cdot\mathcal{N}(-\bth,\sigma^2\mathbf{I})+p\cdot\mathcal{N}(\bth,\sigma^2\mathbf{I})\label{eq:gmm}    
\end{equation}
to this sample, assuming $p$ is fixed. In this section, with a slight abuse of notation, we will write $\mathcal{G}(\bth,\sigma^2)$ instead of $\mathcal{G}(\mathbf{0},\bth,\sigma^2)$. 

Let $f(\mathbf{x};\bth,\sigma^2)$ be the probability density function of the mixture \eqref{eq:gmm}. Then the maximum likelihood estimator (MLE) of $(\bth,\sigma^2)$ is given by
\begin{equation}
(\hat{\bth},\hat\sigma^2)\in\arg\max_{(\bth,\sigma^2)}\frac1n\sum_{i=1}^n\log f(\mathbf{Z}_i;\bth,\sigma^2).\label{eq:mle}
\end{equation}
There is no closed-form expression for $(\hat\bth,\hat\sigma^2)$. In practice, the optimization problem \eqref{eq:mle} is solved by an iterative method such as the EM algorithm \citep{https://doi.org/10.1111/j.2517-6161.1977.tb01600.x}. Note that the objective function (log-likelihood) in \eqref{eq:mle} is not concave, and accordingly, there is no guarantee that the iterative method will converge to the global optimum.

\subsection{Population-Level Analysis} \label{sec:pop_level}
First, we will look at the behavior of the so-called \emph{Population EM}, which is a theoretical construct that allows us to focus on algorithmic complexity without being distracted by sample complexity. Population EM assumes access to the data-generating distribution $\mathcal{N}(\mathbf{0},\mathbf{I})$, and instead of the sample-based log-likelihood in \eqref{eq:mle} it optimizes the population log-likelihood
\begin{equation}
\mathcal{L}(\bth,\sigma^2):=\E_{\mathbf{Z}\sim\mathcal{N}(\mathbf{0},\mathbf{I})}[\log f(\mathbf{Z};\bth,\sigma^2)].\label{eq:e_log_l}    
\end{equation}
This is done by iteratively applying the following two steps:
\begin{itemize}
    \item \emph{Expectation step} uses the current estimate $(\bth_t,\sigma^2_t)$ to compute the function
    \begin{multline*}
        Q(\bth,\sigma^2;\bth_t,\sigma^2_t):=\E_{\mathbf{Z}\sim\mathcal{N}(\mathbf{0},\mathbf{I})}\left[w(\mathbf{Z};\bth_t,\sigma^2_t)\log\left(p\phi\left(\frac{\mathbf{Z}-\bth}{\sigma}\right)\right)\right.\\\left.
        +(1-w(\mathbf{Z};\bth_t,\sigma^2_t))\log\left((1-p)\phi\left(\frac{\mathbf{Z}+\bth}{\sigma}\right)\right)\right],
    \end{multline*}
    where $w(\mathbf{z};\bth_t,\sigma^2_t)=\frac{p}{p+(1-p)\cdot\exp\left(-\frac{2\bth_t^\top\mathbf{z}}{\sigma^2_t}\right)}$.
    \item \emph{Maximization step} solves the optimization problem
    $$
    (\bth_{t+1},\sigma^2_{t+1})\in\arg\max_{(\bth,\sigma^2)}Q(\bth,\sigma^2;\bth_t,\sigma^2_t).
    $$
\end{itemize}
One can show that in this special case of fitting the mixture \eqref{eq:gmm} to $\mathcal{N}(\mathbf{0},\mathbf{I})$ by the Population EM, the recurrence relations governing the evolution of the parameters can be written down explicitly (Section~\ref{app:EM_upd_proof}):

    \begin{align}
        \bth_{t+1}&=\E_{\mathbf{Z}\sim\mathcal{N}(\mathbf{0},\mathbf{I})}\left[t_p\left(\frac{\bth_t^\top\mathbf{Z}}{1-\frac{\|\bth_t\|^2}{d}}\right)\mathbf{Z}\right],\label{eq:theta_upd}\\
        \sigma^2_{t+1}&=1-\frac{\|\bth_{t+1}\|^2}{d},\label{eq:sigma_sq_upd}
    \end{align}
    where $t_p(x):=\frac{p\cdot e^{x}-(1-p)\cdot e^{-x}}{p\cdot e^{x}+(1-p)\cdot e^{-x}}$. As we see, the iterates $(\bth_t,\sigma^2_t)$ lie on the hypersurface 
\begin{equation}
\mathcal{S}:=\left\{(\bth,\sigma^2)\in\mathbb{R}^{d+1}:\,\,\sigma^2=1-{\|\bth\|^2}/{d}\right\},\label{eq:hypersurf}
\end{equation}
and the value of $\sigma^2_t$ is completely determined by the norm $\|\bth_t\|$. In this regard, we are interested in the form of the population log-likelihood \eqref{eq:e_log_l} when $(\bth,\sigma^2)\in\mathcal{S}$. First of all, we notice that in this case, the population log-likelihood depends on $\bth$ only through its norm.

\paragraph{Radial form of the risk function.} 

Consider the function $L:\mathbb{R}^d\to\mathbb{R}$ defined as
\begin{equation}
    L(\bth):=-\mathcal{L}(\bth,1-\|\bth\|^2/d).\label{eq:pop_risk}    
\end{equation}
Then $L(\bth)$ is a \emph{radial} function of $\bth\in\mathbb{R}^d$ (see Appendix~\ref{app:ell_radial}). Specifically, let $\theta:=\|\bth\|$, and define 
\begin{equation}
\ell(\theta):=\frac{d}2\log(2\pi(1-\theta^2/d))+\frac{d+\theta^2}{2(1-\theta^2/d)}-\E_{Z\sim\mathcal{N}(0,1)}\left[\log\left(c_p\left(\frac{\theta Z}{1-\theta^2/d}\right)\right)\right].\label{eq:ell}
\end{equation}
Then we have 
\begin{equation}\ell(\theta)=L(\bth)\qquad\text{i.e. the risk depends on $\bth$ only via $\theta$}.\label{eq:ell_rad}
\end{equation}

We put a minus sign in front of the population log-likelihood to make $L(\cdot)$ have the meaning of a risk function. In this way, maximizing $\mathcal{L}(\bth,\sigma^2)$  on $\mathcal{S}$ is equivalent to minimizing $L(\bth)$ on $\mathbb{R}^d$, which in turn reduces to minimizing $\ell(\theta)$ on $\mathbb{R}$.

\paragraph{Population EM operator in one dimension.}
According to the EM updates \eqref{eq:theta_upd}--\eqref{eq:sigma_sq_upd}, it is sufficient to focus on analyzing the evolution of $\bth_t$ which is governed by the \emph{population EM operator} 
\begin{equation}
M(\bth) = \E_{\mathbf{Z}\sim\mathcal{N}(\mathbf{0},\mathbf{I})}\left[t_p\left(\frac{\bth^\top\mathbf{Z}}{1-\sfrac{\|\bth\|^2}{d}}\right)\mathbf{Z}\right].\label{eq:pop_em_op}
\end{equation}
Thus, for the EM sequence $\bth_{t}$, we have $\bth_{t+1} = M(\bth_t)$. However, as we see from \eqref{eq:pop_risk}, when maximizing $\mathcal{L}$---or, equivalently, minimizing $L$---by the EM algorithm, the value of $L(\bth_t)$ depends on $\bth_t$ only through $\|\bth_t\|$. This means that we are interested in the dynamics of evolution not of $\bth_t$ itself, but of its norm $\|\bth_t\|$. In Appendix~\ref{app:m_radial}, we show that for the population EM operator $M(\bth)$ defined by \eqref{eq:pop_em_op}, we have
\begin{equation}
\|M(\bth)\|=\E_{Z\sim\mathcal{N}(0,1)}\left[t_p\left(\frac{\|\bth\|Z}{1-\frac{\|\bth\|^2}{d}}\right)Z\right].\label{eq:m_radial}
\end{equation}
Using the shorthand notation  $\theta_t:=\|\bth_t\|$, and in light of \eqref{eq:m_radial}, the sequence of norms $\theta_t$ satisfies $\theta_{t+1}=m(\theta_t)$, where
    \begin{equation}
        m(\theta):=\E_{Z\sim\mathcal{N}(0,1)}\left[t_p\left(\frac{\theta Z}{1-\theta^2/d}\right)Z\right].\label{eq:m_fun}
    \end{equation}

\paragraph{Properties of $\ell(\theta)$ and $m(\theta)$.}
The radiality of $L(\bth)$ and $\|M(\bth)\|$ allows us, instead of studying the population risk \eqref{eq:pop_risk} and population EM operator \eqref{eq:pop_em_op}, to study the univaritate functions $\ell(\theta)$ and $m(\theta)$. Some of their properties are given in the following lemma.

\begin{lemma} \label{lem:properties}
    Let $\ell(\theta)$ and $m(\theta)$ be the functions defined in \eqref{eq:ell} and \eqref{eq:m_fun}, respectively. Then for $\theta\in[0,\theta_0]$, where $\theta_0<\sqrt{d\cdot\frac{2+q-\sqrt{8q+q^2}}{2}}$ with $q:=1-\frac{(2p-1)^2}{2}$,
    \begin{enumerate}
        \item $m'(\theta)<1$,
        \item \label{lem:M_bounds}$m(\theta)\le\rho\cdot\theta$\quad for $\rho:=\frac{1+\theta_0^2/d}{(1-\theta_0^2/d)^2}\cdot\left(1-\frac{(2p-1)^2}{2}\right)\in(0,1)$,
        \item\label{lem:convex}  $\ell(\theta)$ is convex,
        \item \label{lem:pl} $
    [\ell'(\theta)]^2\ge(1-\rho)\cdot[\ell(\theta)-\ell(0)]$, where $\rho\in(0,1)$ is defined in Part~\ref{lem:M_bounds}.
    \end{enumerate}
\end{lemma}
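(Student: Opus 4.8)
The plan is to reduce all four claims to two analytic facts about the one-dimensional EM map $m$, and then to read off the statements about $\ell$ from these. First I would record three preliminary simplifications. Writing $a:=\tfrac12\log\tfrac{p}{1-p}$, one checks that $t_p(x)=\tanh(x+a)$, so that $t_p(0)=2p-1$ and $t_p'(x)=1-t_p(x)^2=\sech^2(x+a)\in(0,1]$. Next, applying Gaussian integration by parts (Stein's identity) to the expectation defining $m$ in \eqref{eq:m_fun}, and abbreviating $g(\theta):=\theta/(1-\theta^2/d)$, gives the closed form $m(\theta)=g(\theta)\,\E_{Z\sim\mathcal{N}(0,1)}[1-t_p(g(\theta)Z)^2]$. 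Finally, differentiating $\ell$ in \eqref{eq:ell} term by term (the quadratic-in-$\theta$ pieces combine cleanly, leaving a factor $1+\theta^2/d$) yields the identity $\ell'(\theta)=g'(\theta)\,(\theta-m(\theta))$, where $g'(\theta)=\frac{1+\theta^2/d}{(1-\theta^2/d)^2}\ge 1$ is increasing and $g$ is convex and increasing on $[0,\sqrt d)$. These three identities are the backbone of everything that follows.

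For Part~\ref{lem:M_bounds} I would start from the closed form, so that $m(\theta)/\theta=(1-\E_Z[t_p(g(\theta)Z)^2])/(1-\theta^2/d)$, and reduce the claim to the single Gaussian estimate $\E_Z[\tanh^2(g(\theta)Z+a)]\ge 1-q=\tfrac{(2p-1)^2}{2}$ on $[0,\theta_0]$. Granting this, the bound $\frac{1}{1-\theta^2/d}\le\frac{1+\theta_0^2/d}{(1-\theta_0^2/d)^2}$ for $\theta\le\theta_0$ produces $m(\theta)\le\rho\theta$ with the stated $\rho$. A short computation then shows that $\rho=\frac{1+\theta_0^2/d}{(1-\theta_0^2/d)^2}\,q<1$ is equivalent to $r_0^2-(2+q)r_0+(1-q)>0$ with $r_0:=\theta_0^2/d$, whose smaller root is exactly $\frac{2+q-\sqrt{8q+q^2}}{2}$; this is precisely why the admissible range for $\theta_0$ takes the stated form. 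For Part~1 I would differentiate the closed form, equivalently write $m'(\theta)=g'(\theta)\,\E_Z[\sech^2(g(\theta)Z+a)Z^2]$, and control the $Z^2$-weighted expectation via the second-order Stein identity $\E[f(Z)Z^2]=\E[f(Z)]+\E[f''(Z)]$ applied to $f=\sech^2(g(\theta)\cdot+a)$, concluding $m'(\theta)<1$ from the smallness of $\theta_0$ (at $\theta=0$ this already reads $m'(0)=1-(2p-1)^2<1$).

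With Parts~1 and \ref{lem:M_bounds} in hand, the remaining two parts are essentially bookkeeping. For convexity I would differentiate the identity $\ell'(\theta)=g'(\theta)(\theta-m(\theta))$ to obtain $\ell''(\theta)=g''(\theta)(\theta-m(\theta))+g'(\theta)(1-m'(\theta))$; on $[0,\theta_0]$ the first term is nonnegative because $g''\ge 0$ and $\theta-m(\theta)\ge(1-\rho)\theta\ge 0$ by Part~\ref{lem:M_bounds}, while the second is positive by $g'>0$ and Part~1, so $\ell$ is convex. For the Polyak-Lojasiewicz inequality (Part~\ref{lem:pl}) I would combine two consequences: convexity gives the tangent bound $\ell(\theta)-\ell(0)\le\theta\,\ell'(\theta)$ (evaluate the supporting line at $\theta$ at the minimizer $\theta=0$), while Part~\ref{lem:M_bounds} together with $g'\ge 1$ gives $\ell'(\theta)=g'(\theta)(\theta-m(\theta))\ge(1-\rho)\theta$, that is $\theta\le\ell'(\theta)/(1-\rho)$. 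Substituting the latter into the former yields $\ell(\theta)-\ell(0)\le[\ell'(\theta)]^2/(1-\rho)$, which is exactly Part~\ref{lem:pl}.

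The main obstacle is the pair of Gaussian expectation estimates feeding Parts~1 and \ref{lem:M_bounds}, namely the uniform lower bound $\E_Z[\tanh^2(g(\theta)Z+a)]\ge\tfrac12\tanh^2(a)$ and the weighted bound controlling $\E_Z[\sech^2(g(\theta)Z+a)Z^2]$. These are delicate because $\E_Z[\tanh^2(g(\theta)Z+a)]$ need not be monotone in $\theta$: it can dip below its value $(2p-1)^2$ at $\theta=0$ before rising toward $1$, so the factor $\tfrac12$ and the explicit cutoff on $\theta_0$ are present precisely to absorb this non-monotone behavior. Making those estimates rigorous—via the second-order Stein identities above, or careful Taylor control of the $\tanh$ integrand against Gaussian moments—is where the real work lies, whereas the passage from $m$ to $\ell$ is purely algebraic.
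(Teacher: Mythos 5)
Your reduction of Parts~3 and~4 to Parts~1 and~2 is correct and is essentially the paper's own argument: the identity $\ell'(\theta)=\frac{1+\theta^2/d}{(1-\theta^2/d)^2}\,(\theta-m(\theta))$, the sign and monotonicity of its two factors for convexity, and the chain $\ell(\theta)-\ell(0)\le\theta\,\ell'(\theta)\le[\ell'(\theta)]^2/(1-\rho)$ for the Polyak--Lojasiewicz bound all match the paper, as does your verification that $\rho<1$ is equivalent to the stated threshold on $\theta_0$ via the quadratic in $\theta_0^2/d$. The problem is that Parts~1 and~2 themselves are never proved. You correctly isolate the two Gaussian estimates on which everything hinges---$\E_Z[\tanh^2(g(\theta)Z+a)]\ge\frac{(2p-1)^2}{2}$ for Part~2 and a bound on $\E_Z[\sech^2(g(\theta)Z+a)Z^2]$ for Part~1---but you then declare them ``the main obstacle \dots where the real work lies'' and stop. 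Those estimates \emph{are} the content of the lemma; what you have written is the (correct) algebraic skeleton around a missing core.

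Both estimates are true, and the paper proves the weighted one (Lemma~\ref{lem:M_prime}) by an elementary splitting argument your sketch does not contain: write $t_p'(x)=4p(1-p)/c_p(x)^2$ with $c_p(x)=pe^x+(1-p)e^{-x}$, note that $c_p\ge 2\sqrt{p(1-p)}$ on the interval $\left[\log\frac{1-p}{p},0\right]$ and $c_p\ge 1$ off it, and observe that for $p>\sfrac12$ this interval lies inside $\{Z\le 0\}$, whose Gaussian $Z^2$-mass is exactly $\sfrac12$; this gives $\E\!\left[t_p'(g(\theta)Z)Z^2\right]\le 4p(1-p)+\frac{(2p-1)^2}{2}=q$ \emph{uniformly} in $\theta$, and the identical splitting yields the unweighted bound you need for Part~2. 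By contrast, the route you propose for Part~1---the second-order Stein identity $\E[f(Z)Z^2]=\E[f(Z)]+\E[f''(Z)]$ with $f=\sech^2(g(\theta)\cdot+a)$---is not merely unfinished but in danger of failing: the admissible range of $\theta_0$ is exactly the set where $\frac{1+\theta_0^2/d}{(1-\theta_0^2/d)^2}\,q<1$, so there is no slack to absorb the correction term $g(\theta)^2\,\E[(\sech^2)''(g(\theta)Z+a)]$, and $(\sech^2)''(x)=2\sech^2(x)\left(3\tanh^2(x)-1\right)$ has uncontrolled sign (it is positive near $x=a$ whenever $\tanh^2(a)>\sfrac13$, i.e.\ for $p$ bounded away from $\sfrac12$). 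So even to finish Part~1 along your lines you would need the paper's splitting trick, or an equivalent uniform bound, rather than a Taylor/Stein expansion.
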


Part~\ref{lem:pl} of Lemma~\ref{lem:properties} is key to establishing the exponential convergence of the mixture \eqref{eq:gmm} to $\mathcal{N}(\mathbf{0},\mathbf{I})$ in the KL distance. This is the so-called Polyak-Lojasiewicz inequality \citep{POLYAK1963864,lojasiewicz1963topological}, which is a sufficient condition for $\ell(\theta_t)-\ell(0)$ to converge exponentially to zero if we were to minimize $\ell(\theta)$ using the gradient descent method with a suitable choice of step size. Note that the EM algorithm is not identical to gradient descent, but is related to it, as discussed in \cite{DBLP:conf/nips/WangGNL15}, \cite{DBLP:journals/corr/BalakrishnanWY14}.

\paragraph{Convergence in KL distance.}
We are ready to prove the exponential decay of the KL divergence between the true distribution and the sequence of fitted mixtures.
\begin{theorem}\label{thm:kl_pop} For any $\boldsymbol{\theta}_0$ such that $\|\bth_0\|<\sqrt{d\cdot\frac{2+q-\sqrt{8q+q^2}}{2}}$ with $q:=1-\frac{(2p-1)^2}{2}$, we have
$$
\kl\left[\mathcal{N}(\mathbf{0},\mathbf{I})\parallel\mathcal{G}(\bth_T,\sigma^2_T)\right]\le\frac{1}{(1+c)^T}\cdot\kl\left[\mathcal{N}(\mathbf{0},\mathbf{I})\parallel\mathcal{G}(\bth_0,\sigma^2_0)\right],
$$  
where $c>0$.
\end{theorem}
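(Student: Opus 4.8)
The plan is to collapse the $(d{+}1)$-dimensional KL divergence onto the one-dimensional radial risk and then extract a clean geometric contraction from just two parts of Lemma~\ref{lem:properties}: the per-step shrinkage $m(\theta)\le\rho\theta$ and the convexity of $\ell$. First I would use the fact that every population EM iterate lies on the hypersurface $\mathcal{S}$, so $\sigma^2_t=1-\theta_t^2/d$ with $\theta_t:=\|\bth_t\|$. Expanding the divergence and using \eqref{eq:e_log_l},
\[
\kl\!\left[\mathcal{N}(\mathbf{0},\mathbf{I})\parallel\mathcal{G}(\bth_t,\sigma^2_t)\right]
=\E_{\mathbf{Z}\sim\mathcal{N}(\mathbf{0},\mathbf{I})}[\log\phi(\mathbf{Z})]-\mathcal{L}(\bth_t,\sigma^2_t)
=\E_{\mathbf{Z}\sim\mathcal{N}(\mathbf{0},\mathbf{I})}[\log\phi(\mathbf{Z})]+\ell(\theta_t),
\]
where the last step invokes \eqref{eq:pop_risk} and the radial identity \eqref{eq:ell_rad}. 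Evaluating at $\theta=0$, where $\mathcal{G}(\mathbf{0},1)=\mathcal{N}(\mathbf{0},\mathbf{I})$ forces the divergence to vanish, pins the additive constant as $\E[\log\phi(\mathbf{Z})]=-\ell(0)$, giving the exact identity $\kl[\mathcal{N}(\mathbf{0},\mathbf{I})\parallel\mathcal{G}(\bth_t,\sigma^2_t)]=\ell(\theta_t)-\ell(0)$. The theorem thus reduces to the scalar decay $\ell(\theta_T)-\ell(0)\le(1+c)^{-T}\bigl(\ell(\theta_0)-\ell(0)\bigr)$.

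Next I would track the scalar orbit $\theta_{t+1}=m(\theta_t)$. Since $\theta_t$ is a norm it is nonnegative, and Part~\ref{lem:M_bounds} gives $0\le\theta_{t+1}\le\rho\theta_t$ with $\rho\in(0,1)$; by induction the entire orbit stays inside $[0,\theta_0]$, the region where all parts of Lemma~\ref{lem:properties} apply. Writing $g(\theta):=\ell(\theta)-\ell(0)$, the computation above shows $g\ge0$ and $g(0)=0$, so $\theta=0$ is a global minimizer of the convex function $\ell$ (Part~\ref{lem:convex}); hence $g$ is convex, nonnegative, and nondecreasing on $[0,\theta_0]$. Convexity with $g(0)=0$ yields the elementary scaling bound $g(\lambda\theta)\le\lambda\,g(\theta)$ for $\lambda\in[0,1]$. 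Combining monotonicity applied to $\theta_{t+1}\le\rho\theta_t$ with this bound at $\lambda=\rho$ gives the one-step contraction
\[
g(\theta_{t+1})\le g(\rho\theta_t)\le\rho\,g(\theta_t).
\]

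Iterating over $t=0,\dots,T-1$ telescopes to $g(\theta_T)\le\rho^T g(\theta_0)$, i.e.
\[
\kl\!\left[\mathcal{N}(\mathbf{0},\mathbf{I})\parallel\mathcal{G}(\bth_T,\sigma^2_T)\right]\le\rho^T\,\kl\!\left[\mathcal{N}(\mathbf{0},\mathbf{I})\parallel\mathcal{G}(\bth_0,\sigma^2_0)\right],
\]
and setting $c:=1/\rho-1>0$, so that $\rho=(1+c)^{-1}$, is precisely the claimed bound. An alternative to this paragraph replaces the convexity scaling by the Polyak--{\L}ojasiewicz inequality of Part~\ref{lem:pl}: paired with an EM sufficient-decrease estimate it produces the same multiplicative decay $g(\theta_{t+1})\le(1-\kappa)g(\theta_t)$. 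I would favor the convexity route here because it delivers a leading constant of exactly $1$ with no additional smoothness assumptions.

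The main obstacle is not in this theorem but upstream: Lemma~\ref{lem:properties} itself, where the shrinkage factor $\rho<1$ and the convexity of $\ell$ must be verified over the stated radius, which is what fixes the admissible range of $\|\bth_0\|$. Given that lemma, the only delicate points inside the present argument are the bookkeeping guaranteeing that the orbit $\{\theta_t\}$ never leaves $[0,\theta_0]$ (so that $m(\theta)\le\rho\theta$ and convexity may be invoked at \emph{every} step), and the clean identification of the additive constant that converts the multivariate KL divergence into the single-variable gap $\ell(\theta_t)-\ell(0)$.
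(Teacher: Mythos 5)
Your proof is correct, and it reaches the theorem by a genuinely different route than the paper. The paper's proof is a sufficient-decrease argument: convexity gives $\ell(\theta_t)-\ell(\theta_{t+1})\ge\ell'(\theta_{t+1})\,(\theta_t-m(\theta_t))$, the derivative identity \eqref{eq:lambda_prime} converts $\theta_t-m(\theta_t)$ into $c_1(\theta_0,d)\,\ell'(\theta_{t+1})$, and the Polyak--Lojasiewicz inequality (Lemma~\ref{lem:properties}, Part~\ref{lem:pl}) then turns $[\ell'(\theta_{t+1})]^2$ into a multiple of the optimality gap, producing the contraction factor $1/(1+c_2)$. You bypass all of that: using only the contraction $m(\theta)\le\rho\theta$ (Part~\ref{lem:M_bounds}) and convexity (Part~\ref{lem:convex}), you note that $g:=\ell-\ell(0)$ coincides with the KL divergence on the hypersurface $\mathcal{S}$, hence is nonnegative with $g(0)=0$; convexity then makes $g$ nondecreasing on $[0,\theta_0]$ and gives the scaling bound $g(\lambda\theta)\le\lambda g(\theta)$ for $\lambda\in[0,1]$, so $g(\theta_{t+1})\le g(\rho\theta_t)\le\rho\,g(\theta_t)$. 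The two bookkeeping points you flagged (the orbit stays in $[0,\theta_0]$ by induction, and the iterates --- including the initialization --- sit on $\mathcal{S}$ so that $\kl=\ell(\theta_t)-\ell(0)$) are needed by both proofs, and you handle them. As for what each route buys: yours is more elementary (no PL inequality, no derivative formula) and in fact yields a strictly sharper constant, since $c_1\rho=q<1$ implies $\rho<1/(1+c_2)$; conceptually, though, your argument derives function-value decay from parameter contraction, whereas the paper's PL route gets function-value decay from the geometry of $\ell$ alone --- and the ingredients it develops here, notably the identity \eqref{eq:lambda_prime}, are reused in the finite-sample analysis of Theorem~\ref{thm:main}, where the iterates leave the hypersurface and your KL-nonnegativity shortcut no longer applies verbatim, so the paper's heavier machinery is not wasted.
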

\begin{proof} First of all notice that
$$
\kl\left[\mathcal{N}(\mathbf{0},\mathbf{I})\parallel\mathcal{G}(\bth_t,\sigma^2_t)\right]=L(\bth_t)-L(\mathbf{0}).
$$
Recall that $L(\bth)=\ell(\theta)$, where $\theta:=\|\bth\|$ (see \eqref{eq:ell_rad}). Thus, from Lemma~\ref{lem:properties} part~\ref{lem:convex}, we have
    \begin{align}
        L(\bth_{t})-L(\bth_{t+1})=\ell(\theta_t)-\ell(\theta_{t+1})&\ge\ell'(\theta_{t+1})(\theta_t-\theta_{t+1})\notag\\
        &=\ell'(\theta_{t+1})(\theta_t-m(\theta_t)).\label{eq:l_diff}
    \end{align}
    From \eqref{eq:lambda_prime}, taking into account that $\theta_{t+1}=m(\theta_t)\le\rho\theta_t<\theta_t$, and by convexity of $\ell$, we have
    \begin{equation}
        \theta_t-m(\theta_t)=\frac{(1-\theta_t^2/d)^2}{1+\theta_t^2/d}\cdot\ell'(\theta_t)\ge\underbrace{\frac{(1-\theta_0^2/d)^2}{1+\theta_0^2/d}}_{c_1(\theta_0,d)}\cdot\ell'(\theta_{t+1}).\label{eq:th_M_diff}
    \end{equation}
    Substituting \eqref{eq:th_M_diff} into \eqref{eq:l_diff} and taking into account Lemma~\ref{lem:properties} part~\ref{lem:pl}, we have
    $$
    \ell(\theta_t)-\ell(\theta_{t+1})\ge c_1(\theta_0,d)\left[\ell'(\theta_{t+1})\right]^2\ge c_2(\theta_0,d,p)\cdot[\ell(\theta_{t+1})-\ell(0)],
    $$
    where $c_2(\theta_0,d,p):=c_1(\theta_0,d)-\left(1-\frac{(2p-1)^2}{2}\right)$. Rearranging terms and subtracting $\ell(0)$ from both sides of the latter inequality we have
    \begin{align*}
    &\ell(\theta_t)-\ell(0)\ge\ell(\theta_{t+1})-\ell(0)+c_2(\theta_0,d,p)\cdot[\ell(\theta_{t+1})-\ell(0)]\\
    \Leftrightarrow\quad&\ell(\theta_{t+1})-\ell(0)\le\frac{1}{1+c_2(\theta_0,d,p)}[\ell(\theta_t)-\ell(0)]
    \end{align*}
    Applying this inequality recursively, we get
    $$
    \ell(\theta_T)-\ell(0)\le\frac{\ell(\theta_0)-\ell(0)}{(1+c_2(\theta_0,d,p))^T}
    $$
    which completes the proof.
\end{proof}
Note that to prove Theorem~\ref{thm:kl_pop} we did not use the convergence of the parameters themselves $(\bth_t,\sigma^2_t)$ to the optimal $(\mathbf{0},1)$. This happened because the PL inequality (Lemma~\ref{lem:properties} part~\ref{lem:pl}) provides a sufficient condition for the rapid convergence of the values of the function $\ell(\theta_t)$ to the optimal value $\ell(0)$. 
\paragraph{Numerical verification.} We verify the exponential convergence numerically. For this, we  consider $d=2$ and evaluate the cases of $p = 0.6, 0.8,$ and $0.9$ with the same starting value of $\bth_0=(0.20, 0.05)$. Larger values of $p$ lead to faster convergence as can be observed in Figure~\ref{fig:kl_conv}.
\begin{figure}
    \centering
    \includegraphics[width=170pt]{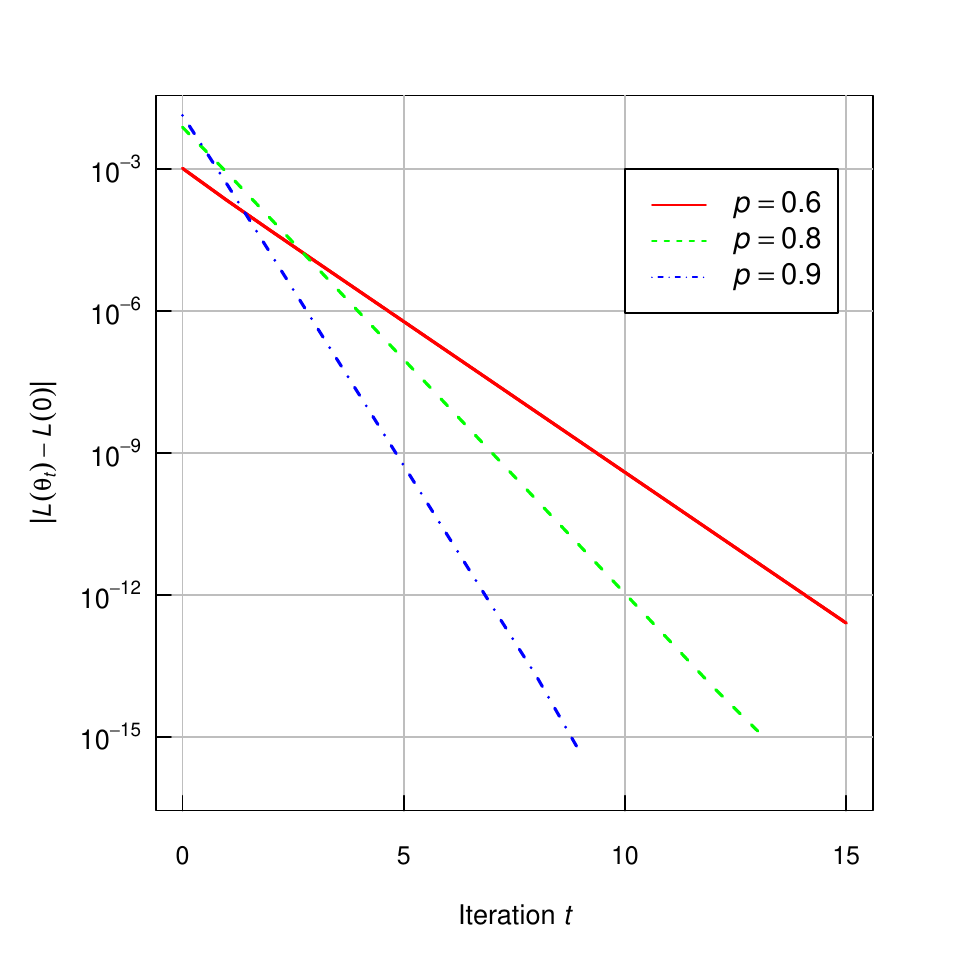}
    \caption{Plot of $\kl[\mathcal{N}(\mathbf{0},\mathbf{I}\parallel\mathcal{G}(\bth_t,\sigma^2_t)$ versus iteration number $t$.}
    \label{fig:kl_conv}
\end{figure}

\subsection{Finite-Sample Analysis} \label{sec:finite_sample}

In this section we show that the EM algorithm takes $O\left(\log(n)\right)$ steps to produce an estimate $(\hat\bth_t,\hat\sigma^2_t)$ such that $D_\text{KL}[\mathcal{N}(\mathbf{0},\mathbf{I})\parallel\mathcal{G}(\hat\bth_t,\hat\sigma^2_t)] = O(d/n)$ with high probability. More precisely, our key result is the following theorem.

\begin{theorem}\label{thm:main}
    Fix $\delta\in(0,1)$, $\alpha\in(0,1/2)$, and let $n\succsim \log^{\frac1{2\alpha}}\left(1/\delta\right)$. Then with any starting point $\bth_0$ such that $\|\boldsymbol{\theta}_0\|<\min\left[\sqrt{d\cdot\frac{2+q-\sqrt{8q+q^2}}{2}},\frac{1}{1+\sqrt{1+1/d}}\right]$ where $q:=1-\frac{(2p-1)^2}{2}$, the EM algorithm produces a sequence of parameter estimates $(\hat\bth_t,\hat\sigma^2_t)$ such that
    \begin{equation}
        D_\text{KL}\left[\mathcal{N}(\mathbf{0},\mathbf{I})\parallel\mathcal{G}(\hat\bth_T,\hat\sigma^2_T)\right]\le c_1\|\bth_0\|^2\frac{\log(1/\delta)}{n}\label{eq:kl_bound}
    \end{equation}
    for $T\ge c_2\log\frac{n}{\log(1/\delta)}$ with probability at least $1-\delta$.
\end{theorem}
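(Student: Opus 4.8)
The plan is to transfer the population contraction of the previous subsection to the empirical iterates via a Balakrishnan--Wainwright--Yu style perturbation argument, and then to convert the resulting parameter bound into the KL bound \eqref{eq:kl_bound} using the local behaviour of $\ell$ near its minimiser. As in the population analysis I would work with the scalar $\hat\theta_t:=\|\hat\bth_t\|$, since the empirical EM map is $\hat\bth_{t+1}=M_n(\hat\bth_t)$ with
\[
M_n(\bth)=\frac1n\sum_{i=1}^n t_p\!\left(\frac{\bth^\top\mathbf{Z}_i}{1-\|\bth\|^2/d}\right)\mathbf{Z}_i,
\]
the empirical analogue of the population operator \eqref{eq:pop_em_op}, and where $\hat\sigma^2_{t+1}$ is pinned to the hypersurface via $\hat\sigma^2_{t+1}=1-\hat\theta_{t+1}^2/d$ up to a negligible $O(1/\sqrt{nd})$ fluctuation coming from $\tfrac1n\sum_i\|\mathbf{Z}_i\|^2$.

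\emph{Step 1 (deterministic contraction).} From Lemma~\ref{lem:properties}, part~\ref{lem:M_bounds}, the population map contracts, $\|M(\bth)\|=m(\|\bth\|)\le\rho\|\bth\|$ with $\rho\in(0,1)$, on the admissible region. Writing the empirical update as a perturbation of the population one gives
\[
\hat\theta_{t+1}=\|M_n(\hat\bth_t)\|\le\|M(\hat\bth_t)\|+\|M_n(\hat\bth_t)-M(\hat\bth_t)\|\le\rho\,\hat\theta_t+\varepsilon_n,
\]
where $\varepsilon_n:=\sup_{\|\bth\|\le\|\bth_0\|}\|M_n(\bth)-M(\bth)\|$. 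Unrolling yields $\hat\theta_T\le\rho^T\|\bth_0\|+\varepsilon_n/(1-\rho)$, and choosing $T\ge c_2\log\frac{n}{\log(1/\delta)}$ with $c_2=\tfrac{1}{2\log(1/\rho)}$ drives the optimisation term down to $\rho^T\|\bth_0\|\le\|\bth_0\|\sqrt{\log(1/\delta)/n}$, so that $\hat\theta_T\lesssim\|\bth_0\|\sqrt{\log(1/\delta)/n}$ \emph{provided} $\varepsilon_n$ is of the same order.

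\emph{Step 2 (uniform concentration --- the crux).} The main work, and what I expect to be the principal obstacle, is bounding $\varepsilon_n$ with probability at least $1-\delta$. Since $t_p$ is bounded by $1$ and Lipschitz, and each $\mathbf{Z}_i$ is sub-Gaussian, for fixed $\bth$ the vector $M_n(\bth)-M(\bth)$ is an average of independent centred terms controllable by a vector Bernstein / sub-Gaussian tail bound; a covering-number argument over the ball $\{\|\bth\|\le\|\bth_0\|\}$, together with the Lipschitz dependence of $M_n(\cdot)-M(\cdot)$ on $\bth$, then upgrades this to a uniform bound. This is precisely where the condition $n\succsim\log^{1/(2\alpha)}(1/\delta)$ is used: it forces the sub-exponential remainder in the tail bound to be dominated by the leading sub-Gaussian $\sqrt{\log(1/\delta)/n}$ term. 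The two delicate points are (i) exhibiting the multiplicative factor $\|\bth_0\|$ in $\varepsilon_n$ --- i.e.\ showing that the fluctuation of $M_n-M$ that actually feeds the contraction scales with the radius of the ball rather than leaving an irreducible $\sqrt{d/n}$ floor --- and (ii) keeping the argument of $t_p$ and the empirical map contractive, which is where the constraint $\|\bth_0\|<\tfrac{1}{1+\sqrt{1+1/d}}$ enters.

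\emph{Step 3 (from parameters to KL).} Finally I would convert $\hat\theta_T$ into the KL bound. As in the proof of Theorem~\ref{thm:kl_pop}, $D_\text{KL}[\mathcal{N}(\mathbf{0},\mathbf{I})\parallel\mathcal{G}(\hat\bth_T,\hat\sigma^2_T)]=\ell(\hat\theta_T)-\ell(0)$, and since $\theta=0$ minimises the convex function $\ell$ (Lemma~\ref{lem:properties}, part~\ref{lem:convex}) with $\ell'(0)=0$, a second-order Taylor expansion gives $\ell(\hat\theta_T)-\ell(0)\lesssim\hat\theta_T^2$ for small $\hat\theta_T$. Substituting the bound of Step~1 yields $D_\text{KL}\lesssim\|\bth_0\|^2\,\frac{\log(1/\delta)}{n}$, which is \eqref{eq:kl_bound}. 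A remaining loose end is to check that every iterate up to time $T$ stays inside the admissible ball, so that the contraction and concentration estimates apply at each step; this follows inductively because $\rho<1$ and $\varepsilon_n$ is small under the stated sample-size condition.
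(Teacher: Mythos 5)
Your Steps 1--2 essentially reproduce the paper's route to parameter convergence (the paper's Lemma~\ref{lem:theta_convergence} combined with the perturbation bound of Lemma~\ref{lem:perturb}; the paper gets the radius-proportional uniform bound via symmetrization and Ledoux--Talagrand contraction rather than covering numbers, but that is a cosmetic difference, and you correctly flag point (i) as the crux). The genuine gap is in Step 3. Your identity $\kl[\mathcal{N}(\mathbf{0},\mathbf{I})\parallel\mathcal{G}(\hat\bth_T,\hat\sigma^2_T)]=\ell(\hat\theta_T)-\ell(0)$ is false for the sample iterates: by \eqref{eq:sigma_sq_hat_upd}, $\hat\sigma^2_T=\frac{1}{nd}\sum_i\|\mathbf{Z}_i\|^2-\hat\theta_T^2/d$, so $(\hat\bth_T,\hat\sigma^2_T)$ does \emph{not} lie on the hypersurface $\mathcal{S}$, and $\ell$ only represents the KL divergence on $\mathcal{S}$. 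You dismiss this as a ``negligible $O(1/\sqrt{nd})$ fluctuation,'' but that is exactly the wrong order of magnitude: propagating an $O(\sqrt{\log(1/\delta)/(nd)})$ perturbation of $\sigma^2$ through $\mathcal{L}$ with only a generic $O(1)$ bound on $\partial\mathcal{L}/\partial\sigma^2$ contaminates the KL by $O(\sqrt{\log(1/\delta)/(nd)})$, i.e.\ an $n^{-1/2}$ term that swamps the claimed $O(\|\bth_0\|^2\log(1/\delta)/n)$ bound. Your argument as written therefore proves only an $n^{-1/2}$ KL rate, not the $n^{-1}$ rate of \eqref{eq:kl_bound}. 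The paper closes precisely this hole in \eqref{eq:kl_main2}--\eqref{eq:kl_main4}: a mean-value expansion in $\sigma^2$ together with the observation that $\partial\mathcal{L}/\partial\sigma^2$ evaluated at the iterate is itself of order $\hat\theta_t^2+\epsilon$ (because the gradient of $\mathcal{L}$ vanishes at the optimum $(\mathbf{0},1)$), so the off-hypersurface discrepancy is only $\precsim\hat\theta_t^2\sqrt{\log(1/\delta)/(nd)}+\log(1/\delta)/(nd)$. Establishing that derivative bound requires the lower bound \eqref{eq:LB} on $m(\theta)$ from Appendix~\ref{app:lowerbound}, an ingredient entirely absent from your proposal.

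A secondary, related error: you attribute the constraint $\|\bth_0\|<\frac{1}{1+\sqrt{1+1/d}}$ to ``keeping the argument of $t_p$ and the empirical map contractive.'' Contractivity is already guaranteed by the other term in the $\min$, namely $\|\bth_0\|<\sqrt{d\cdot\frac{2+q-\sqrt{8q+q^2}}{2}}$ (Lemma~\ref{lem:properties}, parts 1--2). The second constraint exists to place the iterates in the range where the lower bound \eqref{eq:LB} on $m(\theta)$ holds --- which is what makes both $\ell'(\theta)\precsim\theta$ and the $\partial\mathcal{L}/\partial\sigma^2$ bound work. Misreading this constraint is a symptom of the same omission: your proof never uses \eqref{eq:LB}, and without it the finite-sample KL bound at the stated rate cannot be completed. (Your on-hypersurface Taylor step $\ell(\hat\theta_T)-\ell(0)\precsim\hat\theta_T^2$ is fine in principle, since $\ell''$ can be shown bounded near $0$; it is only the off-hypersurface correction that breaks.)
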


    Figure~\ref{fig:kl_sample} reports on empirical verification of the bound \eqref{eq:kl_bound}. As we can see, the KL divergence between $\mathcal{N}(\mathbf{0},\mathbf{I})$ and $\mathcal{G}(\hat\bth_T,\hat\sigma^2_T)$ indeed drops at a rate of $O(1/n)$.
\begin{figure}
    \centering
    \includegraphics[width=170pt]{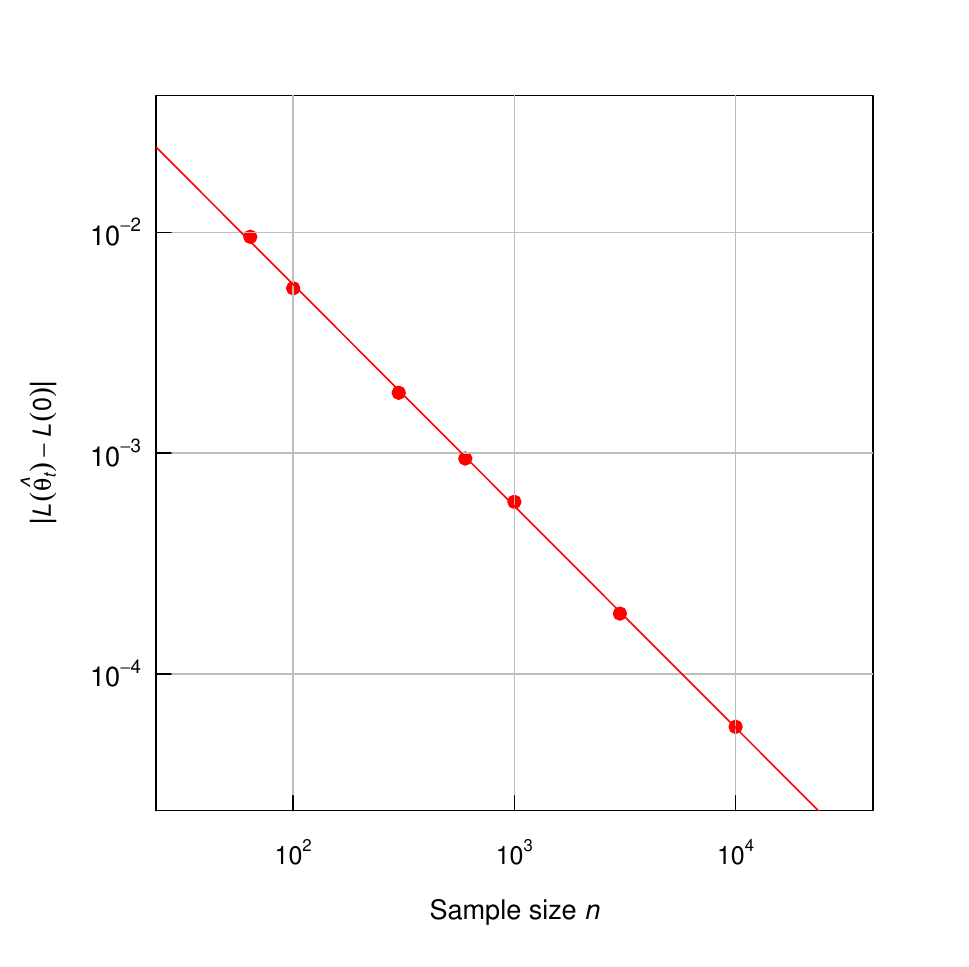}
    \caption{Plot of the KL divergence $D_\text{KL}\left[\mathcal{N}(\mathbf{0},\mathbf{I})\parallel\mathcal{G}(\hat\bth_T,\hat\sigma^2_T)\right]$ versus the sample size $n$. We consider the case $d=2$, and use the starting value $\bth_0=(0.20, 0.05)$. For each $n$ we generate a sample of size $n$ from $\mathcal{N}(\mathbf{0},\mathbf{I})$ and use the EM algorithm to fit the balanced two-component mixture $\mathcal{G}(\bth,\sigma^2)$ to it. We repeat this process 10 times and report the average value of the KL divergence across those 10 runs. The slope of the fitted line is $-1.004$.}
    \label{fig:kl_sample}
\end{figure}

Denote the sample-based averaged log-likelihood in \eqref{eq:mle} as
\begin{equation}
    \mathcal{L}_n(\bth,\sigma^2):=\frac1n\sum_{i=1}^n\log f(\mathbf{Z}_i;\bth,\sigma^2).\label{eq:sample_log_l}
\end{equation}
When it is maximized by the EM algorithm, the parameter updates can also be written explicitly (as in the case of the population EM, see Appendix~\ref{app:sample_em_upd}):
    \begin{align}
        \hat{\bth}_{t+1}&=\frac1n\sum_{i=1}^n t_p\left(\frac{\hat\bth_t^\top\mathbf{Z}_i}{\sfrac{\sum_{i=1}^n\|\mathbf{Z}_i\|^2}{nd}-\sfrac{\|\hat\bth_t\|^2}{d}}\right)\mathbf{Z}_i,\label{eq:theta_hat_upd}\\
        \hat{\sigma}^2_{t+1}&=\frac{\sum_{i=1}^n\|\mathbf{Z}_i\|^2}{nd}-\frac{\|\hat\bth_{t+1}\|^2}{d}.\label{eq:sigma_sq_hat_upd}
    \end{align}
As we can see, the evolution of the sample-based estimate $\hat\bth_t$ is governed by the operator $M_n:\mathbb{R}^d\to\mathbb{R}^d$ defined as
$$
    M_n(\bth):=\frac1n\sum_{i=1}^n t_p\left(\frac{\bth^\top\mathbf{Z}_i}{\sfrac{\sum_{i=1}^n\|\mathbf{Z}_i^2\|}{nd}-\sfrac{\|\bth\|^2}d}\right)\mathbf{Z}_i
$$
Let $Z_i:=\sfrac{\bth^\top\mathbf{Z}_i}{\|\bth\|}$, then $Z_1,\ldots,Z_n\,\,{\stackrel{\text{iid}}{\sim}}\,\,\mathcal{N}(0,1)$. By analogy with the population case, we can move from the operator $M_n$ to the function of one variable $m_n(\theta)$ given by
$$   m_n(\theta):=\frac1n\sum_{i=1}^n t_p\left(\frac{\theta Z_i}{\sfrac{U_{nd}}{nd}-\sfrac{\theta^2}{d}}\right)Z_i,
$$
where $U_{nd}:=\sum_{i=1}^n\|\mathbf{Z}_i\|^2$ is a chi-square random variable with $nd$ degrees of freedom. 

Under the assumptions of Theorem~\ref{thm:main}, the following perturbation bound relates the sample-based EM operator to its population-level counterpart:
\begin{equation}
    \Pr\left[\sup_{\theta\in[0,r]}|{m}_n(\theta)-{m}(\theta)|\le cr\sqrt{\frac{\log(1/\delta)}{n}}\right]\ge1-\delta.\label{eq:perturb}
\end{equation}
The proof idea of the bound~\eqref{eq:perturb} is due to \cite{dwivedi2020singularity} and  \cite{DBLP:conf/aistats/DwivediHKWJ020}, and utilizes conventional methods for establishing Rademacher complexity bounds. Initially, we employ symmetrization using Rademacher variables and apply the Ledoux-Talagrand contraction inequality. Subsequently, we use findings related to the chi-square distribution and carry out computations related to Chernoff bounds to achieve the intended outcome. Full proof is given in Appendix~\ref{app:perturb}.

Thanks to the strict contractivity of the population EM operator (Lemma~\ref{lem:properties} part~\ref{lem:M_bounds}) and the perturbation bound \eqref{eq:perturb}, we can show that the sequence of EM iterates $\hat{\boldsymbol{\theta}}_{t+1}=M_n(\hat{\boldsymbol{\theta}}_t)$ satisfies
\begin{equation}
\|\hat{\boldsymbol{\theta}}_T\|\le \frac{c\|\boldsymbol{\theta}_0\|}{1-\rho}\sqrt{\frac{\log(1/\delta)}{n}}\label{eq:theta_conv}
\end{equation}
for $T\ge\log_{1/\rho}\left((1-\rho)\sqrt{\frac{n}{\log(1/\delta)}}\right)$ with probability at least $1-\delta$. The proof is given in Appendix~\ref{app:theta_convergence}. This is a new result. In the work of \cite{dwivedi2020singularity}, a similar result was obtained for the case when the variance $\sigma^2$ is known and is not learned from the data. The work by \cite{DBLP:conf/aistats/DwivediHKWJ020} considers the case where $\sigma^2$ is learned (along with $\bth$), but a mixture is balanced (i.e. when $p=1/2$ in \eqref{eq:gmm}).

We need the following lower bound for $m(\theta)$ to upperbound the derivative $\ell'(\theta)$ later: for $\theta\in\left[0,\left(\sqrt2+\frac1{\sqrt2d}\right)^{-1}\right]$, we have (see Appendix~\ref{app:lowerbound})
    \begin{equation}
    m(\theta)\ge4p(1-p)\left(1-\frac{4\theta^2}{(1-\theta^2/d)^2}\right)\theta.\label{eq:LB}
    \end{equation}
Finally, we are ready to prove our key result on convergence in KL distance for the finite-sample case.

\begin{proof}[Proof of Theorem~\ref{thm:main}] 

    First of all, notice that
    \begin{equation}
        \kl\left[\mathcal{N}(\mathbf{0},\mathbf{I})\parallel \mathcal{G}(\hat\bth_t,\hat\sigma^2_t)\right]=\E_{\mathbf{Z}\sim\mathcal{N}(\mathbf{0},\mathbf{I})}\left[\log\frac{\phi(\mathbf{Z})}{f(\mathbf{Z};\hat\bth_t,\hat\sigma^2_t)}\right]=\mathcal{L}(\mathbf{0},\mathbf{I})-\mathcal{L}(\hat\bth_t,\hat\sigma^2_t),\label{eq:kl_main1}
    \end{equation}
    where $\mathcal{L}(\bth,\sigma^2)$ is the population-level expected log-likelihood \eqref{eq:e_log_l}. Note that the first term in the update rule for $\sigma^2$ in \eqref{eq:sigma_sq_hat_upd} is not exactly one (but concentrates around one since $\sum_{i=1}^n\|\mathbf{Z}_i\|^2\sim\chi^2_{nd}$), and thus  the point $(\hat\bth_t,\hat\sigma^2_t)$ does not lie on the hypersurface $\mathcal{S}$ defined by \eqref{eq:hypersurf}. This does not allow us to move from $\mathcal{L}(\hat\bth_t,\hat\sigma^2_t)$ directly to $\mathcal{L}\left(\hat\bth_t,1-\|\hat\bth_t\|^2/d\right)$. However, let us consider their difference
    \begin{equation}
        \mathcal{L}\left(\hat\bth_t,1-\|\hat\bth_t\|^2/d\right)-\mathcal{L}\left(\hat\bth_t,\hat\sigma^2_t\right) =\frac{\partial\mathcal{L}}{\partial\sigma^2}\left(\hat\bth_t,\xi\right)\left(\frac{U}{nd}-1\right),\label{eq:kl_main2}
    \end{equation}
    where $U:=\sum_{i=1}^n\|\mathbf{Z}_i\|^2\sim\chi^2_{nd}$, and $\xi$ is a point between $\frac{U}{nd}-\frac{\hat\bth_t^2}{d}$ and $1-\frac{\hat\bth_t^2}{d}$. Define the event $\mathcal{A}_\epsilon:=\left\{\left|\frac{U}{nd}-1\right|\le\epsilon\right\}$, and notice that $\Pr[\mathcal{A}_\epsilon]\ge1-2e^{-\epsilon^2nd/8}$ by standard chi-square tail bounds. Conditional on $\mathcal{A}_\epsilon$, the point $\xi$ is in the interval $1-\|\hat\bth_t\|^2/d\pm\epsilon$. Thus, by direct calculation and elementary inequalities,
    \begin{align}
        &\left|\frac{\partial\mathcal{L}}{\partial\sigma^2}\left(\hat\bth_t,\xi\right)\right|=\left|\frac{d}{2\xi}-\frac{d+\hat\bth^2_t}{2\xi^2}+\frac{1}{\xi^2}\E_{\mathbf{Z}\sim\mathcal{N}(\mathbf{0},\mathbf{I})}\left[\tanh\left(\frac{\hat\bth_t^\top \mathbf{Z}}{\xi}\right)\hat\bth_t^\top\mathbf{Z}\right]\right|\notag\\
        &\le\left|\frac{d}{2\left(1-\frac{\hat\theta_t^2}d\right)}-\frac{d+\hat\theta^2_t}{2\left(1-\frac{\hat\theta_t^2}{d}\right)^2}+\frac{1}{\left(1-\frac{\hat\theta_t^2}{d}\right)^2}\E\left[\tanh\left(\frac{\hat\theta_t{Z}}{1-\frac{\hat\theta^2}{d}}\right)\hat\theta_t{Z}\right]\right|+c\epsilon\notag\\
        &=\frac{\hat\theta_t\left[\hat\theta_t-m(\hat\theta_t)\right]}{(1-\hat\theta^2_t/d)^2}+c\epsilon\le\frac{\left(1-4p(1-p)\left\{1-\frac{4\theta^2_0}{(1-\theta^2_0/d)^2}\right\}\right)\cdot\hat\theta_t^2}{(1-\theta_0^2/d)^2}\precsim\hat\theta^2_t+\epsilon,\label{eq:kl_main3}
    \end{align}
    where $\hat\theta_t=\|\hat{\bth}_t\|$, and  we used the lower bound for $m(\theta)$ from \eqref{eq:LB}.
    Solving $2e^{-\epsilon^2(nd)/8}=\delta$ for $\epsilon$, and then combining \eqref{eq:kl_main2} and \eqref{eq:kl_main3}, we have
    \begin{equation}
        \left|\mathcal{L}\left(\hat\bth_t,1-\|\hat\bth_t\|^2/d\right)-\mathcal{L}\left(\hat\bth_t,\hat\sigma^2_t\right)\right|\precsim\hat\theta^2_t\sqrt{\frac{\log\frac1\delta}{nd}}+\frac{\log\frac1\delta}{nd}\label{eq:kl_main4}
    \end{equation}
    with probability at least $1-\delta$. 
    Thus, instead of $\mathcal{L}(\hat\bth_t,\hat\sigma^2_t)$ we can focus on $\mathcal{L}(\hat\bth_t,1-\|\hat\bth_t\|^2/{d})=:-L(\hat\bth_t)$. 

By radiality of $L(\bth)$ (see \eqref{eq:ell_rad}) and convexity of $\ell(\theta)$ (Lemma~\ref{lem:properties} part~\ref{lem:convex}), we have
\begin{equation}
    L(\hat{\boldsymbol{\theta}}_t)-L(\mathbf{0})=\ell(\hat\theta_t)-\ell(0)\le\ell'(\hat\theta_t)\cdot\hat\theta_t,\label{eq:kl_main5}
\end{equation}
By direct calculation (see \eqref{eq:lambda_prime}) and using \eqref{eq:LB}, the derivative $\ell'(\hat\theta_T)$ can be bounded as
\begin{align}
    \ell'(\hat\theta_t)&=\frac{1+\hat\theta_t^2/d}{(1-\hat\theta^2_t/d)^2}\cdot(\hat\theta_t-m(\hat\theta_t))\notag\\
    &\le\frac{1+\theta_0^2/d}{(1-\theta_0^2/d)^2}\cdot\left(1-4p(1-p)\left\{1-\frac{4\theta^2_0}{(1-\theta^2_0/d)^2}\right\}\right)\cdot\hat\theta_t.\label{eq:kl_main6}
\end{align}
From \eqref{eq:kl_main5}, \eqref{eq:kl_main6}, we obtain
$$
L(\hat\bth_t)-L(\mathbf{0})\precsim\hat\theta_t^2\label{eq:kl_main7}
$$
Let $n\succsim \log^{\frac1{2\alpha}}\left(1/\delta\right)$. Then by \eqref{eq:theta_conv}, $\theta_T\precsim\|\bth_0\|\sqrt{\frac{\log(1/\delta)}{n}}$ for $T\succsim\log{\frac{n}{\log(1/\delta)}}$ with probability at least $1-\delta$. Hence, for such $T$ and $n$, combining \eqref{eq:kl_main1}, \eqref{eq:kl_main4}, and \eqref{eq:kl_main7} we have
$$
\kl\left[\mathcal{N}(\mathbf{0},\mathbf{I})\parallel \mathcal{G}(\hat\bth_T,\hat\sigma^2_T)\right]\precsim\|\bth_0\|^2\frac{\log(1/\delta)}{n}
$$
with probability at least $1-\delta$, which completes the proof. 
\end{proof}

\section{Proof of Main Results}\label{sec:main_proof}
We start with an auxiliary result on binary classification.  Let $(\mathbf{X},Y)$ be a pair of random variables taking their respective values from $\mathbb{R}^d$ and $\{-1,+1\}$. Define a function
$$
\eta(\mathbf{x}):=\Pr[Y=+1\mid\mathbf{X}=\mathbf{x}],
$$
which is sometimes called the \emph{a posteriori probability}. Any function $h:\mathbb{R}^d\to\{-1,+1\}$ defines a \emph{classifier}. The misclassification error rate of $h$ is 
$$
\Err[h]:=\Pr[h(\mathbf{X})\ne Y],
$$
which is minimized by the Bayes classifier \cite[Theorem 2.1]{DBLP:books/sp/DevroyeGL96}
$$
h^\ast(\mathbf{x})=\begin{cases}
    +1\quad&\text{if }\eta(\mathbf{x})>1-\eta(\mathbf{x})\\
    -1&\text{otherwise}.
\end{cases}
$$
The misclassification error rate of the Bayes classifier $\Err[h^\ast]$ is referred to as Bayes risk. In the case of balanced classes, i.e. $\Pr[Y=-1]=\Pr[Y=+1]=1/2$, if the class-conditional densities $f_-(\mathbf{x})$ and $f_+(\mathbf{x})$ are not known and are approximated by $\tilde{f}_-(\mathbf{x})$ and $\tilde{f}_+(\mathbf{x})$, respectively, it is natural to use the plug-in classifier
$$
h(\mathbf{x})=\begin{cases}
    +1\quad&\text{if }\tilde{f}_+(\mathbf{x})>\tilde{f}_-(\mathbf{x})\\
    -1&\text{otherwise}.
\end{cases}
$$
to approximate the Bayes classifier. The next lemma states that if $\tilde{f}_+(\mathbf{x})$ is close to $f_+(\mathbf{x})$ and $\tilde{f}_-(\mathbf{x})$ is close to $f_-(\mathbf{x})$ in $L_1$-sense, then misclassification error of $h(\mathbf{x})$ is near the Bayes risk.

\begin{lemma}\label{lem:excess}
Assume that the true class-conditional densities $f_-(\mathbf{x})$, $f_+(\mathbf{x})$ are approximated by the densities $\tilde{f}_-(\mathbf{x})$, $\tilde{f}_+(\mathbf{x})$, respectively. Let $\Pr[Y=+1]=\Pr[Y=-1]=\frac12$. Then for the plug-in classifier $h$ defined above
$$
\Err[h]-\Err[h^\ast]\le\frac12\int_{\mathbb{R}^d}\left|f_-(\mathbf{x})-\tilde{f}_-(\mathbf{x})\right|d\mathbf{x}+\frac12\int_{\mathbb{R}^d}\left|f_+(\mathbf{x})-\tilde{f}_+(\mathbf{x})\right|d\mathbf{x}.
$$
\end{lemma}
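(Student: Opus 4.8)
The plan is to use the standard representation of a classifier's excess risk in terms of the a posteriori probability $\eta$, and then to translate disagreement between the plug-in rule $h$ and the Bayes rule $h^\ast$ into an $L_1$ discrepancy between the densities. First I would record the form of $\eta$ in the balanced case: by Bayes' rule with $\Pr[Y=\pm1]=\frac12$, one has $\eta(\mathbf{x})=\frac{f_+(\mathbf{x})}{f_+(\mathbf{x})+f_-(\mathbf{x})}$, so that $2\eta(\mathbf{x})-1=\frac{f_+(\mathbf{x})-f_-(\mathbf{x})}{f_+(\mathbf{x})+f_-(\mathbf{x})}$, while the marginal density of $\mathbf{X}$ is $\frac12(f_+(\mathbf{x})+f_-(\mathbf{x}))$. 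In these terms $h^\ast$ predicts $+1$ exactly when $f_+(\mathbf{x})>f_-(\mathbf{x})$, and $h$ predicts $+1$ exactly when $\tilde f_+(\mathbf{x})>\tilde f_-(\mathbf{x})$.

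Next I would invoke the classical excess-risk identity underlying Theorem~2.1 of \cite{DBLP:books/sp/DevroyeGL96}, valid for any classifier $h$:
$$
\Err[h]-\Err[h^\ast]=\E\!\left[|2\eta(\mathbf{X})-1|\cdot\mathbf{1}\{h(\mathbf{X})\ne h^\ast(\mathbf{X})\}\right].
$$
(This follows by noting that the conditional excess error at $\mathbf{x}$ is $0$ when $h(\mathbf{x})=h^\ast(\mathbf{x})$ and equals $|2\eta(\mathbf{x})-1|$ otherwise.) Rewriting the expectation as an integral against the marginal density of $\mathbf{X}$ and substituting the expressions above, the factor $f_++f_-$ cancels, yielding
$$
\Err[h]-\Err[h^\ast]=\frac12\int_{\mathbb{R}^d}\left|f_+(\mathbf{x})-f_-(\mathbf{x})\right|\,\mathbf{1}\{h(\mathbf{x})\ne h^\ast(\mathbf{x})\}\,d\mathbf{x}.
$$
It then remains to control the integrand on the disagreement set $\{h\ne h^\ast\}$.

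The key step — and the only nontrivial one — is a pointwise bound on that set. If $h(\mathbf{x})\ne h^\ast(\mathbf{x})$, then $f_+-f_-$ and $\tilde f_+-\tilde f_-$ have opposite signs; for instance, when $f_+(\mathbf{x})>f_-(\mathbf{x})$ but $\tilde f_+(\mathbf{x})\le\tilde f_-(\mathbf{x})$, writing $f_+-f_-=(f_+-\tilde f_+)+(\tilde f_+-\tilde f_-)+(\tilde f_--f_-)$ and discarding the nonpositive middle term gives
$$
|f_+(\mathbf{x})-f_-(\mathbf{x})|\le|f_+(\mathbf{x})-\tilde f_+(\mathbf{x})|+|f_-(\mathbf{x})-\tilde f_-(\mathbf{x})|,
$$
with the symmetric argument handling the opposite sign pattern. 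Substituting this bound into the integral and then enlarging the domain from $\{h\ne h^\ast\}$ to all of $\mathbb{R}^d$ separates the two error terms and produces exactly the claimed inequality. I expect no genuine obstacle: the entire content sits in this elementary sign/triangle-inequality step, while everything preceding it is a direct rewriting of the standard excess-risk formula specialized to the balanced two-class setting.
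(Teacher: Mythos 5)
Your proposal is correct and follows essentially the same route as the paper's own proof: both start from the excess-risk identity $\Err[h]-\Err[h^\ast]=\E\big[|2\eta(\mathbf{X})-1|\,\mathbb{I}\{h(\mathbf{X})\ne h^\ast(\mathbf{X})\}\big]$, rewrite $|2\eta(\mathbf{x})-1|f(\mathbf{x})$ in terms of $\tfrac12|f_+(\mathbf{x})-f_-(\mathbf{x})|$, and on the disagreement set insert $\pm\tilde f_\pm$ and discard the term whose sign is pinned down by $h(\mathbf{x})\ne h^\ast(\mathbf{x})$, before enlarging the integration domain to all of $\mathbb{R}^d$. The decomposition you use is, term for term, the same add-and-subtract step the paper performs in its case analysis over $\mathcal{A}\cap\mathcal{B}$ and $\mathcal{A}^c\cap\mathcal{B}$.
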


In the context of over-specified MDA that we consider in this work, the true class-conditional distributions are $\mathcal{N}(\pm\boldsymbol{\mu},\mathbf{I})$ and they are approximated by unbalanced Gaussian mixtures
$\mathcal{G}(\pm\boldsymbol{\mu},\bth,\sigma^2)$ defined in \eqref{eq:gmm_class}. The sequences of population-level and sample-based MDA classifiers fit by the EM algorithm are specified in \eqref{eq:mda_clf_pop} and \eqref{eq:mda_clf}, respectively. 
Notice that the Bayes classifier in this case is the LDA classifier 
$$
h^\ast(\mathbf{x})=\begin{cases}
    1\quad&\text{if }\phi(\mathbf{x}-\boldsymbol{\mu})>\phi(\mathbf{x}+\boldsymbol{\mu})\\
    0&\text{otherwise},
\end{cases}
$$
and its error rate (i.e., Bayes risk) is 
\begin{equation}
\Err[h^\ast]= \Phi(-\|\boldsymbol{\mu}\|),\label{eq:tv0}
\end{equation}
see Section 1.2.1 in \cite{Wainwright_2019}.

\vspace{10pt}

\subsection{Proof of Theorem~\ref{thm:mda_pop}}
\begin{proof}
    Since $\sigma^2_t$ is fully determined by $\bth_t$ \eqref{eq:sigma_sq_hat_upd}, we will write $\mathcal{G}(\boldsymbol{\mu},\bth_t)$ instead of $\mathcal{G}(\boldsymbol{\mu},\bth_t,\sigma^2_t)$. At the population level, we can assume that $\boldsymbol{\mu}$ is known since $\boldsymbol{\mu}=\E[\mathbf{X}Y]$. By Lemma~\ref{lem:excess}, we have
    \begin{equation}
    \Err[h_t]-\Err[h^\ast]\le\tv[\mathcal{N}(-\boldsymbol{\mu},\mathbf{I}),\mathcal{G}(-{\boldsymbol{\mu}},{\bth}_t)]+\tv[\mathcal{N}(\boldsymbol{\mu},\mathbf{I}),\mathcal{G}({\boldsymbol{\mu}},{\bth}_t)].\label{eq:tv1_pop}
    \end{equation}
    We will bound the second term, and the first term can be bounded analogously. 
    By Pinsker's inequality, and from Theorem~\ref{thm:kl_pop}, we have
    \begin{align}
        \tv\left[\mathcal{N}(\boldsymbol\mu,\mathbf{I}),\mathcal{G}({\boldsymbol\mu},\bth_T)\right]&\le\sqrt{\frac12\kl\left[\mathcal{N}(\boldsymbol{\mu},\mathbf{I})\parallel\mathcal{G}(\boldsymbol\mu,\bth_T)\right]}\notag\\
        &=\sqrt{\frac12\E_{\mathbf{X}\sim\mathcal{N}(\boldsymbol\mu,\mathbf{I})}\left[\log\frac{\phi\left(\mathbf{X}-\boldsymbol{\mu}\right)}{f(\mathbf{X}-\boldsymbol\mu;\bth_T,1-\sfrac{\|\bth_T\|^2}{d})}\right]}\notag\\
        &=\sqrt{\frac12\E_{\mathbf{Z}\sim\mathcal{N}(\mathbf{0},\mathbf{I})}\left[\log\frac{\phi\left(\mathbf{Z}\right)}{f(\mathbf{Z};\bth_T,1-\sfrac{\|\bth_T\|^2}{d})}\right]}\notag\\
        &=\sqrt{\frac12\kl\left[\mathcal{N}(\mathbf{0},\mathbf{I})\parallel\mathcal{G}(\mathbf{0},\bth_T)\right]}\le\epsilon\label{eq:tv3_pop}
    \end{align}
    for $T\succsim\log\frac1\epsilon$. Combining \eqref{eq:tv0}, \eqref{eq:tv1_pop}, and  \eqref{eq:tv3_pop}, the statement of the theorem follows.
\end{proof}

\subsection{Proof of Theorem~\ref{thm:mda}}
\begin{proof}
    For the sake of brevity, we denote $\boldsymbol{\nu}:=(\bth,\sigma^2)$, and we will write $\mathcal{G}(\boldsymbol{\mu},\boldsymbol{\nu})$ instead of $\mathcal{G}(\boldsymbol{\mu},\bth,\sigma^2)$. By Lemma~\ref{lem:excess}, we have
    \begin{equation}
    \Err[h_t]-\Err[h^\ast]\le\tv[\mathcal{N}(-\boldsymbol{\mu},\mathbf{I}),\mathcal{G}(-\hat{\boldsymbol{\mu}},\hat{\boldsymbol{\nu}}_t)]+\tv[\mathcal{N}(\boldsymbol{\mu},\mathbf{I}),\mathcal{G}(\hat{\boldsymbol{\mu}},\hat{\boldsymbol{\nu}}_t)].\label{eq:tv1}
    \end{equation}
    We will bound the second term, and the first term can be bounded analogously. By triangle inequality,
    \begin{equation}
    \tv\left[\mathcal{N}(\boldsymbol\mu,\mathbf{I}),\mathcal{G}(\hat{\boldsymbol\mu},\hat{\boldsymbol{\nu}}_T)\right]\le    \tv\left[\mathcal{N}(\boldsymbol\mu,\mathbf{I}),\mathcal{G}({\boldsymbol\mu},\hat{\boldsymbol{\nu}}_T)\right]+    \tv\left[\mathcal{G}(\boldsymbol\mu,\hat{\boldsymbol{\nu}}_T),\mathcal{G}(\hat{\boldsymbol\mu},\hat{\boldsymbol{\nu}}_T)\right]\label{eq:tv2}    
    \end{equation}
    Next, by Pinsker's inequality, and from Theorem~\ref{thm:main}, assuming $n\succsim\log^{\frac1{2\alpha}}(1/\delta)$ we have
    \begin{align}
        \tv\left[\mathcal{N}(\boldsymbol\mu,\mathbf{I}),\mathcal{G}({\boldsymbol\mu},\hat{\boldsymbol{\nu}}_T)\right]&\le\sqrt{\frac12\kl\left[\mathcal{N}(\boldsymbol{\mu},\mathbf{I})\parallel\mathcal{G}(\boldsymbol\mu,\hat{\boldsymbol{\nu}}_T)\right]}\notag\\&=\sqrt{\frac12\E_{\mathbf{X}\sim\mathcal{N}(\boldsymbol\mu,\mathbf{I})}\left[\log\frac{\phi\left(\mathbf{X}-\boldsymbol{\mu}\right)}{f(\mathbf{X}-\boldsymbol\mu;\hat{\boldsymbol{\nu}}_T)}\right]}\notag\\&=\sqrt{\frac12\E_{\mathbf{Z}\sim\mathcal{N}(\mathbf{0},\mathbf{I})}\left[\log\frac{\phi\left(\mathbf{Z}\right)}{f(\mathbf{Z};\hat{\boldsymbol{\nu}}_T)}\right]}\notag\\&=\sqrt{\frac12\kl\left[\mathcal{N}(\mathbf{0},\mathbf{I})\parallel\mathcal{G}(\mathbf{0},\hat{\boldsymbol{\nu}}_T)\right]}\precsim\|\bth_0\|\sqrt{\frac{\log(1/\delta)}{n}}\label{eq:tv3}
    \end{align}
    for $T\succsim\log\frac{n}{\log(1/\delta)}$ with probability at least $1-\delta$. By properties of a metric,
    \begin{align}
        &\tv\left[\mathcal{G}(\boldsymbol\mu,\hat{\boldsymbol{\nu}}_T),\mathcal{G}(\hat{\boldsymbol\mu},\hat{\boldsymbol{\nu}}_T)\right]\notag\\
        &=\tv\left[\frac12\mathcal{N}(\boldsymbol\mu-\hat\bth_T,\hat\sigma^2_T\mathbf{I})+\frac12\mathcal{N}(\boldsymbol\mu+\hat\bth_T,\hat\sigma^2_T\mathbf{I}),\frac12\mathcal{N}(\hat{\boldsymbol\mu}-\hat\bth_T,\hat\sigma^2_T\mathbf{I})+\frac12\mathcal{N}(\hat{\boldsymbol\mu}+\hat\bth_T,\hat\sigma^2_T\mathbf{I})\right]\notag\\
        &\le\frac12\tv\left[\mathcal{N}(\boldsymbol\mu-\hat\bth_T,\hat\sigma^2_T\mathbf{I}),\mathcal{N}(\hat{\boldsymbol\mu}-\hat\bth_T,\hat\sigma^2_T\mathbf{I})\right]+\frac12\tv\left[\mathcal{N}(\boldsymbol\mu+\hat\bth_T,\hat\sigma^2_T\mathbf{I}),\mathcal{N}(\hat{\boldsymbol\mu}+\hat\bth_T,\hat\sigma^2_T\mathbf{I})\right]\notag
        \end{align}
        \begin{align}
        &\le\frac12\sqrt{\frac12\kl\left[\mathcal{N}(\boldsymbol\mu-\hat\bth_T,\hat\sigma^2_T\mathbf{I})\parallel\mathcal{N}(\hat{\boldsymbol\mu}-\hat\bth_T,\hat\sigma^2_T\mathbf{I})\right]}\notag\\
        &\qquad\qquad+\frac12\sqrt{\frac12\kl\left[\mathcal{N}(\boldsymbol\mu+\hat\bth_T,\hat\sigma^2_T\mathbf{I})\parallel\mathcal{N}(\hat{\boldsymbol\mu}+\hat\bth_T,\hat\sigma^2_T\mathbf{I})\right]}\notag\\
        &=\frac12\sqrt{\frac14\cdot\frac{\|\boldsymbol{\mu}-\hat{\boldsymbol\mu}\|^2}{\hat\sigma^2_T}}+\frac12\sqrt{\frac14\cdot\frac{\|\boldsymbol{\mu}-\hat{\boldsymbol\mu}\|^2}{\hat\sigma^2_T}}\le\frac12\cdot\frac{\|\boldsymbol{\mu}-\hat{\boldsymbol\mu}\|}{\sqrt{\hat\sigma_0^2}}\precsim\sqrt{\frac{d}n\log\frac1\delta}\label{eq:tv4}
    \end{align}
    with probability at least $1-\delta$  which follows from the standard tail bounds for chi-square random variable $n\|\hat{\boldsymbol\mu}-\boldsymbol\mu\|^2$ with $d$ degrees of freedom. Combining \eqref{eq:tv0}, \eqref{eq:tv1}, \eqref{eq:tv2}, \eqref{eq:tv3}, and \eqref{eq:tv4}, the statement of the theorem follows.
\end{proof}

\section{Experiments}\label{sec:experiments}

To demonstrate the practical relevance of our theoretical findings, we conducted experiments on two remote sensing datasets: Salinas-A\footnote{\url{https://www.ehu.eus/ccwintco/index.php/Hyperspectral_Remote_Sensing_Scenes}} and EuroSAT  \citep{helber2019eurosat}.

\subsection{Salinas-A: pixel-wise classification}

We conducted a pixel-wise classification experiment on the Salinas-A hyperspectral scene, a $86\times 83$-pixel subimage with 6 vegetation classes captured by a 224-band sensor. 
\begin{figure}[t]
    \centering
    \includegraphics[width=0.8\textwidth]{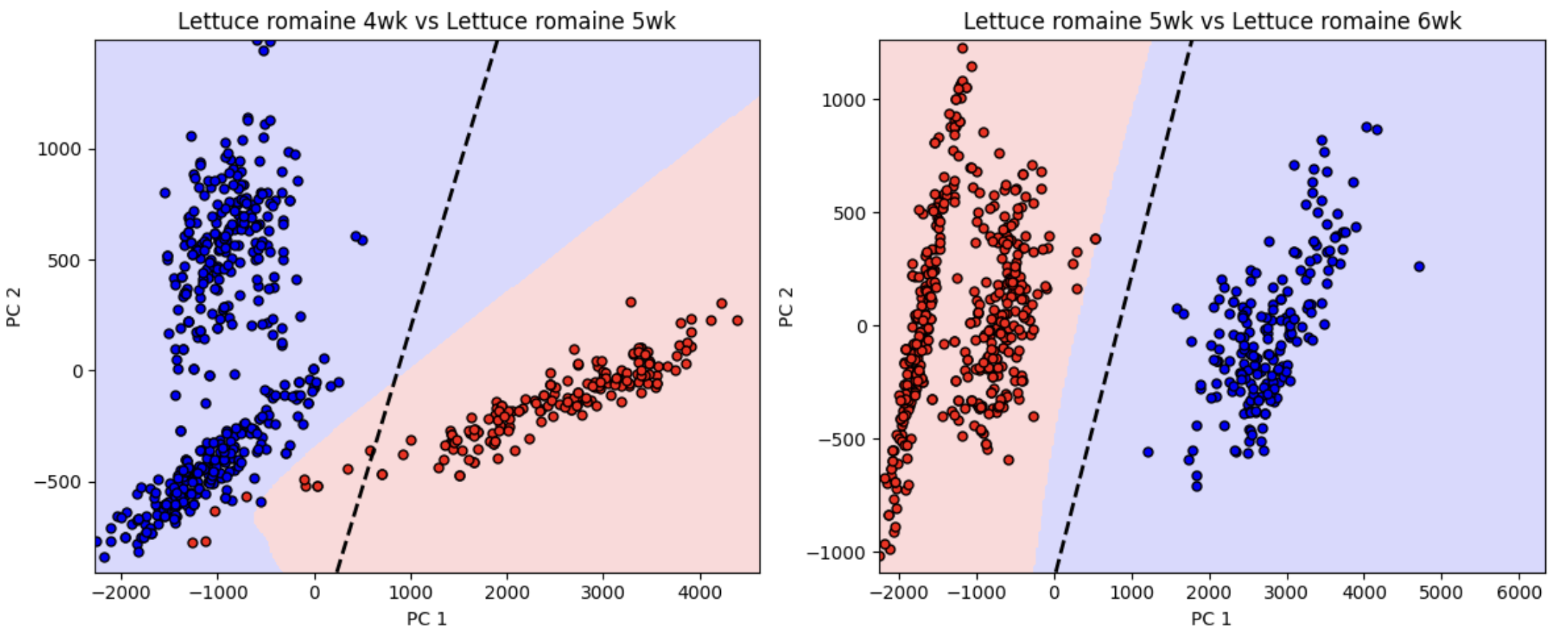}
    \caption{Decision boundaries for LDA vs. MDA on the two most confused class pairs (Lettuce romaine 4wk vs 5wk; 5wk vs 6wk), projected onto a 2D PCA subspace of the spectral features. Red and blue points denote test pixels from each class. The black dashed line shows LDA’s linear boundary, while the shaded regions indicate MDA’s two-component decision regions.}
    \label{fig:salinas}
\end{figure}\begin{figure}[t]
    \centering
    \includegraphics[width=0.8\textwidth]{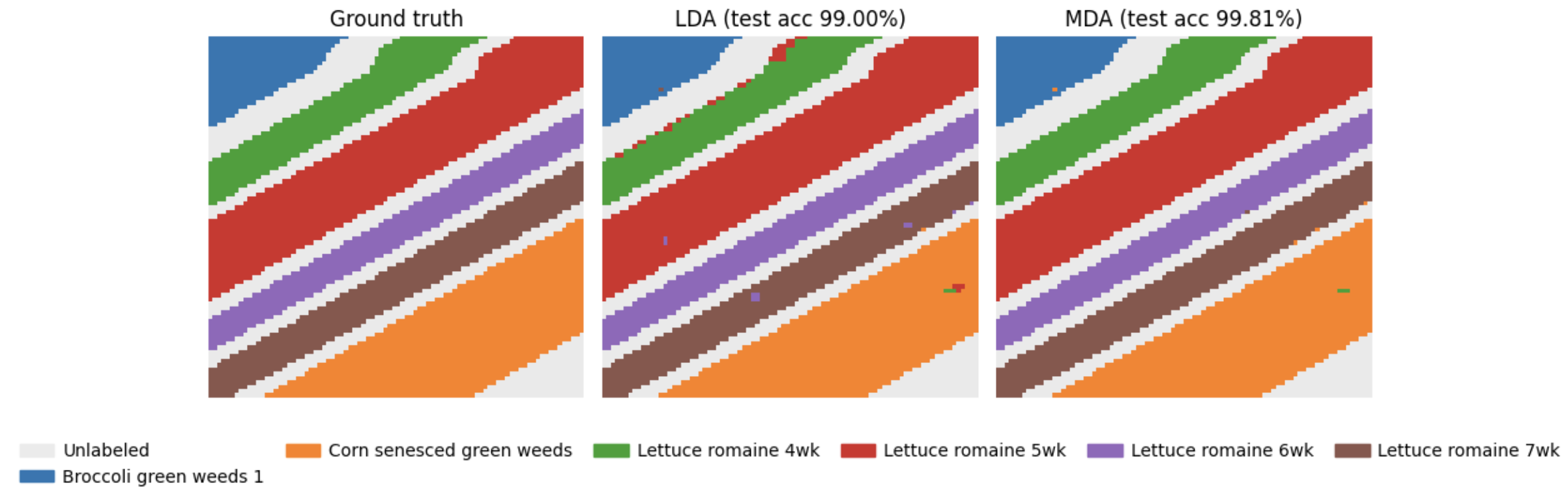}
    \caption{Ground-truth and predicted class label maps for the Salinas-A scene. Left: ground-truth map of the 6 classes; Center: LDA classification; Right: MDA classification. The MDA map corrects many of LDA’s errors, yielding a closer match to the true class distribution (especially at boundaries between confused classes).}
    \label{fig:salinas-maps}
\end{figure}

All spectral features (204 effective bands after preprocessing) were reduced to 30 dimensions via PCA. We trained LDA and MDA (two Gaussian components per class, tied covariance) on 70\% of the labeled pixels and tested on the remaining 30\%. The LDA achieved a test accuracy of 0.990, while MDA reached 0.998. This 0.8 percentage-point gain is statistically significant (p = 0.0002, McNemar’s test), with MDA correctly classifying 13 samples that LDA misclassified (and never erring where LDA was correct). Figure~\ref{fig:salinas} illustrates the classifiers’ decision boundaries on a 2D PCA projection for the two class pairs most frequently confused by LDA (Lettuce romaine 4wk vs 5wk; 5wk vs 6wk). MDA’s flexible two-component model provides a more nuanced boundary that resolves many of LDA’s errors for this pair, though their overall decision regions remain similar given the high separability. Figure~\ref{fig:salinas} shows clearly non–LDA-like geometry (elongated, non-elliptical clusters); thus this setting lies outside Theorem~\ref{thm:mda}’s assumptions. We include it to chart the boundary of the theory: when assumptions fail, overspecified MDA is beneficial (here improving over LDA), while under LDA-like data the two coincide.

Figure~\ref{fig:salinas-maps} compares the spatial classification maps of the entire scene: the MDA prediction aligns more closely with the ground truth, correcting the few misclassified regions present in the LDA map (mainly along field boundaries between spectrally similar crops).

\subsection{EuroSAT Satellite Image Classification}

\begin{figure}[htbp]
\begin{minipage}{\textwidth}
\centering
\includegraphics[width=\textwidth]{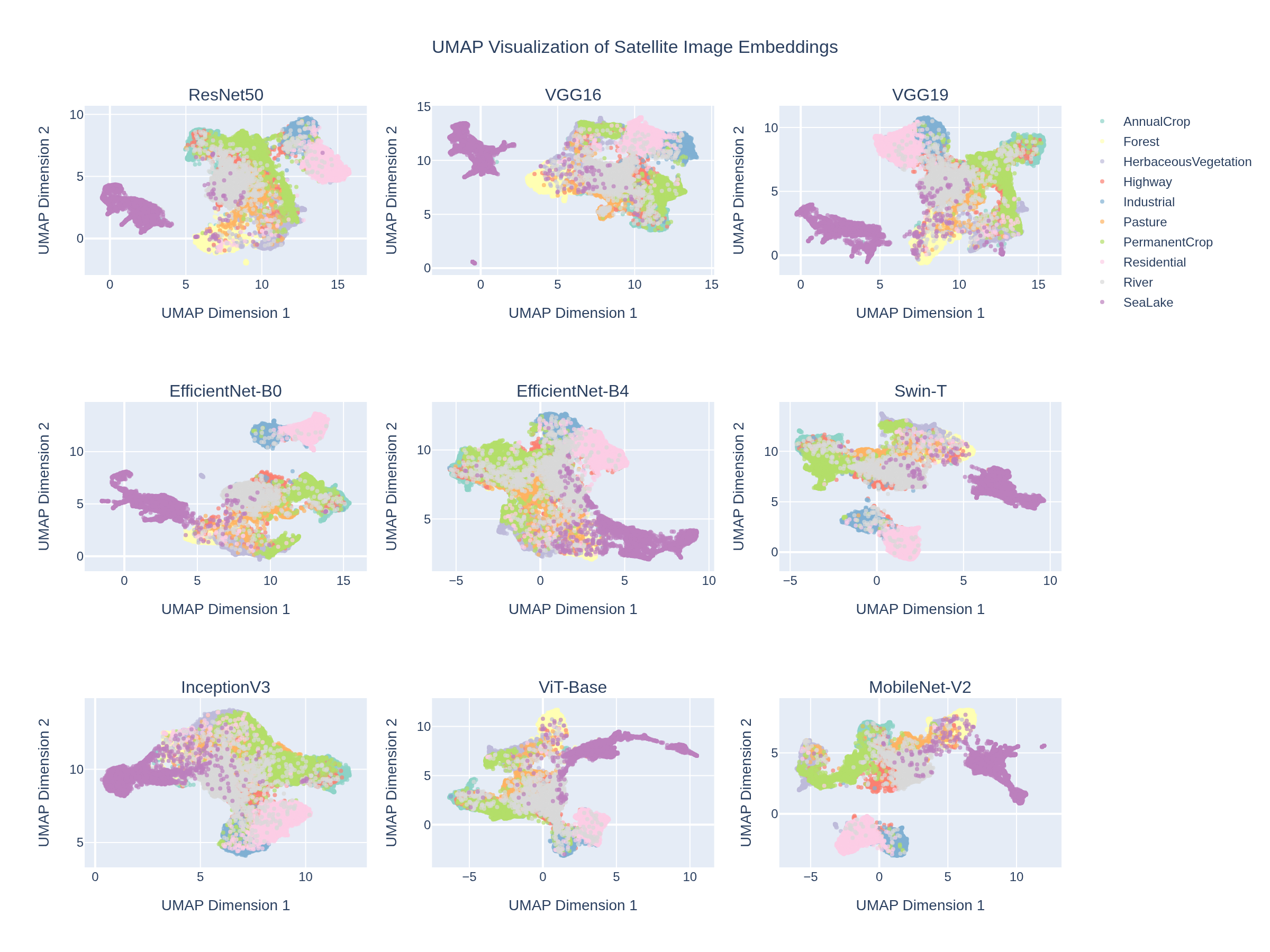}
\caption{UMAP visualization of EuroSAT satellite image features. Each point is an image colored by class. Several classes—AnnualCrop, Pasture, SeaLake, and River—show clearly separated clusters.}
\label{fig:eurosat_overview}
\end{minipage}
\begin{minipage}{\textwidth}
\centering
\includegraphics[width=\textwidth]{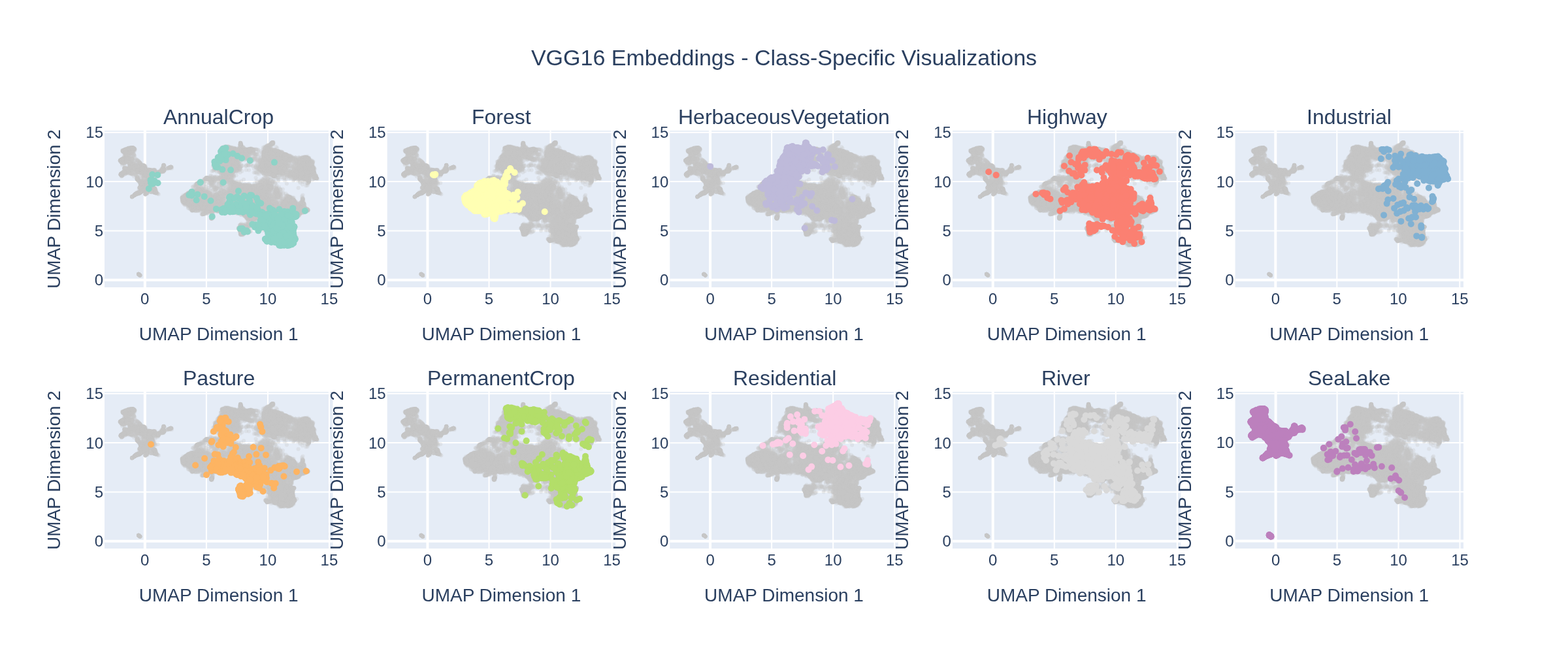}
\caption{VGG16 UMAP embeddings by class. Each panel highlights one EuroSAT class against others.}
\label{fig:vgg16_eurosat}
\end{minipage}
\end{figure}

The EuroSAT dataset \citep{helber2019eurosat} contains satellite images from 10 different land use and land cover classes: Annual Crop, Forest, Herbaceous Vegetation, Highway, Industrial, Pasture, Permanent Crop, Residential, River, and Sea Lake.

We extracted feature representations using a pre-trained VGG16 model and applied UMAP \citep{McInnes2018} dimensionality reduction to visualize the data structure. Figure~\ref{fig:eurosat_overview} shows the 2D UMAP embeddings for all classes across multiple pre-trained models, revealing that even within individual classes, the data often exhibits multimodal structure with distinct clusters.

\begin{figure}[htbp]
\centering
\includegraphics[width=\textwidth]{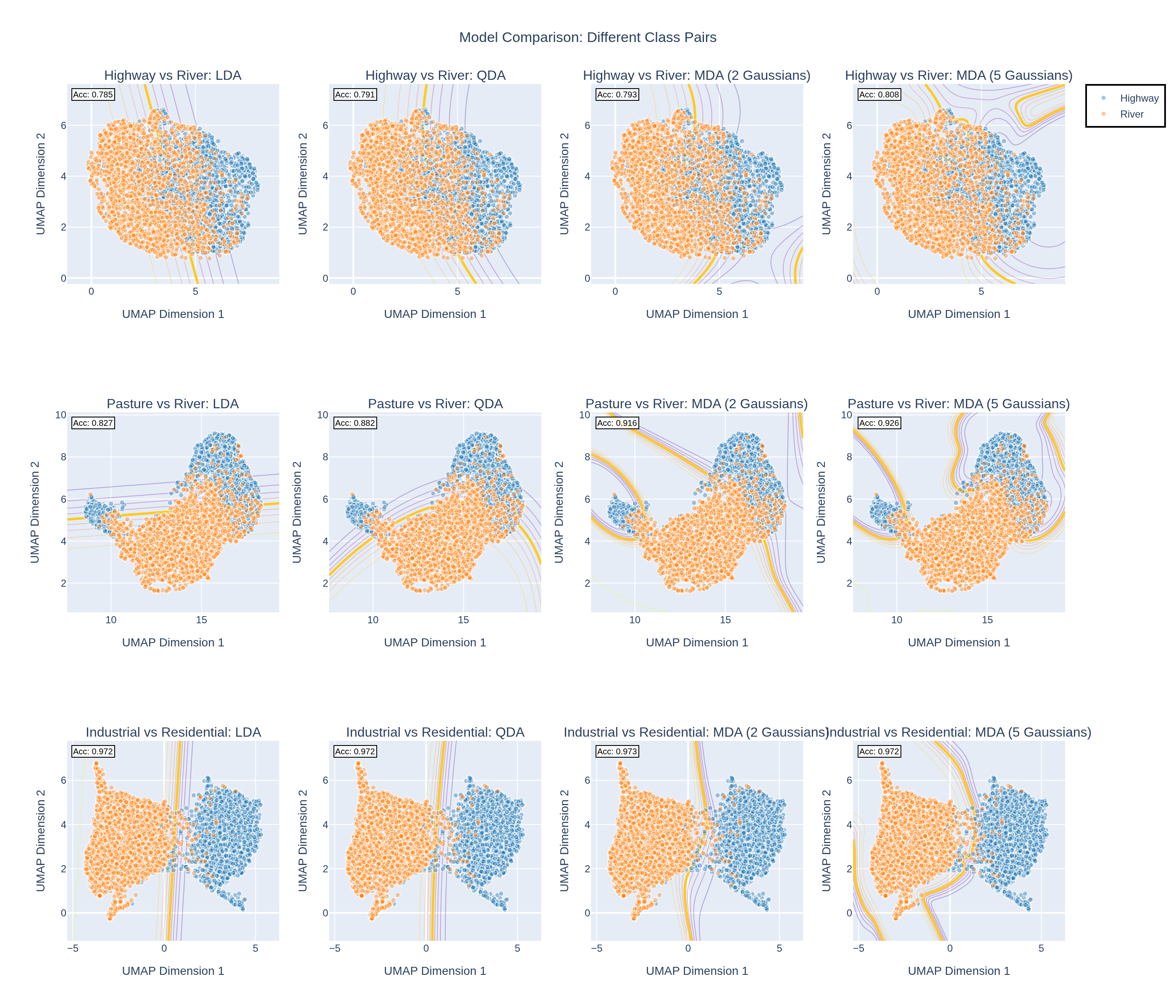}
\caption{Comparison of LDA and MDA with different numbers of components on satellite image classification tasks. Each row shows a different class pair (Highway vs River, Pasture vs River, Industrial vs Residential). Columns show decision boundaries for LDA, QDA, and MDA with 2 and 5 components per class. Points are colored by true class labels, and contour lines indicate decision boundaries. Classification accuracies are displayed in each subplot. Overspecified MDA with 5 components consistently achieves better separation of multimodal class distributions.}
\label{fig:mda_lda_comparison}
\end{figure}

To specifically test our theoretical predictions about overspecified MDA, we focused on binary classification tasks using three challenging class pairs: Highway vs River, Pasture vs River, and Industrial vs Residential. These pairs were selected as they exhibit varying degrees of class separability and internal cluster structure in the feature space.

For each class pair, we compared Linear Discriminant Analysis (LDA) with Quadratic Discriminant Analysis (QDA) and MDA with 2 and 5 Gaussian components per class. Figure~\ref{fig:mda_lda_comparison} illustrates the decision boundaries and classification performance. The results demonstrate that overspecification does not degrade performance: MDA with 2 and 5 components per class performs comparably to LDA and QDA in cases with a single cluster per class (e.g., Industrial vs. Residential). In contrast, MDA clearly outperforms LDA and QDA when multiple clusters per class are present (e.g., Pasture vs. River).

\section{Conclusion}
In this work, we provided a rigorous theoretical analysis of Mixture Discriminant Analysis (MDA) in the overspecified regime, where the number of Gaussian components used in model fitting exceeds that in the true data-generating process. Focusing on the practically relevant case of fitting an unbalanced two-component Gaussian mixture to data from a single Gaussian per class, we established sharp bounds on the excess classification error over the Bayes risk.  

At the population level, we proved that, under suitable initialization, the Expectation–Maximization (EM) algorithm achieves exponential convergence of the classification error to the Bayes risk, requiring only ${O}(\log(1/\epsilon))$ iterations to reach accuracy $\epsilon$. In the finite-sample setting, we derived non-asymptotic high-probability bounds showing that the misclassification error converges to the Bayes risk at the optimal ${O}(n^{-1/2})$ rate, with only ${O}(\log(n/d))$ EM iterations needed. The analysis hinges on controlling the Kullback–Leibler divergence between the fitted and true class-conditional densities, leveraging a Polyak–Łojasiewicz inequality on a hypersurface where the EM iterates reside.

Our results extend and refine existing theory for overspecified mixtures by incorporating both learned variance parameters and unbalanced component weights, leading to faster algorithmic convergence and optimal statistical rates. This provides a principled justification for the empirical success of overspecified MDA in complex classification problems, such as those arising in image or text analysis.  

\section*{Declaration} 
\paragraph{Conflicts of interest} The authors declare no conflicts of interest. 
\paragraph{Ethical approval} This study does not involve human participants. 
\paragraph{Informed consent} Not applicable.


\bibliography{ref}

\appendix

\section{Deferred Proofs}\label{app:proofs}

\subsection{Population EM Updates}\label{app:EM_upd_proof}
We first give more details on the EM algorithm. It will be convenient to represent the mixture distribution \eqref{eq:gmm} using a hidden Bernoulli random variable $K$, which serves as an identifier of the mixture components. Since the components have weights $p$ and $1-p$, we assume that $\Pr[K=0]=1-p$ and $\Pr[K=1]=p$. Next, we define the conditional distribution
$$
(\mathbf{X}\mid K=0)\sim\mathcal{N}(-\bth,\sigma^2\mathbf{I}),\qquad(\mathbf{X}\mid K=1)\sim\mathcal{N}(\bth,\sigma^2\mathbf{I}).
$$
This determines the joint distribution of the tuple $(\mathbf{X}, K)$, and by construction, the marginal distribution of $\mathbf{X}$ is the Gaussian mixture $\mathcal{G}(\bth,\sigma^2)$ given by \eqref{eq:gmm}. The Population EM algorithm attempts to maximize the expected log-likelihood \eqref{eq:e_log_l} by iteratively applying these two steps:
\begin{itemize}
    \item \emph{E step:} Given the current estimate $(\bth_t,\sigma^2_t)$, do a soft assignment of any $\mathbf{x}\in\mathbb{R}^d$ to the component $K=1$, i.e. compute the conditional probability of $K=1$ given $\mathbf{X}=\mathbf{x}$:
    \begin{equation}
    w(\mathbf{x};\bth_t,\sigma^2_t)=\frac{p\cdot\phi\left(\frac{\mathbf{x}-\bth_t}{\sigma_t}\right)}{p\cdot\phi\left(\frac{\mathbf{x}-\bth_t}{\sigma_t}\right)+(1-p)\cdot\phi\left(\frac{\mathbf{x}+\bth_t}{\sigma_t}\right)},\label{eq:weight_fn}  
    \end{equation}
    and use it to compute the $Q$-function:
    \begin{multline*}
        Q(\bth,\sigma^2;\bth_t,\sigma^2_t)
        :=\E_{\mathbf{Z}\sim\mathcal{N}(\mathbf{0},\mathbf{I}}\left[w(\mathbf{Z};\bth_t,\sigma^2_t)\log \left(p \phi\left(\frac{\mathbf{Z}-\bth}{\sigma}\right)\right)\right.\\\left.
        +(1-w(\mathbf{Z};\bth_t,\sigma^2_t))\log\left((1-p)\phi\left(\frac{\mathbf{Z}+\bth}{\sigma}\right)\right)\right]
    \end{multline*}
   
    \item \emph{M step:} Update the parameters by solving the optimization problem:
\begin{align}
(\bth_{t+1},\sigma^2_{t+1})\in\arg\max_{(\bth,\sigma^2)}Q(\bth,\sigma^2;\bth_t,\sigma^2_t).\label{eq:m_step}
\end{align}
\end{itemize}


\begin{lemma} \label{lem:EM_upd}
    Let the expected log-likelihood \eqref{eq:e_log_l}
    be maximized by the Population EM algorithm. Then the parameter updates are given by \eqref{eq:theta_upd} and \eqref{eq:sigma_sq_upd}, i.e.
    \begin{align*}
        \bth_{t+1}&=\E_{\mathbf{Z}\sim\mathcal{N}(\mathbf{0},\mathbf{I})}\left[t_p\left(\frac{\bth_t^\top\mathbf{Z}}{1-\frac{\|\bth_t\|^2}{d}}\right)\mathbf{Z}\right],\\
        \sigma^2_{t+1}&=1-\frac{\|\bth_{t+1}\|^2}{d},
    \end{align*}
    where $t_p(x):=\frac{p\cdot e^{x}-(1-p)\cdot e^{-x}}{p\cdot e^{x}+(1-p)\cdot e^{-x}}$.
\end{lemma}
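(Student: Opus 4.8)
The plan is to carry out the M-step maximization \eqref{eq:m_step} explicitly, exploiting the fact that the two mixture components share the common scale $\sigma^2$ and differ only in the sign of the location $\bth$. First I would insert the Gaussian log-density $\log\phi(\mathbf{u})=-\tfrac{d}{2}\log(2\pi)-\tfrac12\|\mathbf{u}\|^2$ (with the component scale folded in) into the $Q$-function so that, writing $w=w(\mathbf{Z};\bth_t,\sigma_t^2)$ for brevity and discarding the additive constant $\E[w\log p+(1-w)\log(1-p)]$ (which is independent of the optimization variables), $Q(\bth,\sigma^2;\bth_t,\sigma^2_t)$ becomes
$$-\frac{d}{2}\log(2\pi\sigma^2)-\frac{1}{2\sigma^2}\E\left[w\|\mathbf{Z}-\bth\|^2+(1-w)\|\mathbf{Z}+\bth\|^2\right].$$

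Next I would observe that the stationarity condition in $\bth$ decouples from $\sigma^2$: differentiating the bracketed quadratic and setting the gradient to zero (the prefactor $-1/\sigma^2$ is nonzero and drops out) yields $\bth_{t+1}=\E[(2w-1)\mathbf{Z}]$. The key algebraic identity is that $2w-1=t_p(\bth_t^\top\mathbf{Z}/\sigma_t^2)$, obtained by inserting the explicit weight \eqref{eq:weight_fn}, cancelling the shared normalizing and Jacobian factors, and using $\|\mathbf{z}+\bth_t\|^2-\|\mathbf{z}-\bth_t\|^2=4\bth_t^\top\mathbf{z}$; substituting the hypersurface relation $\sigma_t^2=1-\|\bth_t\|^2/d$ then produces \eqref{eq:theta_upd}.

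For the variance update I would, having fixed $\bth=\bth_{t+1}$, treat $Q$ as a function of $s:=\sigma^2$ alone. Since $\E[w+(1-w)]=1$, the objective reduces to $-\tfrac{d}{2}\log(2\pi s)-\tfrac{1}{2s}A$ with $A:=\E[w\|\mathbf{Z}-\bth_{t+1}\|^2+(1-w)\|\mathbf{Z}+\bth_{t+1}\|^2]$; this is strictly concave in $s$, and its unique stationary point is $s=A/d$, i.e. $\sigma^2_{t+1}=A/d$. The final and most delicate step is the simplification of $A$: expanding both squared norms and collecting terms gives $A=\E\|\mathbf{Z}\|^2+\|\bth_{t+1}\|^2-2\bth_{t+1}^\top\E[(2w-1)\mathbf{Z}]$, and now the mean update $\E[(2w-1)\mathbf{Z}]=\bth_{t+1}$ together with $\E\|\mathbf{Z}\|^2=d$ collapses this to $A=d-\|\bth_{t+1}\|^2$, yielding $\sigma^2_{t+1}=1-\|\bth_{t+1}\|^2/d$, which is \eqref{eq:sigma_sq_upd}.

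I expect the main obstacle to be bookkeeping rather than any deep difficulty. One must (i) verify the weight-ratio identity $2w-1=t_p(\cdot)$ cleanly, taking care that the $(2\pi\sigma^2)^{-d/2}$ factors genuinely cancel in the posterior \eqref{eq:weight_fn} while being retained inside the $\log$ in $Q$ for the $\sigma^2$ update, and (ii) spot the cancellation that makes $A$ collapse, which hinges on reusing the already-derived mean update inside the variance computation. A secondary point worth flagging is the hypersurface invariance: the compact form \eqref{eq:theta_upd} is valid only once $\sigma_t^2=1-\|\bth_t\|^2/d$ is available, which follows inductively from the $\sigma^2$ update at the previous iteration (and, for $t=0$, from initializing on $\mathcal{S}$).
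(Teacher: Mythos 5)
Your proposal is correct and follows essentially the same route as the paper's proof: explicit M-step stationarity in $\bth$ yielding $\bth_{t+1}=\E[(2w-1)\mathbf{Z}]$ with $2w-1=t_p\left(\bth_t^\top\mathbf{Z}/\sigma_t^2\right)$, then the $\sigma^2$ stationary point with the key cancellation $d-2\bth_{t+1}^\top\bth_{t+1}+\|\bth_{t+1}\|^2=d-\|\bth_{t+1}\|^2$, and finally substitution of the hypersurface relation $\sigma_t^2=1-\|\bth_t\|^2/d$. Two small remarks that do not affect correctness: the map $s\mapsto-\tfrac{d}{2}\log(2\pi s)-A/(2s)$ is not globally concave in $s$ (it is only unimodal, with the derivative changing sign exactly at $s=A/d$, which still identifies the unique maximizer), and your observation that the compact form \eqref{eq:theta_upd} requires the hypersurface relation to hold inductively (or at initialization) is a point the paper leaves implicit.
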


\begin{proof}
    First, we rewrite the weight function \eqref{eq:weight_fn} as
    $$
    w(\mathbf{x};\bth,\sigma^2)=\frac{p}{p+(1-p)\exp\left(-\frac{2\bth^\top\mathbf{x}}{\sigma^2}\right)}=s_p\left(\frac{2\bth^\top\mathbf{x}}{\sigma^2}\right),
    $$
    where $s_p(x):=\frac{p}{p+(1-p)e^{-x}}$.
    Then the gradient of $Q$ w.r.t. $\bth$ is
    \begin{align*}
    \nabla_{\bth} Q&=\frac{1}{\sigma^2}\E_{\mathbf{Z}}\left[s_p\left(\frac{2\bth^\top_t\mathbf{Z}}{\sigma^2}\right)(\mathbf{Z}-\bth)-\left(1-s_p\left(\frac{2\bth^\top_t\mathbf{Z}}{\sigma^2}\right)\right)(\mathbf{Z}+\bth)\right]\\
    &=\frac{1}{\sigma^2}\E_\mathbf{Z}\left[\left(2s_p\left(\frac{2\bth^\top_t\mathbf{Z}}{\sigma^2}\right)-1\right)\mathbf{Z}-\bth\right]
    \end{align*}
    Solving $\nabla_{\bth} Q=\mathbf{0}$ for $\bth$ we get the population EM update for the location parameter:
    \begin{equation}
        \bth_{t+1}=\E_\mathbf{Z}\left[\left(2s_p\left(\frac{2\bth^\top_t\mathbf{Z}}{\sigma^2}\right)-1\right)\mathbf{Z}\right]=\E_{\mathbf{Z}}\left[t_p\left(\frac{\bth^\top_t\mathbf{Z}}{\sigma^2_t}\right)\mathbf{Z}\right],\label{eq:theta_upd_raw}
    \end{equation}
    where we denoted $t_p(x):=2s_p(2x)-1$.   Similarly, for $\sigma^2$ we have
    \begin{align*}
        \frac{\partial Q}{\partial\sigma^2}&=\E_{\mathbf{Z}}\left[\frac{d}{\sigma^2}-s_p\left(\frac{2\bth^\top_t\mathbf{Z}}{\sigma^2}\right)\frac{\|\mathbf{Z}-\bth\|^2}{2\sigma^4}-\left(1-s_p\left(\frac{2\bth^\top_t\mathbf{Z}}{\sigma^2}\right)\right)\frac{\|\mathbf{Z}+\bth\|^2}{2\sigma^4}\right]\\
        &=\frac{d}{2\sigma^2}-\frac{1}{2\sigma^4}\E_{\mathbf{Z}}\left[\|\mathbf{Z}\|^2-2\left(2s_p\left(\frac{2\bth^\top_t\mathbf{Z}}{\sigma^2}\right)-1\right)\bth^\top\mathbf{Z}+\|\bth\|^2\right]\\
        &=\frac{d}{2\sigma^2}-\frac{1}{2\sigma^4}\left(d-2\bth^\top\bth_{t+1}+\|\bth\|^2\right)
    \end{align*}
    Solving $\frac{\partial Q}{\partial\sigma^2}=0$ for $\sigma^2$ and then substituting $\bth=\bth_{t+1}$ we get the population EM update for the scale parameter:
    $$
    \sigma^2_{t+1}=1-\frac{\|\bth_{t+1}\|^2}{d},
    $$
    which is exactly formula \eqref{eq:sigma_sq_upd}. Now, plugging $\sigma^2_t=1-\frac{\|\bth_t\|^2}{d}$ into \eqref{eq:theta_upd_raw} we obtain \eqref{eq:theta_upd}.
\end{proof}

\subsection{Radiality of the function $L(\theta)$}\label{app:ell_radial}

\begin{lemma} \label{lem:ell_radial} Consider the function $L:\mathbb{R}^d\to\mathbb{R}$ defined as
\begin{equation}
    L(\bth):=-\mathcal{L}(\bth,1-\|\bth\|^2/d).\notag  
\end{equation}
Then $L(\bth)$ is a radial function of $\bth\in\mathbb{R}^d$.     It can be explicitly written as
    \begin{equation*}
    L(\bth)=\frac{d}2\log(2\pi(1-\|\bth\|^2/d))+\frac{d+\|\bth\|^2}{2(1-\|\bth\|^2/d)}-\E_{Z\sim\mathcal{N}(0,1)}\left[\log\left(c_p\left(\frac{\|\bth\|Z}{1-\|\bth\|^2/d}\right)\right)\right],    
    \end{equation*}
    where $c_p(x):=p\cdot e^x+(1-p)\cdot e^{-x}$.
\end{lemma}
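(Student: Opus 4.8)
The plan is to compute $\mathcal{L}(\bth,\sigma^2)$ in closed form and then specialize to $\sigma^2=1-\|\bth\|^2/d$. First I would write the mixture density explicitly. Expanding $\|\mathbf{x}\pm\bth\|^2=\|\mathbf{x}\|^2\pm2\bth^\top\mathbf{x}+\|\bth\|^2$ and pulling the common Gaussian normalizer together with the factor $\exp(-(\|\mathbf{x}\|^2+\|\bth\|^2)/2\sigma^2)$ out of both mixture components, the density collapses to
$$f(\mathbf{x};\bth,\sigma^2)=\frac{1}{(2\pi\sigma^2)^{d/2}}\exp\!\left(-\frac{\|\mathbf{x}\|^2+\|\bth\|^2}{2\sigma^2}\right)c_p\!\left(\frac{\bth^\top\mathbf{x}}{\sigma^2}\right),$$
where $c_p(x)=p\,e^x+(1-p)\,e^{-x}$ is exactly the combination appearing in the statement.

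Taking logarithms gives $\log f=-\tfrac{d}{2}\log(2\pi\sigma^2)-\tfrac{\|\mathbf{x}\|^2+\|\bth\|^2}{2\sigma^2}+\log c_p(\bth^\top\mathbf{x}/\sigma^2)$, and averaging over $\mathbf{Z}\sim\mathcal{N}(\mathbf{0},\mathbf{I})$ using $\E\|\mathbf{Z}\|^2=d$ yields
$$\mathcal{L}(\bth,\sigma^2)=-\frac{d}{2}\log(2\pi\sigma^2)-\frac{d+\|\bth\|^2}{2\sigma^2}+\E_{\mathbf{Z}}\!\left[\log c_p\!\left(\frac{\bth^\top\mathbf{Z}}{\sigma^2}\right)\right].$$
Substituting $\sigma^2=1-\|\bth\|^2/d$ and negating reproduces the claimed expression for $L(\bth)$ term by term, so the explicit formula is purely a matter of this bookkeeping.

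It then remains to argue radiality, which is the only conceptual (as opposed to computational) step. The first two terms of $L(\bth)$ already depend on $\bth$ solely through $\|\bth\|$, so everything reduces to the expectation $\E_{\mathbf{Z}}[\log c_p(\bth^\top\mathbf{Z}/(1-\|\bth\|^2/d))]$. The key observation is rotational invariance of the standard Gaussian: since $\mathbf{Z}\sim\mathcal{N}(\mathbf{0},\mathbf{I})$, the scalar projection obeys $\bth^\top\mathbf{Z}\sim\mathcal{N}(0,\|\bth\|^2)$, hence $\bth^\top\mathbf{Z}\stackrel{d}{=}\|\bth\|\,Z$ with $Z\sim\mathcal{N}(0,1)$. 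Replacing $\bth^\top\mathbf{Z}$ by $\|\bth\|Z$ inside the expectation makes it depend on $\bth$ only through $\|\bth\|$, establishing that $L(\bth)$ is radial and, writing $\theta:=\|\bth\|$, coincides with $\ell(\theta)$ as defined in \eqref{eq:ell}.

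I do not anticipate a genuine obstacle: the argument is a direct factorization of the density followed by the one-line distributional identity $\bth^\top\mathbf{Z}\stackrel{d}{=}\|\bth\|Z$. The only point requiring mild care is sign tracking under the negation defining $L$ and the cancellation $\E\|\mathbf{Z}\|^2=d$; once the $c_p$ form is isolated, rotational invariance does all the remaining work.
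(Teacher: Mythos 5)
Your proposal is correct and follows essentially the same route as the paper's own proof: factor the mixture density into a common Gaussian envelope times $c_p(\bth^\top\mathbf{x}/\sigma^2)$, take expectations using $\E\|\mathbf{Z}\|^2=d$, invoke $\bth^\top\mathbf{Z}\stackrel{d}{=}\|\bth\|Z$ with $Z\sim\mathcal{N}(0,1)$, and then substitute $\sigma^2=1-\|\bth\|^2/d$. There is nothing missing and no meaningful difference in approach.
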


\begin{proof}
    Since $f(\mathbf{x};\bth,\sigma^2)$ is the p.d.f. of $(1-p)\cdot\mathcal{N}(-\bth,\sigma^2\mathbf{I})+p\cdot\mathcal{N}(+\bth,\sigma^2\mathbf{I})$, it can be written as
    \begin{align*}
    f(\mathbf{x};\bth,\sigma^2)&=\frac{1-p}{\sigma^d}\phi\left(\frac{\mathbf{x}+\bth}{\sigma}\right)+\frac{p}{\sigma^d}\phi\left(\frac{\mathbf{x}-\bth}{\sigma}\right)\\
    &=(2\pi\sigma^2)^{-d/2}\cdot\exp\left(-\frac{\|\mathbf{x}\|^2+\|\bth\|^2}{2\sigma^2}\right)\cdot c_p\left(\frac{\bth^\top\mathbf{x}}{\sigma^2}\right).
    \end{align*}
    Hence
    $$
        \log f(\mathbf{x};\bth,\sigma^2)=-\frac{d}{2}\log(2\pi\sigma^2)-\frac{\|\mathbf{x}\|^2+\|\bth\|^2}{2\sigma^2}+\log\left(c_p\left(\frac{\bth^\top\mathbf{x}}{\sigma^2}\right)\right),
    $$
    and thus the negative population log-likelihood is
    \begin{align}
        -\mathcal{L}(\bth,\sigma^2)&=-{\E_{\mathbf{Z}}\left[\log f(\mathbf{Z};\bth,\sigma^2)\right]}\notag\\
        &=\frac{d}2\log(2\pi\sigma^2)+\frac{d+\|\bth\|^2}{2\sigma^2}-\E_{\mathbf{Z}}\left[\log\left(c_p\left(\frac{\boldsymbol\theta^\top\mathbf{Z}}{\sigma^2}\right)\right)\right]\notag\\
        &=\{\text{since $\bth^\top\mathbf{Z}\sim\mathcal{N}(0,\|\bth\|^2)$}\}\notag\\
        &=\underbrace{\frac{d}2\log(2\pi\sigma^2)+\frac{d+\|\bth\|^2}{2\sigma^2}-\E_{Z\sim\mathcal{N}(0,1)}\left[\log\left(c_p\left(\frac{\|\bth\|Z}{\sigma^2}\right)\right)\right]}_{q(\|\bth\|,\sigma^2)}.\label{eq:q_def}
    \end{align}  
As we can notice, the $\mathcal{L}(\bth,\sigma^2)$ depends on $\bth$ only through its norm. Plugging  in $\sigma^2=1-\|\bth\|^2/d$, we get
\begin{align*}
L(\bth)&=-\mathcal{L}(\bth,1-\|\bth\|^2/d)\\
&=\frac{d}2\log(2\pi(1-\|\bth\|^2/d))+\frac{d+\|\bth\|^2}{2(1-\|\bth\|^2/d)}\\&\quad-\E_{Z\sim\mathcal{N}(0,1)}\left[\log\left(c_p\left(\frac{\|\bth\|Z}{1-\|\bth\|^2/d}\right)\right)\right],
\end{align*}
which immediately implies the statement of the lemma. 
\end{proof}

\subsection{Radiality of $\|M(\bth)\|$}\label{app:m_radial}

\begin{lemma} \label{lem:em_norm} For the population EM operator $M(\bth)$ defined by \eqref{eq:pop_em_op}, we have
$$
\|M(\bth)\|=\E_{Z\sim\mathcal{N}(0,1)}\left[t_p\left(\frac{\|\bth\|Z}{1-\frac{\|\bth\|^2}{d}}\right)Z\right].
$$
\end{lemma}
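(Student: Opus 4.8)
The plan is to exploit the rotational symmetry of the standard Gaussian to collapse the $d$-dimensional expectation defining $M(\bth)$ onto the single direction $\bth$. Setting $\mathbf{u}:=\bth/\|\bth\|$ (the case $\bth=\mathbf{0}$ is trivial, as both sides vanish), I would decompose $\mathbf{Z}=Z_1\mathbf{u}+\mathbf{Z}_\perp$, where $Z_1:=\mathbf{u}^\top\mathbf{Z}$ and $\mathbf{Z}_\perp:=\mathbf{Z}-Z_1\mathbf{u}$ is the part of $\mathbf{Z}$ orthogonal to $\bth$. Because $\mathbf{Z}\sim\mathcal{N}(\mathbf{0},\mathbf{I})$ is isotropic, $Z_1\sim\mathcal{N}(0,1)$ and $\mathbf{Z}_\perp$ are independent with $\E[\mathbf{Z}_\perp]=\mathbf{0}$. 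The key observation is that the argument of $t_p$ in \eqref{eq:pop_em_op} depends on $\mathbf{Z}$ only through $Z_1$, since $\bth^\top\mathbf{Z}=\|\bth\|\,Z_1$ gives $\frac{\bth^\top\mathbf{Z}}{1-\|\bth\|^2/d}=\frac{\|\bth\|Z_1}{1-\|\bth\|^2/d}$.

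Next, writing $a:=\|\bth\|/(1-\|\bth\|^2/d)$ and substituting the decomposition, I would split
$$
M(\bth)=\mathbf{u}\cdot\E\!\left[t_p(aZ_1)\,Z_1\right]+\E\!\left[t_p(aZ_1)\,\mathbf{Z}_\perp\right].
$$
The second term vanishes: since $t_p(aZ_1)$ is a function of $Z_1$ alone and $\mathbf{Z}_\perp$ is independent of $Z_1$ with zero mean, the expectation factorizes as $\E[t_p(aZ_1)]\,\E[\mathbf{Z}_\perp]=\mathbf{0}$. Hence $M(\bth)=g\,\mathbf{u}$ is parallel to $\bth$, with scalar $g:=\E_{Z\sim\mathcal{N}(0,1)}[t_p(aZ)\,Z]$, and therefore $\|M(\bth)\|=|g|$ because $\mathbf{u}$ is a unit vector.

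Finally, it remains to verify $g\ge0$ so that the absolute value can be dropped and $\|M(\bth)\|=g$ is exactly the claimed expression; this is the only genuine subtlety. I would argue it via Stein's lemma, $\E[Z\psi(Z)]=\E[\psi'(Z)]$, applied to $\psi(z)=t_p(az)$. A short differentiation shows $t_p'(x)=1-t_p(x)^2\in(0,1]$ (as $|t_p|<1$), whence $g=\E[Z\,t_p(aZ)]=a\,\E[1-t_p(aZ)^2]\ge0$ since $a>0$ whenever $\|\bth\|>0$ (and $g=0$ at $\bth=\mathbf{0}$). This confirms $\|M(\bth)\|=\E_{Z\sim\mathcal{N}(0,1)}\!\left[t_p\!\left(\frac{\|\bth\|Z}{1-\|\bth\|^2/d}\right)Z\right]$. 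No heavy computation is needed: the entire argument is in the orthogonal decomposition and the sign check.
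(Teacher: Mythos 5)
Your proof is correct and follows essentially the same route as the paper's: both exploit the rotational symmetry of the standard Gaussian to show that $M(\bth)$ is parallel to $\bth$ (the paper via an explicit rotation $\mathbf{R}$ with $\mathbf{R}\bth=\|\bth\|\mathbf{e}_1$, you via the orthogonal decomposition $\mathbf{Z}=Z_1\mathbf{u}+\mathbf{Z}_\perp$), thereby reducing the norm to the one-dimensional expectation. One point in your favor: you explicitly verify via Stein's lemma (using $t_p'=1-t_p^2\ge 0$) that the scalar coefficient is nonnegative before dropping the absolute value, a step the paper's proof passes over silently when it equates the norm with the expectation itself.
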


\begin{proof}
    Let $\mathbf{R}$ be an orthonormal matrix such that $\mathbf{R}\bth=\|\bth\|\mathbf{e}_1$, where $\mathbf{e}_1$ is the first canonical basis vector in $\mathbb{R}^d$. Let $\mathbf{Y}=\mathbf{RZ}$, then $\mathbf{Y}\sim\mathcal{N}(\mathbf{0},\mathbf{I})$, and $\mathbf{Z}=\mathbf{R}^\top\mathbf{Y}$. Thus, we have
    \begin{align*}
        \|M(\bth)\|&=\left\|\E_{\mathbf{Z}\sim\mathcal{N}(\mathbf{0},\mathbf{I})}\left[t_p\left(\frac{\bth^\top\mathbf{Z}}{1-\sfrac{\|\bth\|^2}d}\right)\mathbf{Z}\right]\right\|\\&=\left\|\E_{\mathbf{Y}\sim\mathcal{N}(\mathbf{0},\mathbf{I})}\left[t_p\left(\frac{\|\bth\|Y_1}{1-\sfrac{\|\bth\|^2}d}\right)\mathbf{R}^\top\mathbf{Y}\right]\right\|\\
        &=\left\|\mathbf{R}^\top\E_{\mathbf{Y}\sim\mathcal{N}(\mathbf{0},\mathbf{I})}\left[t_p\left(\frac{\|\bth\|Y_1}{1-\sfrac{\|\bth\|^2}d}\right)\cdot\begin{bmatrix}Y_1\\\vdots\\ Y_n\end{bmatrix}\right]\right\|\\
        &=\left\|\mathbf{R}^\top\begin{bmatrix}\E_{Y_1\sim\mathcal{N}({0},1)}\left[t_p\left(\frac{\|\bth\|Y_1}{1-\sfrac{\|\bth\|^2}d}\right)Y_1\right]\\0\\\vdots\\ 0\end{bmatrix}\right\|\\
        &=\E_{Y_1\sim\mathcal{N}({0},1)}\left[t_p\left(\frac{\|\bth\|Y_1}{1-\sfrac{\|\bth\|^2}d}\right)Y_1\right].
    \end{align*}
\end{proof}

\subsection{Proof of Lemma~\ref{lem:properties}}

We begin with an auxiliary lemma.
\begin{lemma}
    \label{lem:M_prime} For any $\theta>0$,
        \begin{equation}
    m'(\theta)\le\frac{1+\theta^2/d}{(1-\theta^2/d)^2}\cdot\left(1-\frac{(2p-1)^2}{2}\right).\label{eq:m_prime}
    \end{equation}
\end{lemma}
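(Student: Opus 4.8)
The plan is to differentiate $m(\theta)$ under the expectation and reduce the statement to a clean, dimension-free lower bound on a Gaussian integral. Write $g(\theta):=\theta/(1-\theta^2/d)$, so that $m(\theta)=\E_{Z\sim\mathcal N(0,1)}[t_p(g(\theta)Z)Z]$ by \eqref{eq:m_fun}. A one-line computation gives the logistic-type identity $t_p'(x)=1-t_p(x)^2$ (mirroring $\tanh'=1-\tanh^2$, since $t_{1/2}=\tanh$). Differentiating under the integral sign---justified by dominated convergence, as $t_p$ and $t_p'$ are bounded---yields
\[
m'(\theta)=g'(\theta)\,\E\!\left[\bigl(1-t_p(g(\theta)Z)^2\bigr)Z^2\right],\qquad g'(\theta)=\frac{1+\theta^2/d}{(1-\theta^2/d)^2}.
\]
Since $\E[Z^2]=1$, the bracket equals $1-\E[t_p(g(\theta)Z)^2Z^2]$, and because $g'(\theta)>0$, the target bound \eqref{eq:m_prime} is \emph{equivalent} to the single scalar inequality
\[
\E_{Z\sim\mathcal N(0,1)}\!\left[t_p(cZ)^2Z^2\right]\ \ge\ \frac{(2p-1)^2}{2}\qquad(c:=g(\theta)).
\]
The left-hand side depends on $c$ only through $|c|$ by the symmetry $Z\mapsto-Z$, so I may assume $c\ge0$.

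To prove this I would symmetrize: using $Z\mapsto-Z$ once more,
\[
\E[t_p(cZ)^2Z^2]=\tfrac12\E\!\left[\bigl(t_p(cZ)^2+t_p(-cZ)^2\bigr)Z^2\right],
\]
so it suffices to establish the \emph{pointwise} inequality $t_p(y)^2+t_p(-y)^2\ge(2p-1)^2$ for all $y\in\mathbb R$; integrating against $Z^2$ then produces the factor $\tfrac12(2p-1)^2$ directly. Setting $v:=e^{2y}$, one has $t_p(y)=\frac{pv-(1-p)}{pv+(1-p)}$ and $t_p(-y)=\frac{p-(1-p)v}{p+(1-p)v}$. With $\beta:=p(1-p)$ and $(2p-1)^2=1-4\beta$, a short calculation gives the common denominator $D:=\beta(v^2+1)+(1-2\beta)v>0$ together with $t_p(y)+t_p(-y)=\tfrac{2(2p-1)v}{D}$ and $t_p(y)t_p(-y)=\tfrac{(1-2\beta)v-\beta(v^2+1)}{D}$. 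Using $t_p(y)^2+t_p(-y)^2=(t_p(y)+t_p(-y))^2-2t_p(y)t_p(-y)$ and clearing the positive factor $D^2$, the pointwise inequality becomes a polynomial inequality in $v>0$ with parameter $\beta\in(0,\tfrac14]$.

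The main obstacle is verifying this two-parameter polynomial inequality uniformly. The saving structure is that the resulting numerator is a \emph{palindromic} quartic in $v$ (it is invariant under $v\mapsto 1/v$, reflecting $y\mapsto -y$), so dividing by $v^2$ and setting $s:=v+1/v\in[2,\infty)$ collapses it to a quadratic in $s$ whose coefficients are explicit functions of $\beta$; nonnegativity on $[2,\infty)$ is then a finite check via its vertex and value at $s=2$. A useful sanity marker along the way: at the zero of $t_p$ (where $v=(1-p)/p$) the pointwise quantity equals $\frac{(2p-1)^2}{(1-2\beta)^2}\ge(2p-1)^2$, which both confirms the inequality is tight as $p\to1$ and pins down the constant $\tfrac{(2p-1)^2}{2}$ in \eqref{eq:m_prime} as the correct one.
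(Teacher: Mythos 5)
Your proposal is correct, and it takes a genuinely different route from the paper's proof of this lemma. Both arguments begin with the same derivative identity---your $m'(\theta)=g'(\theta)\,\E\!\left[(1-t_p(cZ)^2)Z^2\right]$ is the paper's $m'(\theta)=\frac{1+\theta^2/d}{(1-\theta^2/d)^2}\,\E\!\left[\frac{4p(1-p)Z^2}{c_p(\cdot)^2}\right]$, since $1-t_p^2=4p(1-p)/c_p^2$---but the key estimate is reached differently. The paper keeps the expectation asymmetric and splits it over the event $\mathcal{A}=\left\{\frac{\theta Z}{1-\theta^2/d}\in\left[\log\frac{1-p}{p},0\right]\right\}$, using $c_p\ge 2\sqrt{p(1-p)}$ on $\mathcal{A}$ and $c_p\ge 1$ off $\mathcal{A}$; its factor $\frac12$ comes from $\mathcal{A}\subset\{Z\le0\}$ and $\E[Z^2\mathbb{I}(Z\le0)]=\frac12$. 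You instead symmetrize in $Z$ and reduce everything to the pointwise inequality $t_p(y)^2+t_p(-y)^2\ge(2p-1)^2$, the factor $\frac12$ coming from the symmetrization identity. I verified that your sketched endgame does close: with $\beta=p(1-p)$ the palindromic quartic collapses to $P(s)=(1+4\beta)\beta^2s^2-2(1-4\beta)\beta(1-2\beta)s+1-28\beta^2+16\beta^3$ on $s\ge2$, with $P(2)=(2p-1)^2\ge0$; the vertex $s^*=\frac{(1-4\beta)(1-2\beta)}{(1+4\beta)\beta}$ lies in $[2,\infty)$ only when $\beta\le\frac18$, in which case $P(s^*)=\frac{16\beta(1-5\beta)}{1+4\beta}\ge0$. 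One caution: the restriction to $s\ge2$ is essential, since the unconstrained minimum of $P$ is negative for $\beta\in(\frac15,\frac14)$, so the case analysis on the vertex location cannot be skipped. As a remark, your pointwise inequality also admits a one-line proof: $t_p(y)=\tanh(y+y_0)$ with $\tanh(y_0)=2p-1$, and since $\max(|y_0+y|,|y_0-y|)\ge y_0$, monotonicity of $\tanh^2$ gives $t_p(y)^2+t_p(-y)^2\ge\tanh^2(y_0)$. Relative merits: the paper's region-splitting is shorter and avoids polynomial algebra; your reduction isolates a clean, self-contained scalar inequality that makes transparent exactly where the constant $\frac{(2p-1)^2}{2}$ comes from, and, like the paper's argument, uses nothing about $Z$ beyond symmetry and $\E[Z^2]=1$.
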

\begin{proof}
    \label{app:lem_m_prime_proof}
    By direct calculation,
    \begin{align}
    &m'(\theta)=\frac{d}{d\theta}\E_{Z\sim\mathcal{N}(0,1)}\left[t_p\left(\frac{\theta Z}{1-\theta^2/d}\right)Z\right]\notag\\
    &=\frac{1+\theta^2/d}{(1-\theta^2/d)^2}\cdot\E_{Z\sim\mathcal{N}(0,1)}\left[\frac{4p(1-p)Z^2}{\left(p\cdot\exp\left(\frac{\theta Z}{1-\theta^2/d}\right)+(1-p)\cdot\exp\left(-\frac{\theta Z}{1-\theta^2/d}\right)\right)^2}\right]
    \end{align}
    Observe that the function $c_p(x):= p\cdot e^x+(1-p)\cdot e^{-x}$ is convex, has the minimum value $g_p\left(\log\frac{\sqrt{p(1-p)}}{p}\right)=2\sqrt{p(1-p)}$, and  satisfies $g_p\left(\log\frac{1-p}{p}\right)=g_p(0)=1$ assuming $p\in(1/2,1]$. Thus
\begin{align}
    &c_p(x)\in\left[2\sqrt{p(1-p)},1\right]\quad\text{for } x\in\left[\log\frac{1-p}{p},0\right],\notag\\
    &c_p(x)\in(1,+\infty)\quad\text{for }x\notin\left[\log\frac{1-p}{p},0\right].\label{eq:c_p_bounds}
\end{align}
Defining the event
$$
\mathcal{A}:=\left\{\frac{\theta Z}{1-\theta^2/d}\in\left[\log\frac{1-p}{p},0\right]\right\},
$$
we have
\begin{align}
    &\E_{Z\sim\mathcal{N}(0,1)}\left[\frac{Z^2}{\left(p\cdot\exp\left(\frac{\theta Z}{1-\theta^2/d}\right)+(1-p)\cdot\exp\left(-\frac{\theta Z}{1-\theta^2/d}\right)\right)^2}\right]
    \notag\\
    &\le\frac{1}{4p(1-p)}\cdot\E_{Z\sim\mathcal{N}(0,1)}\left[Z^2\mathbb{I}(\mathcal{A})\right]+\E_{Z\sim\mathcal{N}(0,1)}\left[Z^2\mathbb{I}(\mathcal{A}^c)\right]\notag\\
    &=\E_{Z\sim\mathcal{N}(0,1)}[Z^2]+\left(\frac{1}{4p(1-p)}-1\right)\E_{Z\sim\mathcal{N}(0,1)}\left[Z^2\mathbb{I}(\mathcal{A})\right]\notag\\
    &\le1+\frac{(2p-1)^2}{4p(1-p)}\E_{Z\sim\mathcal{N}(0,1)}\left[Z^2\mathbb{I}(Z\le0)\right]=1+\frac{(2p-1)^2}{8p(1-p)},\label{eq:m_prime_bound}
\end{align}
where we used the fact $\mathcal{A}\subset\{Z\le0\}$. Thus, from \eqref{eq:m_mvt} and \eqref{eq:m_prime_bound} we have
\begin{align*}
    m'(\theta)&\le\frac{1+\theta^2/d}{(1-\theta^2/d)^2}\cdot\left(4p(1-p)+\frac{(2p-1)^2}{2}\right)\notag\\
    &=\frac{1+\theta^2/d}{(1-\theta^2/d)^2}\cdot\left(1-\frac{(2p-1)^2}{2}\right).
\end{align*}
\end{proof}

\begin{proof}[Proof of Lemma~\ref{lem:properties}]
\begin{enumerate}

    \item According to Lemma~\ref{lem:M_prime}, for the inequality $m'(\theta)<1$ to hold it is sufficient that $\theta$ satisfies
$$
\frac{1+\theta^2/d}{(1-\theta^2/d)^2}\cdot\left(1-\frac{(2p-1)^2}{2}\right)<1
$$
Solving the latter inequality for $\theta>0$ we get
$$
\theta<\sqrt{d\cdot\frac{2+q-\sqrt{8q+q^2}}{2}},
$$
where $q:=1-\frac{(2p-1)^2}{2}\in[1/2,1)$ for $p\ne\sfrac12$.

\item By the mean value theorem, there exists $\xi\in[0,\theta]$ such that
\begin{equation}
    m(\theta)=m(\theta)-m(0)=m'(\xi)\cdot\theta\label{eq:m_mvt}
\end{equation}
From Lemma~\ref{lem:M_prime} and the fact that $\xi\le\theta$, we have
\begin{align*}
m'(\xi)&\le\frac{1+\xi^2/d}{(1-\xi^2/d)^2}\cdot\left(1-\frac{(2p-1)^2}{2}\right)\\
&\le\frac{1+\theta^2/d}{(1-\theta^2/d)^2}\cdot\left(1-\frac{(2p-1)^2}{2}\right).
\end{align*}
Now denote $\rho:=\frac{1+\theta_0^2/d}{(1-\theta_0^2/d)^2}\cdot\left(1-\frac{(2p-1)^2}{2}\right)$. From the condition $\theta_0<\sqrt{d\cdot\frac{2+q-\sqrt{8q+q^2}}{2}}$ it follows that $\rho<1$. Thus,
\begin{equation}
m'(\xi)\le \rho<1,\label{eq:rho_ineq}
\end{equation}
and the corollary follows from \eqref{eq:m_mvt} and \eqref{eq:rho_ineq}.

    \item Recall the notation $\theta:=\|\bth\|$ and $\ell(\theta):=L(\bth)$. Let  $q(\theta,\sigma^2)$ be the function defined in \eqref{eq:q_def}, then $\ell(\theta):=q(\theta,1-\theta^2/d)$. To analyze the derivative $\ell'$, we first find partial derivatives of $q$:

    \begin{align*}
        \frac{\partial q}{\partial\theta}(\theta,\sigma^2)&=\frac{\theta}{\sigma^2}-\frac{1}{\sigma^2}\E_{{Z}\sim\mathcal{N}(0,1)}\left[t_p\left(\frac{\theta{Z}}{\sigma^2}\right){Z}\right],\\
        \frac{\partial q}{\partial\sigma^2}(\theta,\sigma^2)&=\frac{d}{2\sigma^2}-\frac{d+\theta^2}{2\sigma^4}+\frac{1}{\sigma^4}\E_{{Z}\sim\mathcal{N}(0,1)}\left[t_p\left(\frac{\theta{Z}}{\sigma^2}\right)\theta{Z}\right]
    \end{align*}
    Using the notation $m(\theta):=\E_{{Z}\sim\mathcal{N}(0,1)}\left[t_p\left(\frac{\theta{Z}}{1-\theta^2/d}\right){Z}\right]$ introduced in \eqref{eq:m_fun}, we have
    \begin{align}
        \ell'(\theta)&=\frac{\partial q}{\partial\theta}(\theta,1-\theta^2/d)+\frac{\partial q}{\partial\sigma^2}(\theta,1-\theta^2/d)\cdot\left(-\frac{2\theta}{d}\right)\notag\\
        &=\frac{\theta}{1-\theta^2/d}-\frac{m(\theta)}{1-\theta^2/d}-\frac{\theta}{1-\theta^2/d}+\frac{(1+\theta^2/d)\theta}{(1-\theta^2/d)^2}-\frac{2m(\theta)\cdot\theta^2/d}{(1-\theta^2/d)^2}\notag\\
        &=\frac{-m(\theta)(1-\theta^2/d)+(1+\theta^2/d)\theta-2m(\theta)\theta^2/d}{(1-\theta^2/d)^2}\notag\\
        &=\frac{1+\theta^2/d}{(1-\theta^2/d)^2}\cdot\left(\theta-m(\theta)\right).\label{eq:lambda_prime}
    \end{align}
    Then for $\theta$ in the interval mentioned in the statement of Part~\ref{lem:M_bounds},
    $$
    m'(\theta)< 1.
    $$
    Thus, the function $\theta\mapsto\theta-m(\theta)$ has a positive derivative on $[0,\theta_0]$ and therefore it is  increasing there. Now, $\ell'(\theta)$ is itself increasing on this interval as the product of two positive increasing functions. This implies that for $\theta\in[0,\theta_0]$, $\ell(\theta)$ is convex.
    \item     From Lemma~\ref{lem:properties} Part~\ref{lem:M_bounds} and \eqref{eq:lambda_prime}, we have
    \begin{equation}
        \ell'(\theta)\ge\frac{1+\theta^2/d}{(1-\theta^2/d)^2}\cdot(1-\rho)\cdot\theta,\label{eq:pl_left}
    \end{equation}
    where $\rho:=\frac{1+\theta_0^2/d}{(1-\theta_0^2/d)^2}\cdot\left(1-\frac{(2p-1)^2}{2}\right)\in(0,1)$. Notice that $\ell'(\theta)>0$. Thus, multiplying both sides of \eqref{eq:pl_left} by $\ell'(\theta)$, we have
    $$
    [\ell'(\theta)]^2\ge\frac{1+\theta^2/d}{(1-\theta^2/d)^2}\cdot({1-\rho})\cdot\ell'(\theta)\cdot\theta\ge(1-\rho)\cdot[\ell(\theta)-\ell(0)],
    $$
    where we used convexity of $\ell$ (Part~\ref{lem:convex}).
    \end{enumerate}
\end{proof}

\subsection{Sample-based EM updates}\label{app:sample_em_upd}

\begin{lemma}
    Let the log-likelihood \eqref{eq:sample_log_l}
    be maximized by the EM algorithm. Then the parameter updates are given by
    \begin{align}
        \hat{\bth}_{t+1}&=\frac1n\sum_{i=1}^n t_p\left(\frac{\hat\bth_t^\top\mathbf{Z}_i}{\sfrac{\sum_{i=1}^n\|\mathbf{Z}_i\|^2}{nd}-\sfrac{\|\hat\bth_t\|^2}{d}}\right)\mathbf{Z}_i,\notag\\
        \hat{\sigma}^2_{t+1}&=\frac{\sum_{i=1}^n\|\mathbf{Z}_i\|^2}{nd}-\frac{\|\hat\bth_{t+1}\|^2}{d}.\notag
    \end{align}
\end{lemma}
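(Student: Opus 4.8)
The plan is to replicate the population-level argument of Lemma~\ref{lem:EM_upd} almost verbatim, replacing every expectation $\E_{\mathbf{Z}\sim\mathcal{N}(\mathbf{0},\mathbf{I})}[\,\cdot\,]$ by the empirical average $\frac1n\sum_{i=1}^n$ over the sample $\mathbf{Z}_1,\dots,\mathbf{Z}_n$. The E-step produces fixed soft-assignment weights $w(\mathbf{Z}_i;\hat\bth_t,\hat\sigma^2_t)=s_p\!\big(2\hat\bth_t^\top\mathbf{Z}_i/\hat\sigma^2_t\big)$ with $s_p(x)=p/(p+(1-p)e^{-x})$, and the M-step maximizes the sample $Q$-function $Q_n(\bth,\sigma^2;\hat\bth_t,\hat\sigma^2_t)=\frac1n\sum_{i=1}^n\big[w_i\log(p\,\phi((\mathbf{Z}_i-\bth)/\sigma))+(1-w_i)\log((1-p)\,\phi((\mathbf{Z}_i+\bth)/\sigma))\big]$ coming from the log-likelihood \eqref{eq:sample_log_l}, where $w_i:=w(\mathbf{Z}_i;\hat\bth_t,\hat\sigma^2_t)$ is held constant throughout the M-step.

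For the location update I would set $\nabla_{\bth}Q_n=\mathbf{0}$. Since the weights $w_i$ do not depend on the optimization variable $\bth$, the gradient is $\frac{1}{\sigma^2}\cdot\frac1n\sum_{i=1}^n\big[(2w_i-1)\mathbf{Z}_i-\bth\big]$, whose unique zero, using the identity $2w_i-1=t_p(\hat\bth_t^\top\mathbf{Z}_i/\hat\sigma^2_t)$, is $\hat\bth_{t+1}=\frac1n\sum_{i=1}^n t_p\!\big(\hat\bth_t^\top\mathbf{Z}_i/\hat\sigma^2_t\big)\mathbf{Z}_i$. For the scale update I would compute $\partial Q_n/\partial\sigma^2$; after using the expansion $w_i\|\mathbf{Z}_i-\bth\|^2+(1-w_i)\|\mathbf{Z}_i+\bth\|^2=\|\mathbf{Z}_i\|^2+\|\bth\|^2-2(2w_i-1)\bth^\top\mathbf{Z}_i$ and averaging, the stationarity condition reads $d\sigma^2=\frac1n\sum_{i=1}^n\|\mathbf{Z}_i\|^2+\|\bth\|^2-2\bth^\top\hat\bth_{t+1}$. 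Substituting $\bth=\hat\bth_{t+1}$ gives $\hat\sigma^2_{t+1}=\frac{1}{d}\big(\frac1n\sum_{i=1}^n\|\mathbf{Z}_i\|^2-\|\hat\bth_{t+1}\|^2\big)=\frac{\sum_{i=1}^n\|\mathbf{Z}_i\|^2}{nd}-\frac{\|\hat\bth_{t+1}\|^2}{d}$.

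The only place where the finite-sample derivation departs from Lemma~\ref{lem:EM_upd} is here: in the population proof the term $\E_{\mathbf{Z}}[\|\mathbf{Z}\|^2]$ collapses to $d$, whereas now the empirical second moment $\frac1n\sum_{i=1}^n\|\mathbf{Z}_i\|^2$ survives unreduced, which is exactly why the sample variance update carries the factor $\sum_{i=1}^n\|\mathbf{Z}_i\|^2/(nd)$ in place of the population value $1$. Finally, substituting the variance estimate $\hat\sigma^2_t=\frac{\sum_{i=1}^n\|\mathbf{Z}_i\|^2}{nd}-\frac{\|\hat\bth_t\|^2}{d}$ into the argument of $t_p$ in the location update yields the stated closed form for $\hat\bth_{t+1}$. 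I do not anticipate any genuine obstacle: the computation is algebraically identical to the population case, and the sole bookkeeping subtlety is keeping the empirical second moment $\frac1n\sum_{i=1}^n\|\mathbf{Z}_i\|^2$ intact rather than replacing it by its expectation $d$.
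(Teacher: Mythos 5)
Your derivation is correct. It is worth noting, however, that the paper does not actually prove this lemma itself: its entire proof is a citation to \citet{DBLP:conf/aistats/DwivediHKWJ020}. What you have done is supply the self-contained argument the citation stands in for, by replicating the paper's own population-level proof (Lemma~\ref{lem:EM_upd} in Appendix~\ref{app:EM_upd_proof}) with empirical averages in place of expectations: the gradient $\nabla_{\bth}Q_n=\frac{1}{\sigma^2}\cdot\frac1n\sum_i\left[(2w_i-1)\mathbf{Z}_i-\bth\right]$, the identity $2w_i-1=t_p(\hat\bth_t^\top\mathbf{Z}_i/\hat\sigma^2_t)$, and the quadratic expansion in the $\sigma^2$-stationarity condition are all handled correctly, and you correctly identify the one genuine difference from the population case, namely that $\frac1n\sum_i\|\mathbf{Z}_i\|^2$ does not collapse to $d$. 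Your route buys self-containedness at the cost of a page of algebra; the paper's route buys brevity at the cost of sending the reader elsewhere. One small point to make explicit if you write this up: the closed form for $\hat\bth_{t+1}$ with $\hat\sigma^2_t=\frac{\sum_i\|\mathbf{Z}_i\|^2}{nd}-\frac{\|\hat\bth_t\|^2}{d}$ inside $t_p$ is valid for $t\ge1$ (or for $t=0$ only if $\hat\sigma^2_0$ is initialized to satisfy this relation), since it presumes the variance estimate already obeys the update rule; the paper's population-level statement carries exactly the same implicit convention, so this is a shared bookkeeping assumption rather than a gap in your argument.
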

\begin{proof}
    See \citet{DBLP:conf/aistats/DwivediHKWJ020}.
\end{proof}

\subsection{Perturbation bound}\label{app:perturb}
\begin{lemma}\label{lem:perturb}
There exists a universal constant $c$ such that for any fixed $\delta\in(0,1)$, $\alpha\in(0,\sfrac12)$, and $r\in[0,\theta_0]$ with $\theta_0<\sqrt{d\cdot\frac{2+q-\sqrt{8q+q^2}}{2}}$ where $q=1-\frac{(2p-1)^2}{2}$, we have
$$
    \Pr\left[\sup_{\theta\in[0,r]}|{m}_n(\theta)-{m}(\theta)|\le cr\sqrt{\frac{\log(1/\delta)}{n}}\right]\ge1-\delta.
    $$    
    for $n\succsim\log^{\frac{1}{2\alpha}}\frac1\delta$.
\end{lemma}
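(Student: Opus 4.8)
The plan is to bound the two distinct sources of discrepancy between $m_n$ and $m$ separately: the \emph{random denominator} $U_{nd}/(nd)$ (which replaces the deterministic $1$ appearing in $m$) and the ordinary \emph{sampling fluctuation} of the empirical average around its mean. To this end I would introduce the intermediate, fixed-denominator process
$$
\tilde m_n(\theta):=\frac1n\sum_{i=1}^n t_p\!\left(\frac{\theta Z_i}{1-\theta^2/d}\right)Z_i,\qquad \E[\tilde m_n(\theta)]=m(\theta),
$$
and write $m_n-m=(m_n-\tilde m_n)+(\tilde m_n-m)$, bounding $\sup_{\theta\in[0,r]}$ of each piece in turn. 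Before doing so I would fix a good event on which (i) $\max_i|Z_i|\le B$ for a truncation level $B\asymp n^{\alpha}$, and (ii) $|U_{nd}/(nd)-1|\le\epsilon_0$ with $\epsilon_0\asymp\sqrt{\log(1/\delta)/(nd)}$. Event (i) fails with probability at most $n\,e^{-B^2/2}$, which is $\le\delta/3$ exactly under the hypothesis $n\succsim\log^{\frac1{2\alpha}}(1/\delta)$ (this is where $\alpha$ enters); event (ii) fails with probability $\le\delta/3$ by the standard chi-square (Laurent--Massart/Chernoff) tail bound.

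For the random-denominator term, on the good event I would use that $t_p=\tanh\!\left(\,\cdot\,+\tfrac12\log\tfrac{p}{1-p}\right)$ is $1$-Lipschitz together with the elementary estimate
$$
\left|\frac{\theta Z_i}{U_{nd}/(nd)-\theta^2/d}-\frac{\theta Z_i}{1-\theta^2/d}\right|\precsim \theta\,|Z_i|\,\epsilon_0,
$$
valid uniformly for $\theta\in[0,r]$ since both denominators are bounded away from $0$. Multiplying by the outer factor $|Z_i|$ and averaging gives $\sup_{\theta\in[0,r]}|m_n(\theta)-\tilde m_n(\theta)|\precsim r\,\epsilon_0\cdot\frac1n\sum_iZ_i^2\precsim r\sqrt{\log(1/\delta)/n}$, which is already of the required order and, crucially, scales linearly in $r$.

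For the sampling term $\tilde m_n-m$ I would follow the Rademacher route advertised in the text. Symmetrization gives $\E\sup_{\theta}|\tilde m_n-m|\le\frac2n\,\E\sup_{\lambda\in[0,\lambda_r]}|\sum_i\varepsilon_i\,t_p(\lambda Z_i)Z_i|$, where $\lambda:=\theta/(1-\theta^2/d)$ ranges over an interval of length $\lambda_r\asymp r$. Conditionally on $\{Z_i\}$ this is a one-dimensional sub-Gaussian process in $\lambda$ whose increments are controlled by $\frac{|\lambda-\lambda'|}{n}\,(\sum_iZ_i^4)^{1/2}$ (using $|t_p(\lambda z)-t_p(\lambda'z)|\le|z|\,|\lambda-\lambda'|$), so Dudley's entropy bound over the interval yields $\E\sup_\lambda|\,\cdot\,-(\text{value at }\lambda=0)|\precsim \lambda_r\,(\sum_iZ_i^4)^{1/2}/n\precsim r/\sqrt n$; equivalently one may invoke the Ledoux--Talagrand contraction inequality applied to the $1$-Lipschitz map $\psi(x):=t_p(x)-t_p(0)$, which vanishes at $x=0$, to reduce to the linear class $\{\lambda z\}$ and obtain the same $r/\sqrt n$ rate. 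Finally I would upgrade this expectation bound to a high-probability statement: after truncation to $|Z_i|\le B$ the summands are bounded, so McDiarmid's bounded-differences (or Talagrand's) concentration inequality controls the deviation of $\sup_\theta|\tilde m_n-m|$ from its mean by $\precsim rB\sqrt{\log(1/\delta)/n}$, and combining with the truncation tail closes the argument. Adding the two contributions and renaming constants yields the claimed bound with probability at least $1-\delta$.

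The step I expect to be the genuine obstacle is extracting the \emph{linear-in-$r$} dependence in the sampling term rather than a bound of order $n^{-1/2}$ that is flat in $r$. The symmetrized process contains the $\lambda$-independent contribution $t_p(0)\frac1n\sum_i\varepsilon_iZ_i$, whose size is of order $n^{-1/2}$ and does not shrink with $r$; isolating it by centering $t_p$ at the origin (so that $\psi(0)=0$ \emph{before} the contraction/chaining step) is precisely what converts the fluctuation into a quantity proportional to $r$, and one must verify that this baseline component is consistent with, and absorbed into, the radial reduction underlying the parameter-convergence bound \eqref{eq:theta_conv}. Making this centering interact correctly with the random denominator and the truncation, while keeping every tail term below $r\sqrt{\log(1/\delta)/n}$, is the delicate part; by contrast the chi-square and Chernoff estimates are routine.
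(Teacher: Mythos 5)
Your overall architecture is essentially the paper's: a two-stage decomposition through an intermediate process, chi-square control of the random denominator $U_{nd}/(nd)$, and symmetrization plus contraction for the sampling fluctuation. The only structural difference is the choice of intermediate --- you use the empirical average at the deterministic denominator, while the paper (Lemmas~\ref{lem:pseudo_perturb} and~\ref{lem:perturb2}) uses the population expectation $\widetilde{m}_n$ of \eqref{eq:pseudo_em} at the random denominator, handled by a mean-value argument --- and that difference is immaterial. The genuine gap is exactly the step you flagged and then deferred: the constant $t_p(0)=2p-1\neq 0$ in the unbalanced case. It cannot be ``absorbed into the radial reduction,'' because the statement with a right-hand side proportional to $r$ is false for small $r$: at $\theta=0$ one has $m_n(0)-m(0)=(2p-1)\bar{Z}_n$ with $\bar{Z}_n\sim\mathcal{N}(0,1/n)$, so $\sup_{\theta\in[0,r]}|m_n(\theta)-m(\theta)|$ is at least of order $|2p-1|/\sqrt{n}$ with constant probability no matter how small $r$ is, and no universal $c$ makes $cr\sqrt{\log(1/\delta)/n}$ an upper bound once $r\ll|2p-1|$. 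The correct conclusion after centering $\psi:=t_p-(2p-1)$ is the additive bound $c\,(r+|2p-1|)\sqrt{\log(1/\delta)/n}$; this still suffices downstream, propagating into \eqref{eq:theta_conv} as $\|\hat{\bth}_T\|\precsim(\|\bth_0\|+|2p-1|)\sqrt{\log(1/\delta)/n}$ and leaving the rates of Theorems~\ref{thm:main} and~\ref{thm:mda} intact up to constants. It is worth recording that the paper's own proof of Lemma~\ref{lem:pseudo_perturb} has the same defect: it applies the Ledoux--Talagrand contraction principle to the maps $\tilde\theta\mapsto t_p(\tilde\theta Z_i)Z_i$, which equal $(2p-1)Z_i$ at $\tilde\theta=0$, whereas the principle requires contractions that vanish at the origin. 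So your instinct about where the difficulty lies was correct; the resolution is to weaken the statement, not to find a cleverer chaining argument.

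A secondary problem: your concentration step does not deliver the target bound even for the centered part. After truncation at $B\asymp n^{\alpha}$, the bounded differences of $\sup_{\theta}|\tilde{m}_n(\theta)-m(\theta)|$ are of order $rB^2/n$ (the centered summands are bounded by $rZ_i^2$ up to constants), so McDiarmid gives a deviation of order $rB^2\sqrt{\log(1/\delta)/n}=r\,n^{2\alpha}\sqrt{\log(1/\delta)/n}$, off by $n^{2\alpha}$; even the $rB\sqrt{\log(1/\delta)/n}$ you state is off by $n^{\alpha}$. The paper avoids truncation altogether by computing the Rademacher--chi-square moment generating function exactly, $\E[\exp(t\epsilon Z^2)]\le 1+2t^2$ for $|t|\le 1/4$, and then running a Chernoff bound, which exploits Gaussianity and keeps the constant clean. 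If you want to keep your route, you would need Talagrand's concentration inequality with the wimpy variance (which is $\precsim r^2$ here) rather than bounded differences, and even then the sup-norm term forces a restriction on $\alpha$ that the lemma as stated does not impose.
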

\begin{proof}The statement follows from the following two lemmas.\end{proof}
\begin{lemma}\label{lem:pseudo_perturb}
Consider a function
\begin{equation}
    \widetilde{m}_n(\theta):=\E_{Z\sim\mathcal{N}(0,1)}\left[t_p\left(\frac{\theta Z}{\sfrac{U}{nd}-\sfrac{\theta^2}{d}}\right)Z\right],\label{eq:pseudo_em}
\end{equation}
where $U\sim\chi^2_{nd}$.
    There exists a universal constant $c$ such that for any fixed $\delta\in(0,1)$, $\alpha\in(0,\sfrac12)$, and $r\in[0,\theta_0]$ with $\theta_0<\sqrt{d\cdot\frac{2+q-\sqrt{8q+q^2}}{2}}$ where $q=1-\frac{(2p-1)^2}{2}$, we have
$$
    \Pr\left[\sup_{\theta\in[0,r]}|{m}_n(\theta)-\widetilde{m}_n(\theta)|\le cr\sqrt{\frac{\log(1/\delta)}{n}}\right]\ge1-\delta-2e^{-\sfrac{(nd)^{2\alpha}}8}.
    $$
\end{lemma}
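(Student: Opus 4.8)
The plan is to follow the empirical-process strategy of \citet{dwivedi2020singularity} and \citet{DBLP:conf/aistats/DwivediHKWJ020}: separate the randomness of the variance estimate from that of the projected samples, bound the resulting empirical process in expectation by symmetrization and contraction, and then upgrade to a high-probability statement via a concentration inequality.

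First I would dispose of the random scale. Writing $s(\theta):=U/(nd)-\theta^2/d$ with $U=\sum_{i=1}^n\|\mathbf Z_i\|^2\sim\chi^2_{nd}$, set $\varepsilon:=(nd)^{\alpha-1/2}$ and define $\mathcal A:=\{|U/(nd)-1|\le\varepsilon\}$. The usual $\chi^2$ tail bound gives $\Pr[\mathcal A^c]\le 2e^{-nd\varepsilon^2/8}=2e^{-(nd)^{2\alpha}/8}$, which is precisely the surplus term in the conclusion. Since $r\le\theta_0<\sqrt{d\cdot\tfrac{2+q-\sqrt{8q+q^2}}2}$ forces $\theta^2/d$ to stay below a constant strictly less than $1$, on $\mathcal A$ (with $nd$ large enough that $\varepsilon$ is small) the scale is trapped in a fixed interval $s(\theta)\in[c_1,c_2]$ with $c_1>0$, uniformly in $\theta\in[0,r]$.

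Next comes the key decoupling. Because the identical scale $s(\theta)$ enters both $m_n$ and $\widetilde m_n$, I would write $m_n(\theta)-\widetilde m_n(\theta)=G_n(\theta,s(\theta))$, where
\[
G_n(\theta,s):=\frac1n\sum_{i=1}^n t_p\!\left(\frac{\theta Z_i}{s}\right)Z_i-\E_{Z\sim\mathcal N(0,1)}\!\left[t_p\!\left(\frac{\theta Z}{s}\right)Z\right]
\]
is an empirical-minus-population fluctuation depending on the $Z_i$ alone. On $\mathcal A$ it is dominated by $\sup_{\theta\in[0,r],\,s\in[c_1,c_2]}|G_n(\theta,s)|$, and the supremum over $s$ frees us from conditioning on $U$: the $Z_i$ may now be treated as genuinely i.i.d.\ $\mathcal N(0,1)$, so we face a clean two-parameter empirical process indexed by $(\theta,s)$ over a compact set. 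To bound its expectation I would symmetrize with Rademacher variables and apply the Ledoux--Talagrand contraction inequality, using that $t_p$ is $1$-Lipschitz (indeed $t_p'=4p(1-p)/c_p^2\le1$) and that $\theta z/s$ is Lipschitz in $(\theta,s)$ on the index set, with envelope $|z|$; finiteness of $\E Z^2,\E Z^4$ then delivers $\E\sup|G_n|\lesssim r/\sqrt n$. A concentration inequality for suprema of empirical processes with sub-Gaussian envelope (Talagrand's inequality, or truncation followed by bounded differences) supplies the deviation term $\lesssim r\sqrt{\log(1/\delta)/n}$ with probability at least $1-\delta$; a union bound with $\mathcal A^c$ yields the stated $1-\delta-2e^{-(nd)^{2\alpha}/8}$.

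I expect the crux to be the extraction of the \emph{linear-in-$r$} rate, and this is exactly where the unbalanced case ($p\ne\tfrac12$) is delicate. Since $t_p(0)=2p-1\ne0$, the integrand $t_p(\theta z/s)z$ has an $O(1)$ component $(2p-1)z$ whose variance does not vanish with $r$, so neither the naive variance proxy nor the $\theta=0$ value is $O(r)$. The contraction must therefore be arranged around the increment $t_p(\theta z/s)-t_p(0)$, whose magnitude is $O(\theta/s)=O(r)$, and one has to argue that the residual baseline fluctuation is controlled in the intended coupling of $m_n$ and $\widetilde m_n$; verifying the sub-Gaussian envelope and bounded-difference properties uniformly on $\mathcal A$ is the remaining bookkeeping.
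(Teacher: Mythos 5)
Your proposal retraces the paper's own proof almost step for step: the same $\chi^2$ event with $\varepsilon=(nd)^{\alpha-1/2}$ yielding the surplus term $2e^{-(nd)^{2\alpha}/8}$, the same decoupling of the random scale by passing to a supremum over a deterministic interval of scales (the paper goes one step further and collapses the two parameters $(\theta,\sigma^2)$ into a single parameter $\tilde\theta=\theta/\sigma^2\in[0,\widetilde r]$ with $\widetilde r\le(1+2\theta_0^2/d)r$), and the same symmetrization-plus-contraction device based on $t_p'=4p(1-p)/c_p^2\le1$. The only real procedural difference is at the end: the paper runs a Chernoff argument directly on the moment generating function of the supremum, whereas you propose an expectation bound followed by Talagrand-type concentration, which would additionally require truncating the unbounded envelope $|Z_i|$; the paper's route avoids that complication.

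The issue you flag as the ``crux''---that $t_p(0)=2p-1\ne0$, so the summands carry an $O(1)$ baseline---is not mere bookkeeping: it is a genuine gap, and it is present in the paper's own proof as well. The Ledoux--Talagrand contraction principle requires the contractions to vanish at the origin, but the paper applies it to the maps $u\mapsto t_p(u/Z_i)Z_i$, which equal $(2p-1)Z_i$ at $u=0$. Centering these maps, as you propose, leaves the residual $(2p-1)\frac1n\sum_i\epsilon_iZ_i$, whose fluctuations are of order $n^{-1/2}$---not $r\,n^{-1/2}$---and this residual cannot be made to cancel ``in the intended coupling'': evaluating both operators at $\theta=0$ gives $m_n(0)-\widetilde m_n(0)=(2p-1)\frac1n\sum_iZ_i$, which does not shrink with $r$ at all. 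Consequently the stated linear-in-$r$ bound is false for $r$ small compared to $|2p-1|$ (at $r=0$ it asserts that an almost surely false event has high probability), and neither your argument nor the paper's can establish it for $p\ne\sfrac12$. What both arguments do establish is the bound $c\left(r+|2p-1|\right)\sqrt{\log(1/\delta)/n}$: split off the baseline, apply contraction to the centered increments, and control the baseline by a Gaussian tail bound (or Cauchy--Schwarz at the MGF level). This weaker form suffices for every downstream use, since Lemma~\ref{lem:theta_convergence} invokes the perturbation bound only with $r=\theta_0$ fixed; the net effect is that $\|\bth_0\|$ is replaced by $\|\bth_0\|+|2p-1|$ in the constants of \eqref{eq:theta_conv}, Theorem~\ref{thm:main}, and Theorem~\ref{thm:mda}, leaving all rates unchanged.
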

\begin{proof}
    Denote $\mathcal{I}:=[1-(nd)^{-\sfrac12+\alpha}-\sfrac{\theta_0^2}{d}, 1-(nd)^{-\sfrac12+\alpha}]$. Fix $r\in[0,\theta_0]$ and define $\widetilde{r}:=\frac{r}{1-(nd)^{-\sfrac12+\alpha}-\theta_0^2/d}$. For sufficiently large $n$, we have $\widetilde{r}\le(1+\sfrac{2\theta_0^2}d) r$. Define the event 
    $$
    \mathcal{A}_\alpha:=\left\{\left|\frac{U}{nd}-1\right|\le\frac{1}{(nd)^{\frac12-\alpha}}\right\}
    $$
    and notice that $\Pr[\mathcal{A}_\alpha]\ge1-2e^{-\sfrac{(nd)^{2\alpha}}8}$ from standard chi-squared tail bounds. Assuming the event $\mathcal{A}_\alpha$ holds, we have
    \begin{align*}
        \left|m_n(\theta)-\widetilde{m}_n(\theta)\right|&\le\sup_{\theta\in[0,r],\sigma^2\in\mathcal{I}}\left|\frac1n\sum_{i=1}^n t_p\left(\frac{\theta Z_i}{\sigma^2}\right)Z_i-\E_{Z\sim\mathcal{N}(0,1)}\left[t_p\left(\frac{\theta Z}{\sigma^2}\right)Z\right]\right|\\
        &\le\sup_{\tilde\theta\in[0,\widetilde{r}]}\left|\mu_n(\tilde\theta)-\mu(\tilde\theta)\right|,
    \end{align*}
    for any $\theta\in[0,r]$, where the functions $\mu_n$ and $\mu$ are defined as
    $$
    \mu_n(\tilde\theta):=\frac1n\sum_{i=1}^n t_p(\tilde\theta Z_i) Z_i\qquad\text{and}\qquad \mu(\tilde\theta):=\E_{Z\sim\mathcal{N}(0,1)}[t_p(\tilde\theta Z)Z]
    $$
    Define the random variable
    $$
    X:=\sup_{\tilde\theta\in[0,\widetilde{r}]}\left|\mu_n(\tilde\theta)-\mu(\tilde\theta)\right|
    $$
    By the symmetrization lemma
    \begin{equation}
    \E\left[\exp(\lambda X)\right]\le\E\left[\exp\left(\sup_{\tilde\theta\in[0,\widetilde{r}]}\frac{2\lambda}n\sum_{i=1}^n\epsilon_i t_p(\tilde\theta Z_i)Z_i\right)\right],\label{eq:symm_one}
    \end{equation}
    where $\{\epsilon_i\}_{i=1}^n$ are i.i.d. Rademacher random variables independent of $\{Z_i\}_{i=1}^n$. Note that
    $$
    c_p(x)\ge2\sqrt{p(1-p)}\quad\Rightarrow\quad t'_p(x)=\frac{4p(1-p)}{[c_p(x)]^2}\le1\qquad\text{(see Appendix~\ref{app:lem_m_prime_proof})},
    $$
    which implies 
    $$
    \left|t_p(\tilde\theta x)-t_p(\tilde\theta' x)\right|\le\left|(\tilde\theta-\tilde\theta')x\right|\qquad\text{for all } x.
    $$
    Thus, for any fixed $x$, the function $\tilde\theta\mapsto t_p(\tilde\theta x)$ is Lipschitz. Hence, by the Ledoux-Talagrand contraction principle
    \begin{align}
        &\E\left[\exp\left(\sup_{\tilde\theta\in[0,\widetilde{r}]}\frac{2\lambda}n\sum_{i=1}^n \epsilon_i t_p(\tilde\theta Z_i)Z_i\right)\right]\le\E\left[\exp\left(\sup_{\tilde\theta\in[0,\widetilde{r}]}\frac{4\lambda}{n}\sum_{i=1}^n\epsilon_i\tilde\theta Z_i^2\right)\right]\notag\\
    &\le\E\left[\exp\left(\frac{4\lambda \widetilde{r}}n\sum_{i=1}^n\epsilon_i Z_i^2\right)\right]\label{eq:symm_two}
    \end{align}
    Note that for a Rademacher r.v. $\epsilon$ and an independent standard normal r.v. $Z$ we have
    \begin{align*}
    \E[\exp(t\epsilon Z^2)]&=\frac12\E[\exp(tZ^2)]+\frac12\E[\exp(-tZ^2)]\\
    &=\frac{1}{2\sqrt{1-2t}}+\frac{1}{2\sqrt{1+2t}}\stackrel{(\bigstar)}{\le}1+2t^2\qquad\text{for }|t|\le\frac14,
    \end{align*}
    where the inequality $(\bigstar)$ will be proven later. Thus, for large enough $n$,
    \begin{equation}
    \E\left[\exp\left(\frac{4\lambda\widetilde{r}}{n}\sum_{i=1}^n\epsilon_i Z_i^2\right)\right]\le\left(1+\frac{32\lambda^2\widetilde{r}^2}{n^2}\right)^{n}\le\exp\left(\frac{32\lambda^2\widetilde{r}^2}{n}\right).\label{eq:symm_three}    
    \end{equation}
    From \eqref{eq:symm_one}, \eqref{eq:symm_two}, and \eqref{eq:symm_three}, we have
    $$
    \E[\exp(\lambda X)]\le\exp\left(\frac{32\lambda^2\widetilde{r}^2}{n}\right)
    $$
    Hence, by Chernoff bound,
    \begin{align*}
    \Pr[X\ge a]&\le\inf_{\lambda>0}\E[\exp(\lambda(X-a))]\le\inf_{\lambda>0}\left[\exp\left(\frac{32\lambda^2\widetilde{r}^2}n-\lambda a\right)\right]\\&=\exp\left(-\frac{a^2 n}{128\widetilde{r}^2}\right)=\delta.
    \end{align*}
    Solving the latter equation for $a$, we have (still conditionally on $\mathcal{A}_\alpha$)
    $$
    \Pr\left[X\le8\sqrt{2}\widetilde{r}\sqrt{\frac{\log(1/\delta)}{n}}\right]\ge1-\delta
    $$
    Finally, using $\widetilde{r}\le(1+\sfrac{2\theta_0^2}d) r$ for large $n$ and $\Pr[\mathcal{A}_\alpha]\ge1-2e^{-\sfrac{(nd)^{2\alpha}}8}$, we obtain that
    $$
    \Pr\left[\sup_{\theta\in[0,r]}\left|m_n(\theta)-\widetilde{m}_n(\theta)\right|\le c(\theta_0,d) r\sqrt{\frac{\log(1/\delta)}{n}}\right]\ge1-\delta-e^{-\sfrac{(nd)^{2\alpha}}8},
    $$
    where $c(\theta_0,d):=8\sqrt{2}\left(1+\sfrac{2\theta_0^2}{d}\right)$.
\end{proof}
    
\begin{proof}[Proof of inequality $(\bigstar)$]
Let $f(t):=(1-2t)^{-1/2}+(1+2t)^{-1/2}$ and $g(t):=2(1+2t^2)$. Then set $h(t):=(f(t)/g(t))-1.$ By simple calculus,
		\[h'(t):=\frac{(20t^3-2t^2-2t-1)\sqrt{1-2t}+(20t^3+2t^2-2t+1)\sqrt{1+2t}}{(1+2t)^{3/2}\,(1-2t)^{3/2} \, (8t^4+8t^2+2)}.\]
		On the domain $[0,1/4]$, the roots of the numerator can be found exactly, with the result that $h(t)$ has critical points only at $t=0$ and at $t_0 \approx 0.208...$.  By checking directly, $h(0)=0, h(1/4)<0,$ and $h(t_0)<0$.  We conclude that the absolute maximum value of $h(t)$ on $[0,1/4]$ is $0$. Since $h(t)$ is even, we conclude $h(t) \leq 0$ for all $|t|\leq1/4$. But since $f(t),g(t)$ are nonnegative, this is the same as $f(t)\leq g(t)$, which is the desired inequality.
\end{proof}

\begin{lemma}\label{lem:perturb2}
    Let $\widetilde{m}_n(\theta)$ be the function defined in \eqref{eq:pseudo_em}. Then there exists a universal constant $c$ such that for any fixed $\delta\in(0,1)$, $\alpha\in(0,\sfrac12)$, and $r\in[0,\theta_0]$ we have
    $$
    \Pr\left[\sup_{\theta\in[0,r]}\left|\widetilde{m}_n(\theta)-m(\theta)\right|\le cr\sqrt{\frac{\log(1/\delta)}{nd}}\right]\ge1-\delta
    $$
\end{lemma}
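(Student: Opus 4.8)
The plan is to exploit that $\widetilde{m}_n(\theta)$ and $m(\theta)$ are two instances of a single bivariate function, differing only through the random scaling $U/(nd)$ in the denominator. Writing $g(s,\theta):=\E_{Z\sim\mathcal{N}(0,1)}[t_p(\theta Z/(s-\theta^2/d))Z]$, I observe that $\widetilde{m}_n(\theta)=g(U/(nd),\theta)$ with $U\sim\chi^2_{nd}$, whereas $m(\theta)=g(1,\theta)$. Hence the whole discrepancy stems from replacing the population value $s=1$ by the random value $s=U/(nd)$, and the strategy is to (i) linearize $g$ in $s$, (ii) show the resulting sensitivity scales linearly in $\theta$, and (iii) invoke the concentration of $U/(nd)$ around $1$.

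First I would condition on the chi-square concentration event $\mathcal{A}:=\{|U/(nd)-1|\le\tau\}$ with $\tau:=c\sqrt{\log(1/\delta)/(nd)}$. By the standard Laurent--Massart tail bound for $\chi^2_{nd}$ one has $\Pr[\mathcal{A}]\ge1-\delta$; in the regime $nd\gtrsim\log(1/\delta)$ guaranteed by the overall sample-size assumption $n\succsim\log^{1/(2\alpha)}(1/\delta)$, we have $\tau<1$, so the scale $s$ stays bounded away from $\theta_0^2/d$. Next, for each fixed $\theta\in[0,r]$ I would apply the mean value theorem in $s$: there is $\xi$ between $U/(nd)$ and $1$ with $\widetilde{m}_n(\theta)-m(\theta)=\partial_s g(\xi,\theta)\,(U/(nd)-1)$. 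Differentiating under the integral sign (justified by the boundedness of $t_p'$ together with Gaussian tails) gives $\partial_s g(\xi,\theta)=-\frac{\theta}{(\xi-\theta^2/d)^2}\E_Z[t_p'(\theta Z/(\xi-\theta^2/d))Z^2]$.

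The crucial simplification is the bound $0\le t_p'(x)\le1$ established in Appendix~\ref{app:lem_m_prime_proof} (coming from $c_p(x)\ge2\sqrt{p(1-p)}$), which yields $|\partial_s g(\xi,\theta)|\le\frac{\theta}{(\xi-\theta^2/d)^2}\E_Z[Z^2]=\frac{\theta}{(\xi-\theta^2/d)^2}$. On $\mathcal{A}$ one has $\xi\ge1-\tau$ and $\theta^2/d\le\theta_0^2/d<1$ (the latter from the admissible range of $\theta_0$), so $\xi-\theta^2/d\ge1-\tau-\theta_0^2/d=:a_0>0$ for $n$ large enough. Therefore $|\widetilde{m}_n(\theta)-m(\theta)|\le(\theta/a_0^2)\,\tau$ uniformly in $\theta$; taking the supremum over $[0,r]$ replaces $\theta$ by $r$, and substituting $\tau$ yields exactly the claimed $cr\sqrt{\log(1/\delta)/(nd)}$ bound with probability at least $1-\delta$.

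The main obstacle I anticipate is step (ii): controlling $\partial_s g$ uniformly and, in particular, keeping the denominator $(\xi-\theta^2/d)^2$ bounded below by a positive constant across the entire concentration window and all $\theta\in[0,r]$. This is where the restriction $\theta_0<\sqrt{d(2+q-\sqrt{8q+q^2})/2}$ (ensuring $\theta_0^2/d<1$) and the smallness of $\tau$ for large $n$ are both essential; without them the scale $\xi-\theta^2/d$ could approach zero and the sensitivity would blow up. The factor $r$ in the final bound, as opposed to a constant, is precisely the payoff of the linear-in-$\theta$ form of $\partial_s g$, so verifying that scaling carefully is the heart of the argument, while the remaining ingredients (Laurent--Massart concentration and $t_p'\le1$) are standard or already in hand.
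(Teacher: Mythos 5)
Your proposal is correct and follows essentially the same route as the paper's proof: both write $\widetilde{m}_n$ and $m$ as one bivariate function evaluated at the random scale $U/(nd)$ versus $1$, apply the mean value theorem in the scale variable, bound the resulting partial derivative by $\theta/(\xi-\theta^2/d)^2$ using $t_p'\le 1$ (from $c_p(x)\ge 2\sqrt{p(1-p)}$), and finish with chi-square concentration of $U/(nd)$ at scale $\sqrt{\log(1/\delta)/(nd)}$. The only cosmetic difference is that you fix the concentration radius $\tau$ up front (via Laurent--Massart) while the paper solves $2e^{-\epsilon^2 nd/8}=\delta$ for $\epsilon$ at the end; the substance is identical.
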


\begin{proof}
    Let $\mathcal{A}_\epsilon:=\{|U/(nd)-1|\le \epsilon\}$. By standard chi-squared tail bounds, $\Pr[\mathcal{A}_\epsilon]\ge1-2e^{-\epsilon^2 nd/8}$. Define a function
    $$
f(\theta,u):=\E_Z\left[t_p\left(\frac{\theta Z}{u-\theta^2/d}\right)Z\right].
$$
By the Mean Value Theorem, we have
    \begin{align*}
        f(\theta,U/(nd))-f(\theta,1)&=\frac{\partial f}{\partial u}(\theta,\xi)\cdot\left(\frac{U_{nd}}{nd}-1\right)\\
        &=-\E_Z\left[t'_p\left(\frac{\theta Z}{\xi-\theta^2/d}\right)Z^2\right]\cdot\frac{\theta}{(\xi-\theta^2/d)^2}\cdot\left(\frac{U_{nd}}{nd}-1\right),
    \end{align*}
    where $\xi$ is between $U/(nd)$ and $1$. Therefore, for $\theta\in[0,r]$, small enough $\epsilon$ and $\theta_0$, conditional on $\mathcal{A}_\epsilon$, we get
    $$
        |\underbrace{f(\theta,U_{nd}/(nd))}_{\widetilde{m}_n(\theta)}-\underbrace{f(\theta,1)}_{m(\theta)}|\le\frac{\theta}{(1-\epsilon-\theta^2/d)^2}\cdot\epsilon\le (1+2\epsilon+2\theta_0^2/d)r\epsilon.
    $$
    Solving $2e^{-\epsilon^2 nd/8}=\delta$ for $\epsilon$, we have
    $$
    \Pr\left[\sup_{\theta\in[0,r]}\left|\widetilde{m}_n(\theta)-m(\theta)\right|\le cr\sqrt{\frac{\log(1/\delta)}{nd}}\right]\ge1-\delta
    $$
\end{proof}

\subsection{Convergence of the iterates $\hat{\theta}_t$}\label{app:theta_convergence}
\begin{lemma}\label{lem:theta_convergence}
    Suppose that we fit an unbalanced mixture model \eqref{eq:gmm} to $\mathcal{N}(\mathbf{0},\mathbf{I})$. Then for any starting point ${\boldsymbol{\theta}}_0$ such that $\|\boldsymbol{\theta}_0\|<\sqrt{d\cdot\frac{2+q-\sqrt{8q+q^2}}{2}}$ where $q:=1-\frac{(2p-1)^2}{2}$,  the sequence of EM iterates $\hat{\boldsymbol{\theta}}_{t+1}=M_n(\hat{\boldsymbol{\theta}}_t)$ satisfies
    $$
    \|\hat{\boldsymbol{\theta}}_T\|\le \frac{c\|\boldsymbol{\theta}_0\|}{1-\rho}\sqrt{\frac{\log(1/\delta)}{n}}
    $$
    for $T\ge\log_{1/\rho}\left((1-\rho)\sqrt{\frac{n}{\log(1/\delta)}}\right)$ with probability at least $1-\delta$.
\end{lemma}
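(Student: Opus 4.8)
The plan is to collapse the vector recursion $\hat{\bth}_{t+1}=M_n(\hat{\bth}_t)$ onto the scalar sequence $\hat\theta_t:=\|\hat{\bth}_t\|$ and run the standard contraction-plus-perturbation argument of \citet{DBLP:journals/corr/BalakrishnanWY14}. The backbone is a one-step inequality $\hat\theta_{t+1}\le\rho\,\hat\theta_t+\epsilon$, in which $\rho\in(0,1)$ is the population contraction factor supplied by Lemma~\ref{lem:properties} part~\ref{lem:M_bounds} and $\epsilon$ is a fixed statistical error of order $\theta_0\sqrt{\log(1/\delta)/n}$.

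\textbf{Step 1 (one-step inequality).} Writing $\hat\theta_{t+1}=\|M_n(\hat{\bth}_t)\|$ and applying the triangle inequality, $\|M_n(\hat{\bth}_t)\|\le\|M(\hat{\bth}_t)\|+\|M_n(\hat{\bth}_t)-M(\hat{\bth}_t)\|$. For the population term, Lemma~\ref{lem:em_norm} gives $\|M(\hat{\bth}_t)\|=m(\hat\theta_t)$, and as long as $\hat\theta_t\le\theta_0$, Lemma~\ref{lem:properties} part~\ref{lem:M_bounds} bounds this by $\rho\,\hat\theta_t$. For the statistical term I would invoke the perturbation bound \eqref{eq:perturb} a single time at radius $r=\theta_0$: on one event $\mathcal{E}$ of probability at least $1-\delta$ one has $\sup_{\theta\in[0,\theta_0]}|m_n(\theta)-m(\theta)|\le\epsilon:=c\theta_0\sqrt{\log(1/\delta)/n}$ uniformly, hence for every iterate simultaneously. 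It is here that the hypothesis $n\succsim\log^{1/(2\alpha)}(1/\delta)$ enters, to absorb the auxiliary chi-square tail of Lemma~\ref{lem:perturb}. Using the crude radius $r=\theta_0$ (rather than the localized $r=\hat\theta_t$) is deliberate: it produces a non-vanishing additive error $\epsilon$, which is exactly the noise floor reflected in the final bound.

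\textbf{Step 2 (invariance and unrolling).} Because the contraction estimate is valid only while $\hat\theta_t\le\theta_0$, I would first check by induction that the trajectory never leaves $[0,\theta_0]$ on $\mathcal{E}$: if $\hat\theta_t\le\theta_0$ then $\hat\theta_{t+1}\le\rho\theta_0+\epsilon\le\theta_0$, the last step holding since $\epsilon\le(1-\rho)\theta_0$ once $n$ is large enough. Crucially, the single event $\mathcal{E}$ governs the whole path, so no union bound over $t$ is required. Iterating $\hat\theta_{t+1}\le\rho\hat\theta_t+\epsilon$ then gives the geometric-series bound $\hat\theta_T\le\rho^T\theta_0+\epsilon/(1-\rho)$, and choosing $T\ge\log_{1/\rho}\!\big((1-\rho)\sqrt{n/\log(1/\delta)}\big)$ pushes the transient $\rho^T\theta_0$ below $\epsilon/(1-\rho)$, yielding $\hat\theta_T\le 2\epsilon/(1-\rho)=\tfrac{c\|\bth_0\|}{1-\rho}\sqrt{\log(1/\delta)/n}$.

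\textbf{Main obstacle.} The delicate point is the statistical term in Step~1. Since the sample operator $M_n$ is not radial, $M_n(\hat{\bth}_t)$ carries a component orthogonal to $\hat{\bth}_t$ that the scalar bound \eqref{eq:perturb} does not directly see, as $m_n(\hat\theta_t)$ is only the projection of $M_n(\hat{\bth}_t)$ onto $\hat{\bth}_t/\|\hat{\bth}_t\|$. Controlling the full deviation $\|M_n(\hat{\bth}_t)-M(\hat{\bth}_t)\|$---including this orthogonal part---at the rate $\epsilon$ is what forces the careful symmetrization and Ledoux--Talagrand contraction of Lemma~\ref{lem:perturb}, and is the one place where the convenient reduction ``the dynamics are one-dimensional'' must be argued rather than taken for granted.
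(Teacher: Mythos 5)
Your proposal is correct and follows essentially the same route as the paper's proof in Appendix~\ref{app:theta_convergence}: a single high-probability event on which the uniform perturbation bound \eqref{eq:perturb} holds at radius $\theta_0$ (with $n\succsim\log^{\frac1{2\alpha}}(1/\delta)$ absorbing the auxiliary chi-square tail), the population contraction $m(\theta)\le\rho\theta$ from Lemma~\ref{lem:properties}, the one-step recursion $\hat\theta_{t+1}\le\rho\hat\theta_t+c\theta_0\sqrt{\log(1/\delta)/n}$, and geometric unrolling with the same choice of $T$. Your induction showing the trajectory never leaves $[0,\theta_0]$ is a refinement worth keeping: both the contraction and the perturbation bound are only valid on that interval, and the paper's proof uses this invariance implicitly without stating it.

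The ``main obstacle'' you flag is also real, and it cuts against the paper as much as against you. Since $M_n$ is not radial, $\hat\theta_{t+1}=\|M_n(\hat\bth_t)\|$ is in general strictly larger than the projection $\langle M_n(\hat\bth_t),\hat\bth_t/\|\hat\bth_t\|\rangle$, and it is only this projection that equals $m_n(\hat\theta_t)$. The paper's proof simply writes $\hat\theta_{t+1}=m_n(\hat\theta_t)$, i.e., it takes for granted exactly the one-dimensional reduction you say must be argued. However, your own resolution is incomplete: you defer the control of the full deviation $\|M_n(\hat\bth_t)-M(\hat\bth_t)\|$, orthogonal component included, to Lemma~\ref{lem:perturb}, but that lemma, as stated and proved, bounds only the scalar difference $\sup_{\theta\in[0,r]}|m_n(\theta)-m(\theta)|$ along a fixed direction. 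Actually closing the gap would require a genuinely vector-valued perturbation bound, e.g.\ on $\sup_{\|\bth\|\le\theta_0}\|M_n(\bth)-M(\bth)\|$ (typically at the cost of a dimension-dependent factor, as in \citet{DBLP:journals/corr/BalakrishnanWY14}), or a separate argument that the orthogonal component of the noise also contracts. So your proof and the paper's share the same unresolved step; yours has the merit of naming it rather than silently assuming it away.
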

\begin{proof} Define the event 
$$
\mathcal{A}_\delta:=\left\{\sup_{\theta\in[0,\theta_0]}\left|m_n(\theta)-m(\theta)\right|\le c\theta_0\sqrt{\frac{\log(1/\delta)}{n}}\right\}
$$
and note that by Lemma~\ref{lem:perturb}, $\Pr[\mathcal{A}_\delta]\ge1-\delta$ for $n\succsim\log^{\frac1{2\alpha}}(1/\delta)$. Under this event, and taking into account the contraction bound from Lemma~\ref{lem:M_bounds}, we have
    \begin{align*}
        \hat\theta_{t+1}&=m_n(\hat\theta_t)=m(\hat\theta_t)+m_n(\hat\theta_t)-m(\hat\theta_t)\\
        &\le\rho\hat\theta_t+c\theta_0\sqrt{\frac{\log(1/\delta)}{n}}\\
        &\le\rho^2\hat\theta_{t-1}+\rho c\theta_0\sqrt{\frac{\log(1/\delta)}{n}}+c\theta_0\sqrt{\frac{\log(1/\delta)}{n}}\le\ldots\\
        &\le\rho^{t+1}\theta_0+\left(\sum_{s=0}^{t}\rho^s\right)c\theta_0\sqrt{\frac{\log(1/\delta)}{n}}\\
        &\le\rho^{t+1}\theta_0+\frac{c\theta_0}{1-\rho}\sqrt{\frac{\log(1/\delta)}{n}},
    \end{align*}
    where $\rho:=\frac{1+\theta_0^2/d}{(1-\theta_0^2/d)^2}\cdot\left(1-\frac{(2p-1)^2}{2}\right)\in(0,1)$ as long as  $\theta_0$ satisfies the requirement stated in the theorem. Thus for
    $$
    T\ge\frac{1}{\log(1/\rho)}\log\left\{(1-\rho)\sqrt{\frac{n}{\log(1/\delta)}}\right\}
    $$
    we are guaranteed that the iterate $\hat\theta_T$ satisfies
    $$
    \hat\theta_T\le\frac{c\theta_0}{1-\rho}\sqrt{\frac{\log(1/\delta)}{n}}
    $$
    for $n\succsim\log^{\frac1{2\alpha}}\frac1\delta$ with probability at least $1-\delta$
\end{proof}

\subsection{Lower bound for $m(\theta)$}\label{app:lowerbound}
\begin{lemma}\label{lem:M_LB}
    Let $m(\theta)$ be the function defined in \eqref{eq:m_fun}. Then for $\theta\in\left[0,\left(\sqrt2+\frac1{\sqrt2d}\right)^{-1}\right]$, we have
    $$
    m(\theta)\ge4p(1-p)\left(1-\frac{4\theta^2}{(1-\theta^2/d)^2}\right)\theta.
    $$
\end{lemma}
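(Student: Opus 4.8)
The plan is to reduce the whole statement to a one-dimensional Grönwall-type inequality obtained through two applications of Gaussian integration by parts (Stein's identity). Write $a:=\theta/(1-\theta^2/d)$, so that the claimed bound reads $m(\theta)\ge 4p(1-p)(1-4a^2)\theta$ (note $4a^2=4\theta^2/(1-\theta^2/d)^2$). First I would put $m(\theta)$ into a more convenient form: since a direct computation from its definition gives $t_p'(x)=1-t_p(x)^2$, Stein's identity $\E[Zf(Z)]=\E[f'(Z)]$ applied to $f(z)=t_p(az)$ yields
\[
m(\theta)=\E_{Z\sim\mathcal N(0,1)}\!\left[t_p(aZ)Z\right]=a\,\E_{Z\sim\mathcal N(0,1)}\!\left[1-t_p(aZ)^2\right]=:a\,w(a),
\]
where $w(a):=\E[1-t_p(aZ)^2]\ge0$ and, using $t_p(0)=2p-1$, we have $w(0)=1-(2p-1)^2=4p(1-p)$.

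Next I would reduce the target to a clean inequality for $w$. For $\theta>0$ (the case $\theta=0$ is an equality) I divide by $\theta$ and use $a/\theta=1/(1-\theta^2/d)$, so it suffices to show $w(a)\ge 4p(1-p)(1-4a^2)(1-\theta^2/d)$. Because $0<1-\theta^2/d\le1$ throughout the stated range and $w(a)\ge0$, this in turn follows from the sharper, dimension-free bound $w(a)\ge w(0)(1-4a^2)$: when $1-4a^2\ge0$ the factor $(1-\theta^2/d)$ only shrinks the right-hand side, while when $1-4a^2<0$ that right-hand side is already negative whereas $w(a)\ge0$.

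The core of the argument is a differential inequality for $w$. Differentiating under the expectation and invoking Stein's identity a second time gives $w'(a)=a\,\E[G''(aZ)]$ with $G(x):=1-t_p(x)^2$. A short computation using $t_p'=1-t_p^2$ produces $G''(x)=-2\big(1-t_p(x)^2\big)\big(1-3t_p(x)^2\big)$, and since $1-3t_p^2\le1$ while $1-t_p^2\ge0$ we obtain the pointwise bound $G''(x)\ge -2\big(1-t_p(x)^2\big)=-2G(x)$. Hence $w'(a)\ge-2a\,w(a)$ for $a\ge0$, i.e. $\frac{d}{da}\big(e^{a^2}w(a)\big)\ge0$, so $e^{a^2}w(a)$ is nondecreasing and $w(a)\ge w(0)e^{-a^2}$. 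Finally $e^{-a^2}\ge1-a^2\ge1-4a^2$ delivers $w(a)\ge w(0)(1-4a^2)=4p(1-p)(1-4a^2)$, closing the reduction.

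I expect the only delicate points to be bookkeeping rather than analysis: carefully tracking the distinction between $\theta$ and the rescaled argument $a$ (and checking that the extra factor $1-\theta^2/d\le1$ is harmless), and applying the pointwise estimate $G''\ge-2G$ with the correct sign. The two Stein/integration-by-parts steps and the differentiation under the expectation are routine, because $t_p$ and all its derivatives are uniformly bounded, so dominated convergence applies. It is worth noting that this route in fact proves the stronger bound $w(a)\ge w(0)(1-a^2)$, hence $m(\theta)\ge4p(1-p)(1-a^2)\theta$; the slack in the constant $4$ therefore gives comfortable room and indicates the stated inequality is far from tight.
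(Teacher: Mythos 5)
Your proof is correct, and it takes a genuinely different route from the paper's. Both arguments open the same way—Stein's identity applied once to pull out $t_p'$, with $a:=\theta/(1-\theta^2/d)$—but they diverge immediately after. The paper writes $t_p'(x)=4p(1-p)/c_p(x)^2$ with $c_p(x)=pe^x+(1-p)e^{-x}$, applies Jensen's inequality to get $m(\theta)\ge 4p(1-p)\theta\big/\E\big[c_p^2(aZ)\big]$, computes $\E[c_p^2(aZ)]=2p(1-p)+(p^2+(1-p)^2)e^{2a^2}$ exactly via the Gaussian moment generating function, and then linearizes using $e^x\le 1+2x$ on $[0,1]$—this last step is precisely where the hypothesis $\theta\le(\sqrt2+\tfrac1{\sqrt2 d})^{-1}$ (equivalently $2a^2\le1$) is needed—followed by $\tfrac1{1+x}\ge 1-x$. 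You instead use the logistic-type identity $t_p'=1-t_p^2$, apply Stein a second time to obtain the differential inequality $w'(a)\ge -2a\,w(a)$ for $w(a)=\E[1-t_p(aZ)^2]$, and close with a Gr\"onwall comparison giving $w(a)\ge 4p(1-p)e^{-a^2}$. Your route buys two things the paper's does not: it is entirely free of the range restriction (the bound holds for all $\theta\in[0,\sqrt d)$, the hypothesis being used only trivially through $w\ge0$ and $1-\theta^2/d\le1$), and it yields the strictly stronger conclusion $m(\theta)\ge 4p(1-p)(1-a^2)\theta$, exposing the slack in the constant $4$. What the paper's route buys in exchange is that it is a single-pass computation (one Stein application, one Jensen, one explicit Gaussian integral) with no need to justify differentiation under the expectation or to set up an ODE comparison—though, as you note, those justifications are routine here since $t_p$ and its derivatives are uniformly bounded. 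All your intermediate claims check out: $t_p'=1-t_p^2$, $G''=-2(1-t_p^2)(1-3t_p^2)\ge-2G$, the case split on the sign of $1-4a^2$ in the reduction, and $e^{-a^2}\ge 1-a^2\ge 1-4a^2$.
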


\begin{proof} For $Z\sim\mathcal{N}(0,1)$, by Stein's lemma,
    \begin{align}
        m(\theta)&=\E\left[t_p\left(\frac{\theta Z}{1-\theta^2/d}\right)Z\right]=\frac{1-\theta^2/d}{\theta}\E\left[t_p\left(\frac{\theta Z}{1-\theta^2/d}\right)\frac{\theta Z}{1-\theta^2/d}\right]\notag\\
        &=\frac{(1-\theta^2/d)\theta^2}{\theta(1-\theta^2/d)^2}\E\left[t_p'\left(\frac{\theta Z}{1-\theta^2/d}\right)\right]=\frac{\theta}{1-\theta^2/d}\E\left[\frac{4p(1-p)}{\left\{c_p\left(\frac{\theta Z}{1-\theta^2/d}\right)\right\}^2}\right]\notag\\
        &\stackrel{(\bigstar)}{\ge}\frac{\theta\cdot4p(1-p)}{(1-\theta^2/d)\cdot\E\left[c^2_p\left(\frac{\theta Z}{1-\theta^2/d}\right)\right]}\ge\frac{\theta\cdot4p(1-p)}{\E\left[c^2_p\left(\frac{\theta Z}{1-\theta^2/d}\right)\right]},\label{eq:m_lb_1}
    \end{align}
    where the step $(\bigstar)$ is due to Jensen's inequality. Notice that
    $$
    c^2_p(x)=(p e^{x}+(1-p)e^{-x})^2=2p(1-p)+p^2e^{2x}+(1-p)^2e^{-2x}.
    $$
    Thus,
    \begin{align}
    &\E\left[c_p^2\left(\frac{\theta Z}{1-\theta^2/d}\right)\right]\notag\\
    &=2p(1-p)+p^2\E\left[\exp\left(\frac{2\theta Z}{1-\theta^2/d}\right)\right]+(1-p)^2\E\left[\exp\left(-\frac{2\theta Z}{1-\theta^2/d}\right)\right]\notag\\
    &=2p(1-p)+(p^2+(1-p)^2)\cdot\exp\left(\frac{2\theta^2}{(1-\theta^2/d)^2}\right)\notag\\
    &\stackrel{(\diamondsuit)}{\le}2p(1-p)+(1-2p(1-p))\left(1+\frac{4\theta^2}{(1-\theta^2/d)^2}\right)\notag\\
    &=1+\frac{4(p^2+(1-p)^2)}{(1-\theta^2/d)^2}\theta^2\le1+\frac{4}{(1-\theta^2/d)^2}\theta^2,\label{eq:cp_ub}
    \end{align}
    where the step $(\diamondsuit)$ is valid for $\theta\in\left[0,\frac{1}{\sqrt{2}+\frac{1}{\sqrt2 d}}\right]$ due to an elementary inequality $e^x<1+2x$ for $x\in[0,1]$.
    From \eqref{eq:m_lb_1} and \eqref{eq:cp_ub}, we have
    $$
    m(\theta)\ge\frac{4p(1-p)}{\left(1+\frac{4}{(1-\theta_0^2/d)}\theta^2\right)}\theta\ge4p(1-p)\left(1-\frac{4\theta^2}{(1-\theta^2/d)^2}\right)\theta\quad\text{for }\theta\in\left[0,\frac{1}{\sqrt2+\frac1{\sqrt2d}}\right],$$     where we used the fact that $\frac1{1+x}\ge1-x$ for $x\ge0$.
\end{proof}

\subsection{Proof of Lemma~\ref{lem:excess}}
\begin{proof}
    It is well-known that 
    \begin{equation}
    \Err[h]-\Err[h^\ast]=\E\big[\left|2\eta(\mathbf{X})-1\right|\mathbb{I}\{h(\mathbf{X})\ne h^\ast(\mathbf{X})\}\big]    \label{eq:excess_0}
    \end{equation}
    (see the proof of Theorem 2.1 in \citet{DBLP:books/sp/DevroyeGL96}). Define the sets
    $$
    \mathcal{A}:=\left\{\mathbf{x}\in\mathbb{R}^d:\,2\eta(\mathbf{x})>1\right\}\quad\text{and}\quad\mathcal{B}:=\left\{\mathbf{x}\in\mathbb{R}^d:\,h(\mathbf{x})\ne h^\ast(\mathbf{x})\right\}.
    $$
    For $\mathbf{x}\in\mathcal{A}\cap\mathcal{B}$, we have
    \begin{align}
    &|2\eta(\mathbf{x})-1|f(\mathbf{x})=[2\eta(\mathbf{x})-1]f(\mathbf{x})=[\eta(\mathbf{x})-(1-\eta(\mathbf{x}))]f(\mathbf{x})\notag\\
    &=\underbrace{\eta(\mathbf{x})f(\mathbf{x})}_{\frac12f_1(\mathbf{x})}-\frac12\tilde{f}_1(\mathbf{x})+\frac12\tilde{f}_0(\mathbf{x})-\underbrace{\big[1-\eta(\mathbf{x})\big]f(\mathbf{x})}_{\frac12f_0(\mathbf{x})}+\frac12\big[\underbrace{\tilde{f}_1(\mathbf{x})-\tilde{f}_0(\mathbf{x})}_{\le\,0}\big]\notag\\
    &\le\frac12\left|f_1(\mathbf{x})-\tilde{f}_1(\mathbf{x})\right|+\frac12\left|f_0(\mathbf{x})-\tilde{f}_0(\mathbf{x})\right|,\label{eq:excess_1}
    \end{align}
    where we used the following facts:
    \begin{align*}
    &\eta(\mathbf{x})f(\mathbf{x})=\Pr[Y=1\mid\mathbf{X}=\mathbf{x}]f(\mathbf{x})=f_1({\mathbf{x}})\Pr[Y=1]=\frac12 f_1(\mathbf{x}),\\
    &\big[1-\eta(\mathbf{x})\big]f(\mathbf{x})=\Pr[Y=0\mid\mathbf{X}=\mathbf{x}]f(\mathbf{x})=f_0({\mathbf{x}})\Pr[Y=0]=\frac12 f_0(\mathbf{x}),\\
    &\mathbf{x}\in\mathcal{A}\cap\mathcal{B} \quad\Leftrightarrow \quad h^\ast(\mathbf{x})=1\,\, \land \,\,h(\mathbf{x})=0\quad\Rightarrow\quad\tilde{f}_1(\mathbf{x})\le\tilde{f}_0(\mathbf{x}).
    \end{align*}
    Similarly, for $\mathbf{x}\in\mathcal{A}^c\cap\mathcal{B}$, we get
    \begin{align}
    &|2\eta(\mathbf{x})-1|f(\mathbf{x})=[1-2\eta(\mathbf{x})]f(\mathbf{x})=[(1-\eta(\mathbf{x}))-\eta(\mathbf{x})]f(\mathbf{x})\notag\\
    &=\underbrace{\big[1-\eta(\mathbf{x})\big]f(\mathbf{x})}_{\frac12f_0(\mathbf{x})}-\frac12\tilde{f}_0(\mathbf{x})+ \frac12\tilde{f}_1(\mathbf{x}) -\underbrace{\eta(\mathbf{x})f(\mathbf{x})}_{\frac12f_1(\mathbf{x})}+\frac12\big[\underbrace{\tilde{f}_0(\mathbf{x})-\tilde{f}_1(\mathbf{x})}_{\le\,0}\big]\notag\\
    &\le\frac12\left|f_0(\mathbf{x})-\tilde{f}_0(\mathbf{x})\right|+\frac12\left|f_1(\mathbf{x})-\tilde{f}_1(\mathbf{x})\right|,\label{eq:excess_2}
    \end{align}
    From \eqref{eq:excess_1} and \eqref{eq:excess_2}, we have
    \begin{align*}
    \Err[h]-\Err[h^\ast]&=\int_{\{\mathbf{x}\in\mathbb{R}^d:\,h(\mathbf{x})\ne h^\ast(\mathbf{x})\}}\left|2\eta(\mathbf{x})-1\right|f(\mathbf{x})d\mathbf{x}\\
    &\le\frac12\int_{\mathbb{R}^d}\left|f_0(\mathbf{x})-\tilde{f}_0(\mathbf{x})\right|d\mathbf{x}+\frac12\int_{\mathbb{R}^d}\left|f_1(\mathbf{x})-\tilde{f}_1(\mathbf{x})\right|d\mathbf{x}.
    \end{align*}
\end{proof}

\end{document}